\newcommand{\Appendix}[1]{the full version for}
\newtheorem{theorem}{Theorem}
\newtheorem{lemma}[theorem]{Lemma}
\newtheorem{condition}{Condition}
\newtheorem{assumption}{Assumption}
\renewcommand{\v}{\mathbf{v}}
\newcommand{\w}{\mathbf{w}}
\newcommand{\x}{\mathbf{x}}
\newcommand{\y}{\mathbf{y}}
\newcommand{\A}{\mathbf{A}}
\newcommand{\B}{\mathbf{B}}
\newcommand{\C}{\mathbf{C}}
\newcommand{\E}{\mathbf{E}}
\newcommand{\I}{\mathbf{I}}
\newcommand{\M}{\mathbf{M}}
\newcommand{\R}{\mathbb{R}}
\newcommand{\T}{\mathbf{T}}
\newcommand{\U}{\mathbf{U}}
\newcommand{\V}{\mathbf{V}}
\newcommand{\W}{\mathbf{W}}
\newcommand{\X}{\mathbf{X}}
\newcommand{\Y}{\mathbf{Y}}
\newcommand{\Z}{\mathbf{Z}}
\newcommand{\rank}{\textup{\textsf{rank}}}
\newcommand{\bLambda}{\mathbf{\Lambda}}
\newcommand{\0}{\mathbf{0}}
\renewcommand{\comment}[1]{}
\newcommand{\cA}{\mathcal{A}}
\newcommand{\cI}{\mathcal{I}}
\newcommand{\cP}{\mathcal{P}}
\newcommand{\cS}{\mathcal{S}}
\newcommand{\cT}{\mathcal{T}}
\newcommand{\cU}{\mathcal{U}}
\newcommand{\cV}{\mathcal{V}}
\newcommand{\cW}{\mathcal{W}}
\newcommand{\cY}{\mathcal{Y}}
\newcommand{\tY}{\widetilde{\Y}}
\newcommand{\svd}{\textup{\textsf{svd}}}
\newcommand{\bSigma}{\bm{\Sigma}}
\newcommand{\diag}{\textup{\textsf{diag}}}
\newcommand{\bbE}{\mathbb{E}}
\DeclareMathOperator*{\argmax}{argmax}
\DeclareMathOperator*{\argmin}{argmin}
\title{Deep Neural Networks with Multi-Branch Architectures Are\\ Less Non-Convex}
\author{
Hongyang Zhang \\ Carnegie Mellon University \\ \small{hongyanz@cs.cmu.edu} \and
Junru Shao  \\ Carnegie Mellon University \\ \small{junrus@cs.cmu.edu} \and
Ruslan Salakhutdinov  \\ Carnegie Mellon University \\ \small{rsalakhu@cs.cmu.edu}
}
\date{}
\begin{document}
\maketitle

\begin{abstract}
Several recently proposed architectures of neural networks such as ResNeXt, Inception, Xception, SqueezeNet and Wide ResNet are based on the designing idea of having multiple branches and have demonstrated improved performance in many applications. We show that one cause for such success is due to the fact that the multi-branch architecture is less non-convex in terms of duality gap. The duality gap measures the degree of intrinsic non-convexity of an optimization problem: smaller gap in relative value implies lower degree of intrinsic non-convexity. The challenge is to quantitatively measure the duality gap of highly non-convex problems such as deep neural networks. In this work, we provide strong guarantees of this quantity for two classes of network architectures. For the neural networks with \emph{arbitrary activation functions}, multi-branch architecture and a variant of hinge loss, we show that the duality gap of both population and empirical risks shrinks to zero as the number of branches increases. This result sheds light on better understanding the power of over-parametrization where increasing the network width tends to make the loss surface less non-convex. For the neural networks with linear activation function and $\ell_2$ loss, we show that the duality gap of empirical risk is zero. Our two results work for \emph{arbitrary depths} and \emph{adversarial data}, while the analytical techniques might be of independent interest to non-convex optimization more broadly. Experiments on both synthetic and real-world datasets validate  our results.
\end{abstract}

\section{Introduction}
Deep neural networks are a central object of study in machine learning, computer vision, and many other domains.
They have substantially improved over conventional learning algorithms in many areas, including speech recognition, object detection, and natural language processing~\cite{Goodfellow-et-al-2016}. The focus of this work is to investigate the duality gap of deep neural networks. The duality gap is the discrepancy between the optimal values of primal and dual problems. While it has been well understood for convex optimization, little is known for non-convex problems. 
A smaller duality gap in relative value typically implies that the problem itself is less non-convex, and thus is easier to optimize.\footnote{Although zero duality gap can be attained for some non-convex optimization problems~\cite{Balcan2017optimal,overton1992sum,beck2006strong}, they are in essence convex problems by considering the dual and bi-dual problems, which are always convex. So these problems are relatively easy to optimize compared with other non-convex ones.} Our results establish that:
\emph{Deep neural networks with multi-branch architecture have small duality gap in relative value.}

Our study is motivated by the computational difficulties of deep neural networks due to its non-convex nature. While many works have witnessed the power of local search algorithms for deep neural networks~\cite{brutzkus2017globally}, these algorithms typically converge to a suboptimal solution in the worst cases according to various empirical observations~\cite{Shalev-Shwartz2017failures,Goodfellow-et-al-2016}. It is reported that for a single-hidden-layer neural network, when the number of hidden units is small, stochastic gradient descent may get easily stuck at the poor local minima~\cite{ge2017learning,safran2017spurious}. Furthermore, there is significant evidence indicating that when the networks are deep enough, bad saddle points do exist~\cite{anandkumar2016efficient} and might be hard to escape~\cite{blum1989training,dasgupta1995complexity,bartlett1999hardness,anandkumar2016efficient}.

Given the computational obstacles, several efforts have been devoted to designing new architectures to alleviate the above issues, including
over-parametrization~\cite{brutzkus2018sgd,soltanolkotabi2017theoretical,du2018power,li2017algorithmic,arora2018optimization,Neyshabur2018towards} and multi-branch architectures~\cite{szegedy2017inception,chollet2016xception,xie2017aggregated,iandola2016squeezenet,veit2016residual}. Empirically, increasing the number of hidden units of a single-hidden-layer network encourages the first-order methods to converge to a global solution, which probably supports the folklore that the loss surface of a wider network looks more ``convex'' (see Figure \ref{figure: experiment A}). Furthermore, several recently proposed architectures, including ResNeXt~\cite{xie2017aggregated}, Inception~\cite{szegedy2017inception}, Xception~\cite{chollet2016xception}, SqueezeNet~\cite{iandola2016squeezenet} and Wide ResNet~\cite{zagoruyko2016wide} are based on having multiple branches and have demonstrated substantial improvement over many of the existing models in many applications. In this work, we show that one cause for such success is due to the fact that the loss of multi-branch network is less non-convex in terms of duality gap.  

\begin{figure}
  \begin{subfigure}{0.24\textwidth}
    \includegraphics[width=\textwidth]{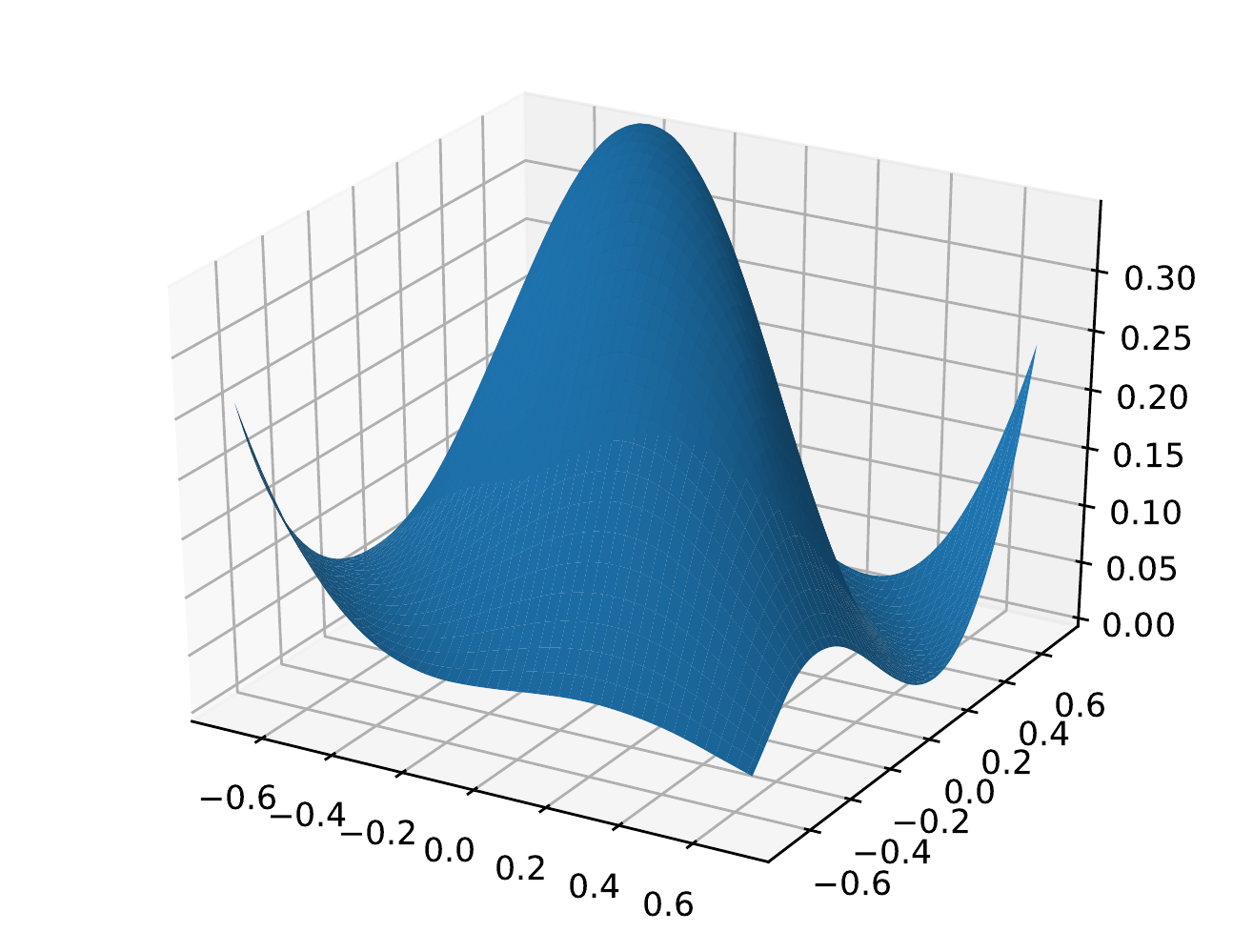}
    \caption{$I=10$.}
  \end{subfigure}
  \begin{subfigure}{0.24\textwidth}
    \includegraphics[width=\textwidth]{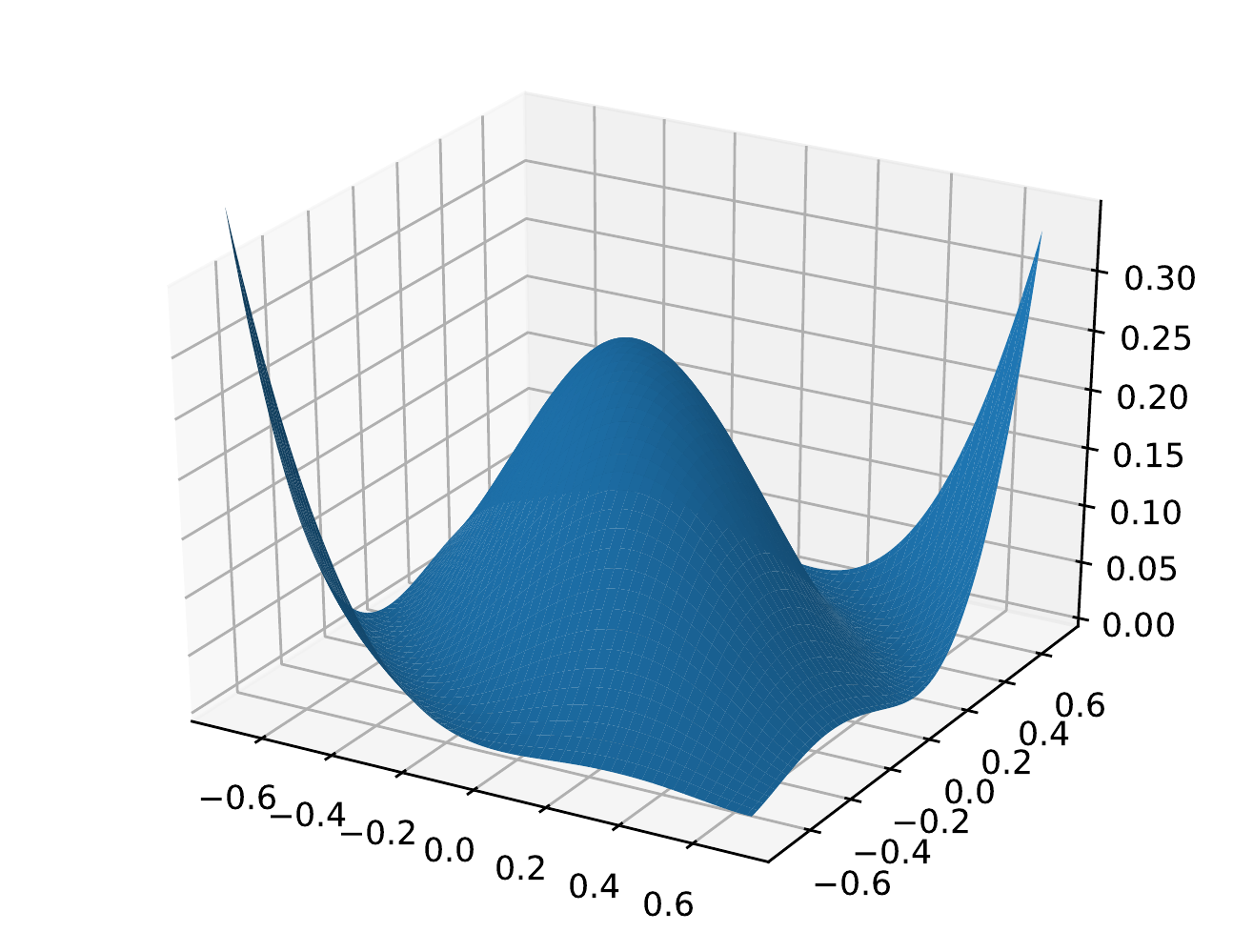}
    \caption{$I=30$.}
  \end{subfigure}
  \begin{subfigure}{0.24\textwidth}
    \includegraphics[width=\textwidth]{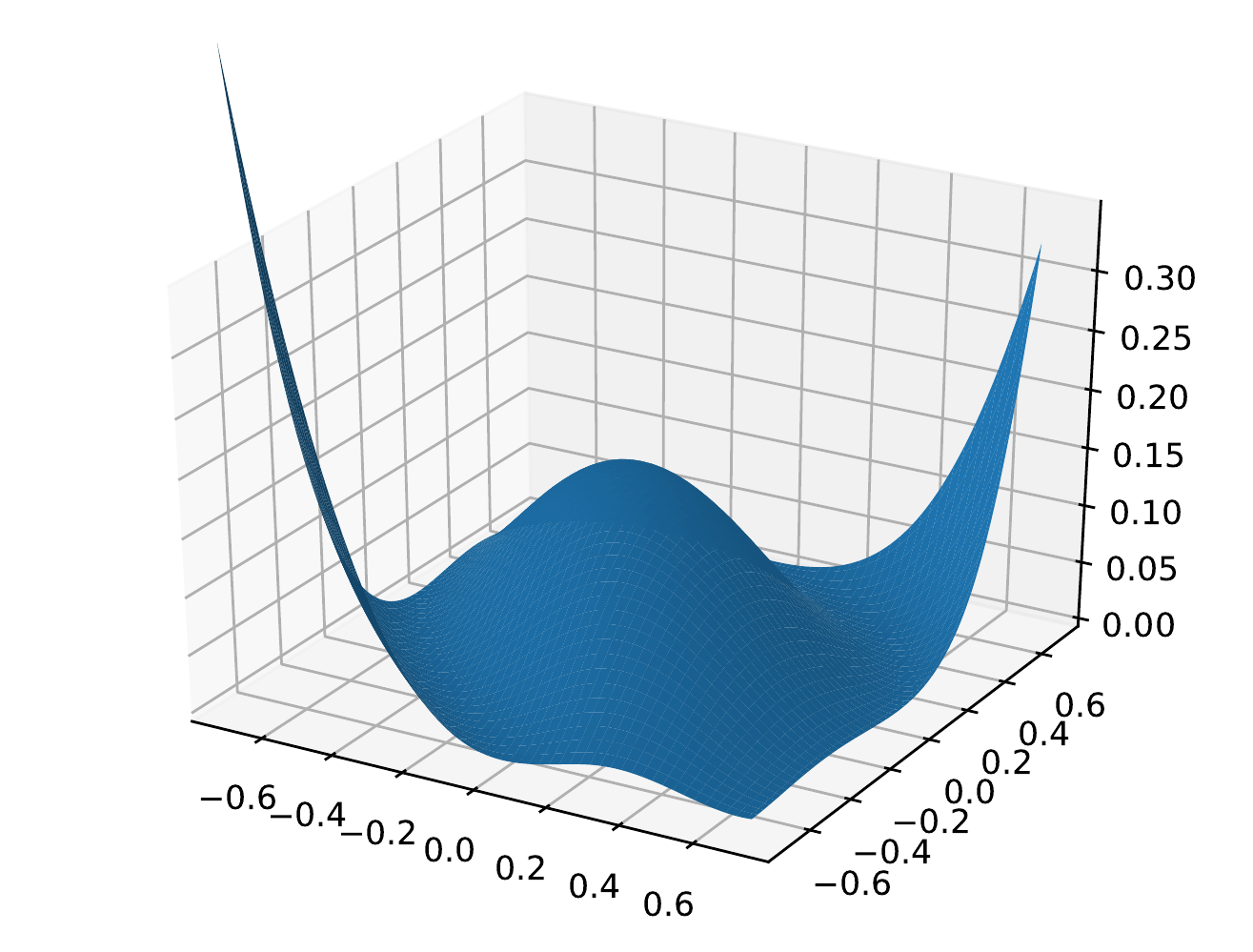}
    \caption{$I=70$.}
  \end{subfigure}
  \begin{subfigure}{0.24\textwidth}
    \includegraphics[width=\textwidth]{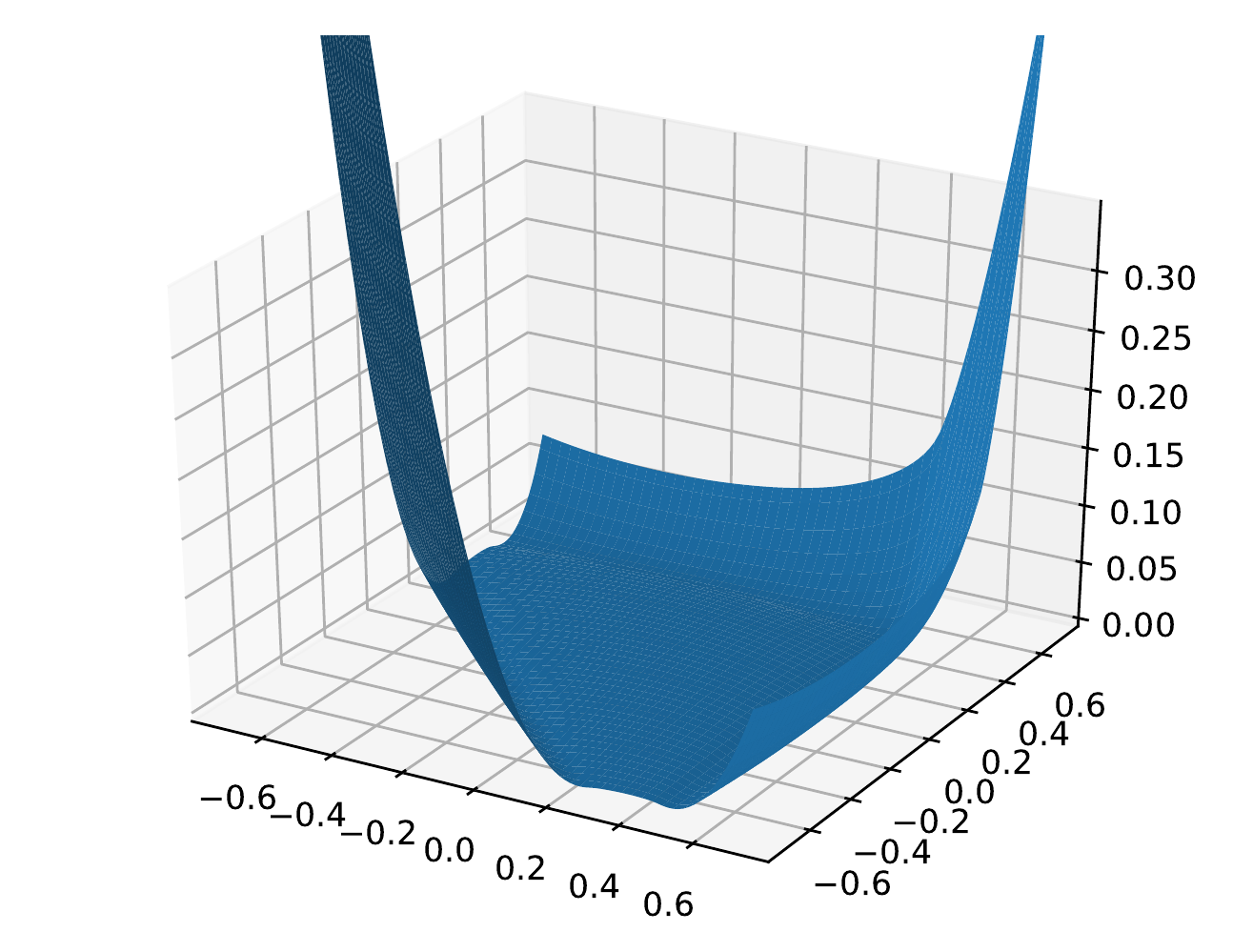}
    \caption{$I=1,000$.}
  \end{subfigure}
  \caption{\small The loss surface of one-hidden-layer ReLU network projected onto a 2-d plane, which is spanned by three points to which the SGD algorithm converges according to three different initialization seeds. It shows that as the number of hidden neurons $I$ increases, the landscape becomes less non-convex.}
  \label{figure: experiment A}
\end{figure}

\medskip
\noindent{\textbf{Our Contributions.}} This paper provides both theoretical and experimental results for the population and empirical risks of deep neural networks by estimating the duality gap.

First, we study the duality gap of deep neural networks with \emph{arbitrary} activation functions, \emph{adversarial} data distribution, and multi-branch architecture (see Theorem \ref{theorem: deep non-linear neural networks}). The multi-branch architecture is general, which includes the classic one-hidden-layer architecture as a special case (see Figure \ref{figure: architecture}). By Shapley-Folkman lemma, we show that the duality gap of both population and empirical risks shrinks to zero as the number of branches increases. Our result provides better understanding of various state-of-the-art architectures such as ResNeXt, Inception, Xception, SqueezeNet, and Wide ResNet.

Second, we prove that the strong duality (a.k.a. zero duality gap) holds for the empirical risk of deep linear neural networks (see Theorem \ref{theorem: local=global}). To this end, we develop multiple new proof techniques, including \emph{reduction to low-rank approximation} and \emph{construction of dual certificate} (see Section \ref{section: Our Techniques and Proof Sketches}).

Finally, we empirically study the loss surface of multi-branch neural networks. Our experiments verify our theoretical findings.


\comment{
\begin{figure}
\centering
\includegraphics[width=1\textwidth]{bridge.pdf}
\caption{\small The duality gap of deep neural network implied in this paper, where the blue curve is the landscape of deep neural networks while the red dashed curve represents the landscape of dual problems. \textbf{(a).} Strong duality holds for deep linear neural networks (see Theorem \ref{theorem: local=global}). \textbf{(b).} The duality gap can be bounded for deep non-linear neural networks with parallel-path architectures (see Theorem \ref{theorem: deep non-linear neural networks}).}
\label{figure: strong duality of non-convex problems}
\end{figure}
}

\medskip
\noindent{\textbf{Notation.}} We will use bold capital letter to represent matrix and lower-case letter to represent scalar. Specifically, let $\I$ be the identity matrix and denote by $\0$ the all-zero matrix. Let $\{\W_i\in\R^{d_i\times d_{i-1}}:i=1,2,...,H\}$ be a set of network parameters, each of which represents the connection weights between the $i$-th and $(i+1)$-th layers of neural network. We use $\W_{:,t}\in\R^{n_1\times 1}$ to indicate the $t$-th column of $\W$. We will use $\sigma_i(\W)$ to represent the $i$-th largest singular value of matrix $\W$. Given skinny SVD $\U\bSigma\V^T$ of matrix $\W$, we denote by $\svd_r(\W)=\U_{:,1:r}\bSigma_{1:r,1:r}\V_{:,1:r}^T$ the truncated SVD of $\W$ to the first $r$ singular values. For matrix norms, denote by $\|\W\|_{\cS_H}=\left(\sum_i\sigma_i^H(\W)\right)^{1/H}$ the matrix Schatten-$H$ norm. Nuclear norm and Frobenius norm are special cases of Schatten-$H$ norm: $\|\W\|_*=\|\W\|_{\cS_1}$ and $\|\W\|_F=\|\W\|_{\cS_2}$. We use $\|\W\|$ to represent the matrix operator norm, i.e., $\|\W\|=\sigma_1(\W)$, and denote by $\rank(\W)$ the rank of matrix $\W$. Denote by $\mathsf{Row}(\W)$ the span of rows of $\W$. Let $\W^\dag$ be the Moore-Penrose pseudo-inverse of $\W$.

 For convex matrix function $K(\cdot)$, we denote by $K^*(\bLambda)=\max_\M\langle\bLambda,\M\rangle-K(\M)$ the conjugate function of $K(\cdot)$ and $\partial K(\cdot)$ the sub-differential. We use $\diag(\sigma_1,...,\sigma_r)$ to represent a $r\times r$ diagonal matrix with diagonal entries $\sigma_1,...,\sigma_r$. Let $d_{\min}=\min\{d_i:i=1,2,...,H-1\}$, and $[I]=\{1,2,...,I\}$. For any two matrices $\A$ and $\B$ of matching dimensions, we denote by $[\A,\B]$ the concatenation of $\A$ and $\B$ along the row and $[\A;\B]$ the concatenation of two matrices along the column.

\section{Duality Gap of Multi-Branch Neural Networks}

\label{section: Approximate Strong Duality of Deep Non-Linear Networks}

We first study the duality gap of neural networks in a classification setting. We show that the wider the network is, the smaller the duality gap becomes.

\medskip
\noindent{\textbf{Network Setup.}}
The output of our network follows from a multi-branch architecture (see Figure \ref{figure: architecture}):
\begin{equation*}
f(\w;\x)=\frac{1}{I}\sum_{i=1}^I f_i(\w_{(i)};\x),\quad \w_{(i)}\in\cW_i, \quad (\cW_i \text{ is convex set})
\end{equation*}
where $\w$ is the concatenation of all network parameters $\{\w_{(i)}\}_{i=1}^I$, $\x\in\R^{d_0}$ is the input instance, $\{\cW_i\}_{i=1}^I$ is the parameter space, and $f_i(\w_{(i)};\cdot)$ represents an $\R^{d_0}\rightarrow\R$ continuous mapping by a sub-network which is allowed to have \emph{arbitrary} architecture such as convolutional and recurrent neural networks. As an example, $f_i(\w_{(i)};\cdot)$ can be in the form of a $H_i$-layer feed-forward sub-network:
\begin{equation*}
f_i(\w_{(i)};\x)=\w_i^\top\psi_{H_i}(\W_{H_i}^{(i)}...\psi_1(\W_1^{(i)}\x))\in\R,\ \w_{(i)}=[\w_i;\mathsf{vec}(\W_1^{(i)});...;\mathsf{vec}(\W_{H_i}^{(i)})]\in\R^{p_i}.
\end{equation*}
Hereby, the functions $\psi_k(\cdot),k=1,2,...,H_i$ are allowed to encode \emph{arbitrary} form of continuous element-wise non-linearity (and linearity) after each matrix multiplication, such as sigmoid, rectification, convolution, while the number of layers $H_i$ in each sub-network can be \emph{arbitrary} as well. When $H_i=1$ and $d_{H_i}=1$, i.e., each sub-network in Figure \ref{figure: architecture} represents one hidden unit, the architecture $f(\w;\x)$ reduces to a  one-hidden-layer network. We apply the so-called \emph{$\tau$-hinge loss}~\cite{awasthi2016learning,balcan2017sample} on the top of network output for label $y\in\{-1,+1\}$:
\begin{equation}
\label{equ: hinge loss}
\ell_\tau(\w;\x,y):=\max\left(0,1-\frac{y\cdot f(\w;\x)}{\tau}\right),\quad \tau>0.
\end{equation}
The $\tau$-hinge loss has been widely applied in active learning of classifiers and margin based learning~\cite{awasthi2016learning,balcan2017sample}.
When $\tau=1$, it reduces to the classic hinge loss~\cite{liang2018understanding,brutzkus2018sgd,laurent2017multilinear}.

We make the following assumption on the margin parameter $\tau$, which states that the parameter $\tau$ is sufficiently large.
\begin{assumption}[Parameter $\tau$]
\label{assumption: bounded margin}
For sample $(\x,y)$ drawn from distribution $\cP$, we have $\tau>y\cdot f(\w;\x)$ for all $\w\in\cW_1\times \cW_2\times...\times \cW_I$ with probability measure $1$.
\end{assumption}

\begin{wrapfigure}{R}{7cm}
\vspace{-0.3cm}
\includegraphics[width=7cm]{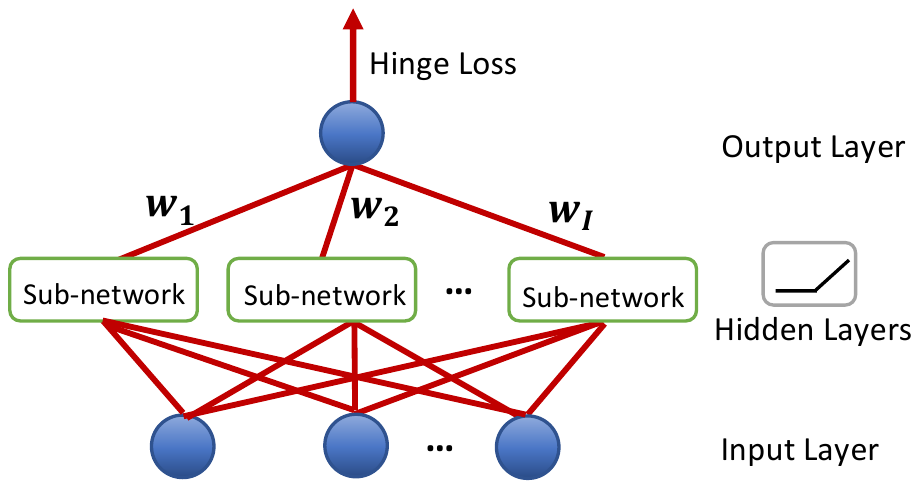}
\caption{\small Multi-branch architecture, where the sub-networks are allowed to have arbitrary architectures, depths, and continuous activation functions. In the extreme case when the sub-network is chosen to have a single  neuron, the multi-branch architecture reduces to a single-hidden-layer neural network.}
\label{figure: architecture}
\vspace{-0.5cm}
\end{wrapfigure}

We further empirically observe that using smaller values of the parameter $\tau$ and
other loss functions support our theoretical result as well (see experiments in Section \ref{section: experiments}). It is an interesting open question to extend our theory to more general losses in the future.

To study how close these generic neural network architectures approach the family of convex functions, we analyze the duality gap of minimizing the risk w.r.t. the loss \eqref{equ: hinge loss} with an extra regularization constraint. The normalized duality gap is a measure of intrinsic non-convexity of a given function~\cite{bertsekas1982estimates}: the gap is zero when the given function itself is convex, and is large when the loss surface is far from the convexity intrinsically. Typically, the closer the network approaches to the family of convex functions, the easier we can optimize the network.

\comment{
or equivalently, the \emph{non-linear margin} $y\cdot f(\w;\x)$ is small (in analogy with the \emph{linear margin} $y\cdot \langle\w,\x\rangle$ for the linear classification problems). That is, the two classes of mapped instances through the sub-networks \emph{cannot} be separated by a large margin. The assumption naturally captures the worst case when the two classes of mapped instances are tangled heavily.}

\medskip
\noindent{\textbf{Multi-Branch Architecture.}} Our analysis of multi-branch neural networks is built upon tools from non-convex geometric analysis --- Shapley–Folkman lemma. Basically, the Shapley–Folkman lemma states that the sum of constrained non-convex functions is close to being convex. A neural network is an ideal target to apply this lemma to: the width of network is associated with the number of summand functions. So intuitively, the wider the neural network is, the smaller the duality gap will be. In particular, we study the following non-convex problem concerning the population risk:
\begin{equation}
\label{equ: problem (P)}
\min_{\w\in\cW_1\times...\times\cW_I} \bbE_{(\x,y)\sim \cP}[\ell_\tau(\w;\x,y)],\ \ \mbox{s.t.}\ \ \frac{1}{I}\sum_{i=1}^I h_i(\w_{(i)})\le K,
\end{equation}
where $h_i(\cdot),i\in[I]$ are convex regularization functions, e.g., the weight decay, and $K$ can be arbitrary such that the problem is feasible.
Correspondingly, the dual problem of problem \eqref{equ: problem (P)} is a one-dimensional convex optimization problem:\footnote{Although problem \eqref{equ: problem (D)} is convex, it does not necessarily mean the problem can be solved easily. This is because computing $\mathcal{Q}(\lambda)$ is a hard problem. So rather than trying to solve the convex dual problem, our goal is to study the duality gap in order to understand the degree of non-convexity of the problem.}
\begin{equation}
\label{equ: problem (D)}
\begin{split}
\max_{\lambda\ge 0} \mathcal{Q}(\lambda)-\lambda K,\quad \textup{where } \mathcal{Q}(\lambda):=\inf_{\w\in\cW_1\times ...\times \cW_I}\bbE_{(\x,y)\sim \cP} [\ell_\tau(\w;\x,y)]+\frac{\lambda}{I} \sum_{i=1}^I h_i(\w_{(i)}).
\end{split}
\end{equation}

For $\widetilde\w\in\cW_i$, denote by
\begin{equation*}
\widetilde{f}_i(\widetilde\w):=\hspace{-0.2cm}\inf_{a^j,\w_{(i)}^j\in\cW_i}\left\{\sum_{j=1}^{p_i+1}a^j \bbE_{(\x,y)\sim \cP} \left(1-\frac{y\cdot f_i(\w_{(i)}^j;\x)}{\tau}\right)\hspace{-0.1cm}:\widetilde{\w}=\hspace{-0.2cm}\sum_{j=1}^{p_i+1}a^j\w_{(i)}^j,\sum_{j=1}^{p_i+1}a^j=1,a^j\ge 0\right\}
\end{equation*}
the convex relaxation of function $\bbE_{(\x,y)\sim \cP} [1-y\cdot f_i(\cdot;\x)/\tau]$ on $\cW_j$. For $\widetilde{\w}\in\cW_i$, we also define
\begin{equation*}
\widehat{f}_i(\widetilde{\w}):=\inf_{\w_{(i)}\in\cW_i}\left\{\bbE_{(\x,y)\sim \cP} \left(1-\frac{y\cdot f_i(\w_{(i)};\x)}{\tau}\right):h_i(\w_{(i)})\le h_i(\widetilde{\w})\right\}.
\end{equation*}

\comment{
\begin{figure}
\centering
\includegraphics[scale=0.68]{architecture.pdf}
\caption{\small Comparison of network architectures. \textbf{(a).} ResNet~\cite{he2016deep}. \textbf{(b).} ResNeXt~\cite{xie2017aggregated}. \textbf{(c).} Our network. From left to right, the network architecture encourages higher level of paralleling.}
\label{figure: architectures}
\end{figure}
}

Our main results for multi-branch neural networks are as follows:
\begin{theorem}
\label{theorem: deep non-linear neural networks}
Denote by $\inf(\mathbf{P})$ the minimum of primal problem \eqref{equ: problem (P)} and $\sup(\mathbf{D})$ the maximum of dual problem \eqref{equ: problem (D)}. Let
$\Delta_i:=\sup_{\w\in\cW_i}\left\{\widehat{f}_i(\w)-\widetilde{f}_i(\w)\right\}\ge 0$
and $\Delta_{worst}:=\max_{i\in[I]} \Delta_i$.
Suppose $\cW_i$'s are compact and both $f_i(\w_{(i)};\x)$ and $h_i(\w_{(i)})$ are continuous w.r.t. $\w_{(i)}$. If there exists at least one feasible solution of problem \textup{(\textbf{P})}, then under Assumption \ref{assumption: bounded margin} the duality gap w.r.t. problems \eqref{equ: problem (P)} and \eqref{equ: problem (D)} can be bounded by
\begin{equation*}
0\le \frac{\inf(\mathbf{P})-\sup(\mathbf{D})}{\Delta_{worst}}\le \frac{2}{I}.
\end{equation*}
\end{theorem}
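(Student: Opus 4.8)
The plan is to use Assumption~\ref{assumption: bounded margin} to collapse the hinge loss into a \emph{separable} objective over the $I$ branches, and then to invoke the Shapley--Folkman lemma, whose effect is that convexifying a sum of constrained non-convex functions subject to a single scalar constraint costs at most two branches' worth of non-convexity --- which, after the $\tfrac1I$ normalization, is exactly $\tfrac2I\Delta_{worst}$. \emph{Step 1 (linearization).} Under Assumption~\ref{assumption: bounded margin}, for $\cP$-a.e.\ $(\x,y)$ and every $\w\in\cW_1\times\cdots\times\cW_I$ we have $1-y f(\w;\x)/\tau>0$, so $\ell_\tau(\w;\x,y)=1-y f(\w;\x)/\tau$. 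Substituting $f(\w;\x)=\tfrac1I\sum_i f_i(\w_{(i)};\x)$ and using linearity of expectation,
\[
\bbE_{(\x,y)\sim\cP}[\ell_\tau(\w;\x,y)]=\frac1I\sum_{i=1}^I g_i(\w_{(i)}),\qquad g_i(\w_{(i)}):=\bbE_{(\x,y)\sim\cP}\Big[1-\tfrac{y\, f_i(\w_{(i)};\x)}{\tau}\Big],
\]
so (\textbf{P}) becomes $\min\{\sum_i g_i(\w_{(i)}):\sum_i h_i(\w_{(i)})\le IK,\ \w_{(i)}\in\cW_i\}$ and, expanding $\cQ(\lambda)$, $I\sup(\mathbf D)=\max_{\lambda\ge0}\big[\sum_i\inf_{\w_{(i)}\in\cW_i}\!\big(g_i(\w_{(i)})+\lambda h_i(\w_{(i)})\big)-\lambda IK\big]$. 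Compactness of $\cW_i$ and continuity of $g_i,h_i$ make all these infima attained.

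\emph{Step 2 (convexified program).} Put $S_i:=\{(g_i(\w_{(i)}),h_i(\w_{(i)})):\w_{(i)}\in\cW_i\}\subseteq\R^2$, a compact set with compact convex hull, and consider the convex program $\min\{\sum_i r_i:(r_i,t_i)\in\operatorname{conv}(S_i),\ \sum_i t_i\le IK\}$. A linear form has the same minimum over $S_i$ and over $\operatorname{conv}(S_i)$, so the Lagrangian dual of this convex program is precisely $I\sup(\mathbf D)$; since (\textbf{P}) is feasible the convexified program is feasible with compact feasible set, and strong duality holds, so it attains the value $I\sup(\mathbf D)$ at some $(\bar r_i,\bar t_i)\in\operatorname{conv}(S_i)$ with $\sum_i\bar t_i\le IK$.

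\emph{Step 3 (Shapley--Folkman rounding).} The vector $\sum_i(\bar r_i,\bar t_i)$ lies in $\sum_i\operatorname{conv}(S_i)\subseteq\R^2$, so Shapley--Folkman rewrites it as $\sum_i(\hat r_i,\hat t_i)$ with $(\hat r_i,\hat t_i)\in\operatorname{conv}(S_i)$ and $(\hat r_i,\hat t_i)\in S_i$ except on a set $B$ with $|B|\le 2$. For $i\notin B$ take $\w_{(i)}\in\cW_i$ realizing $(\hat r_i,\hat t_i)=(g_i(\w_{(i)}),h_i(\w_{(i)}))$. For $i\in B$ write $(\hat r_i,\hat t_i)=\sum_j a^j(g_i(\w^j),h_i(\w^j))$ with $\w^j\in\cW_i$, $a^j\ge0$, $\sum_j a^j=1$, and set $\bar\w:=\sum_j a^j\w^j\in\cW_i$; convexity of $h_i$ gives $h_i(\bar\w)\le\sum_j a^j h_i(\w^j)=\hat t_i$, and convexity of $\widetilde f_i$ with $\widetilde f_i\le g_i$ gives $\widetilde f_i(\bar\w)\le\sum_j a^j\widetilde f_i(\w^j)\le\sum_j a^j g_i(\w^j)=\hat r_i$. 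Then pick $\w_{(i)}\in\cW_i$ attaining $\widehat f_i(\bar\w)$: by definition $h_i(\w_{(i)})\le h_i(\bar\w)\le\hat t_i$ and $g_i(\w_{(i)})=\widehat f_i(\bar\w)=\widetilde f_i(\bar\w)+\big(\widehat f_i(\bar\w)-\widetilde f_i(\bar\w)\big)\le\hat r_i+\Delta_i\le\hat r_i+\Delta_{worst}$.

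\emph{Step 4 (conclusion).} The assembled $\w=(\w_{(1)},\dots,\w_{(I)})$ is feasible for (\textbf{P}) since $\sum_i h_i(\w_{(i)})\le\sum_i\hat t_i=\sum_i\bar t_i\le IK$, and $\sum_i g_i(\w_{(i)})\le\sum_i\hat r_i+|B|\Delta_{worst}\le I\sup(\mathbf D)+2\Delta_{worst}$ using $\sum_i\hat r_i=\sum_i\bar r_i=I\sup(\mathbf D)$. Dividing by $I$ gives $\inf(\mathbf P)\le\sup(\mathbf D)+\tfrac2I\Delta_{worst}$, and with weak duality $\inf(\mathbf P)\ge\sup(\mathbf D)$ together with $\Delta_i\ge0$ (evaluate $\widehat f_i-\widetilde f_i$ at a global minimizer of $g_i$ over $\cW_i$, where $\widehat f_i$ equals the minimum while $\widetilde f_i$ is at most the minimum) this is the claim. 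The main obstacle is Step~2 --- making the identity $I\sup(\mathbf D)=\min(\text{convexified program})$ rigorous, i.e.\ establishing strong duality for the convexified problem from the lone feasibility hypothesis (the value function is continuous at the budget $IK$ because the constraints $\w_{(i)}\in\cW_i$ are compact, hence it is subdifferentiable there); the remaining steps are a careful but routine blend of Carath\'eodory/Shapley--Folkman with the definitions of $\widetilde f_i$ and $\widehat f_i$.
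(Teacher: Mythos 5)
Your proof is correct and follows essentially the same route as the paper's: under Assumption~\ref{assumption: bounded margin} the loss separates across branches, the dual optimum is identified with the minimum of the convexified program over the per-branch sets $S_i\subseteq\R^2$ (the paper's $\cY_i$, up to the $1/I$ normalization, citing a ``standard duality argument''), Shapley--Folkman leaves at most two branches in the convex hulls, and the definitions of $\widehat f_i,\widetilde f_i$ convert the Carath\'eodory representations of those two branches into feasible weights at a cost of $\Delta_i$ each. The only quibble is in the step you yourself flag: continuity of the value function at the budget does not by itself give subdifferentiability (the value function can have infinite slope at an endpoint of its domain); the clean justification is that the value function is convex and lower semicontinuous by compactness, so it coincides with its biconjugate and the convexified program has zero duality gap even if the dual supremum is not attained --- which is all that is needed.
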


Note that $\Delta_i$ measures the divergence between the function value of $\widehat f_i$ and its convex relaxation $\widetilde f_i$. The constant $\Delta_{worst}$ is the maximal divergence among all sub-networks, which grows slowly with the increase of $I$. This is because $\Delta_{worst}$ only measures the divergence of \emph{one branch}. The normalized duality gap $(\inf(\mathbf{P})-\sup(\mathbf{D}))/\Delta_{worst}$ has been widely used before to measure the degree of non-convexity of optimization problems~\cite{bertsekas1982estimates,udell2016bounding,bi2016refined,fang2015blessing,d2017approximate}. Such a normalization avoids trivialities in characterizing the degree of non-convexity: scaling the objective function by any constant does not change the value of normalized duality gap.
Even though Theorem~\ref{theorem: deep non-linear neural networks} is in the form of population risk, the conclusion still holds for the \emph{empirical loss} as well. This can be achieved by setting the marginal distribution $\cP_{\x}$ as the uniform distribution on a finite set and $\cP_y$ as the corresponding labels uniformly distributed on the same finite set.

\medskip
\noindent{\textbf{Inspiration for Architecture Designs.}} Theorem \ref{theorem: deep non-linear neural networks} shows that the loss surface of deep network is less non-convex when the width $I$ is large; when $I\rightarrow +\infty$, surprisingly, deep network is as easy as a convex optimization. An intuitive explanation is that the large number of randomly initialized hidden units represent all possible features. Thus the optimization problem involves just training the top layer of the network, which is convex.
Our result encourages a class of network architectures with multiple branches and supports some of the most successful architectures in practice, such as Inception~\cite{szegedy2017inception}, Xception~\cite{chollet2016xception}, ResNeXt~\cite{xie2017aggregated}, SqueezeNet~\cite{iandola2016squeezenet}, Wide ResNet~\cite{zagoruyko2016wide}, Shake-Shake regularization~\cite{gastaldi2017shake} --- all of which benefit from the split-transform-merge behaviour as shown in Figure~\ref{figure: architecture}. The theory sheds light on an explanation of strong performance of these architectures.

\medskip
\noindent{\textbf{Related Works.}}
While many efforts have been devoted to studying the local minima or saddle points of deep neural networks~\cite{liang2018adding,zhou2018empirical,soudry2016no,kawaguchi2017deep,xie2017diverse,rene2017math}, little is known about the duality gap of deep networks. In particular, Choromanska et al.~\cite{choromanska2015open,choromanska2015loss} showed that the number of poor local minima
cannot be too large. Kawaguchi~\cite{kawaguchi2016deep} improved over the results of \cite{choromanska2015open,choromanska2015loss} by assuming that the activation functions are independent Bernoulli variables and the input data are drawn from Gaussian distribution. Xie et al.~\cite{xie2017diverse} and Haeffele et al.~\cite{haeffele2015global} studied the local minima of regularized network, but they require either the network is shallow, or the network weights are rank-deficient. Ge et al.~\cite{ge2017learning} showed that every local minimum is globally optimal by modifying the activation function. Zhang et al.~\cite{zhang2016convexified} and Aslan et al.~\cite{aslan2014convex} reduced the non-linear activation to the linear case by kernelization and relaxed the non-convex problem to a convex one. However, no formal guarantee was provided for the tightness of the relaxation. Theorem \ref{theorem: deep non-linear neural networks}, on the other hand, bounds the duality gap of deep neural networks \emph{with mild assumptions}.

Another line of research studies the convexity behaviour of neural networks when the number of hidden neurons goes to the infinity. In particular, Bach~\cite{JMLR:v18:14-546} proved that a single-hidden-layer network is as easy as a convex optimization by using classical non-Euclidean regularization tools. Bengio et al.~\cite{bengio2006convex} showed a similar phenomenon for multi-layer networks with an incremental algorithm. In comparison, Theorem  \ref{theorem: deep non-linear neural networks} not only captures the convexification phenomenon when $I\rightarrow +\infty$, but also goes beyond the result as it characterizes the convergence rate of convexity of neural networks in terms of duality gap. Furthermore, the conclusion in Theorem \ref{theorem: deep non-linear neural networks} holds for the population risk, which was unknown before.

\section{Strong Duality of Linear Neural Networks}

In this section, we show that the duality gap is zero if the activation function is linear.
Deep linear neural network has received significant attention in recent years~\cite{saxe2013exact,kawaguchi2016deep,zhang2016convexified,lu2017depth,baldi1989neural,Goodfellow-et-al-2016,hardt2016identity,baldi2012complex} because of its simple formulation\footnote{Although the expressive power of deep linear neural networks and three-layer linear neural networks are the same, the analysis of landscapes of two models are significantly different, as pointed out by \cite{Goodfellow-et-al-2016,kawaguchi2016deep,lu2017depth}.} and its connection to non-linear neural networks.

\medskip
\noindent{\textbf{Network Setup.}}
We discuss the strong duality of regularized deep linear neural networks of the form
\begin{equation}
\label{equ: regularized PCA}
(\W_1^*,...,\W_H^*)
= \argmin_{\W_1,...,\W_H} \frac{1}{2}\|\Y-\W_H\cdots\W_1\X\|_F^2+\frac{\gamma}{H}\left[ \|\W_1\X\|_{\cS_H}^H + \sum_{i=2}^{H}\|\W_i\|_{\cS_H}^H \right],
\end{equation}
where $\X=[\x_1,...,\x_n]\in\R^{d_0\times n}$ is the given instance matrix, $\Y=[\y_1,...,\y_n]\in\R^{d_H\times n}$ is the given label matrix, and $\W_i\in\R^{d_i\times d_{i-1}},i\in[I]$ represents the weight matrix in each linear layer.
We mention that (a) while the linear operation is simple matrix multiplications in problem \eqref{equ: regularized PCA}, it can be easily extended to other linear operators, e.g., the convolutional operator or the linear operator with the bias term, by properly involving a group of kernels in the variable $\W_i$~\cite{haeffele2015global}. (b) The regularization terms in problem \eqref{equ: regularized PCA} are of common interest, e.g., see \cite{haeffele2015global}. When $H=2$, our regularization terms reduce to $\frac{1}{2}\|\W_i\|_F^2$, which is well known as the weight-decay or Tikhonov regularization. (c) The regularization parameter $\gamma$ is the same for each layer since we have no further information on the preference of layers.

Our analysis leads to the following guarantees for the deep linear neural networks.
\begin{theorem}
\label{theorem: local=global}
Denote by $\tY:=\Y\X^\dag\X\in\R^{d_H\times n}$ and $d_{\min}:=\min\{d_1,...,d_{H-1}\}\le \min\{d_0,d_H,n\}$. Let $0\le\gamma<\sigma_{\min}(\tY)$ and $H\ge 2$, where $\sigma_{\min}(\tY)$ stands for the minimal non-zero singular value of $\tY$.
Then the strong duality holds for deep linear neural network \eqref{equ: regularized PCA}. In other words, the optimum of problem \eqref{equ: regularized PCA} is the same as its convex dual problem
\begin{equation}
\label{equ: dual problem}
\bLambda^*=\argmax_{\mathsf{Row}(\bLambda)\subseteq\mathsf{Row}(\X)} -\frac{1}{2}\|\tY-\bLambda\|_{d_{\min}}^2+\frac{1}{2}\|\Y\|_F^2,\quad\textup{s.t.}\quad \|\bLambda\|\le\gamma,
\end{equation}
where $\|\cdot\|_{d_{\min}}^2=\sum_{i=1}^{d_{\min}}\sigma_i^2(\cdot)$ is a convex function. Moreover, the optimal solutions of primal problem \eqref{equ: regularized PCA} can be obtained from the dual problem \eqref{equ: dual problem} in the following way: let $\U\bSigma\V^T=\textup{\textsf{svd}}_{d_{\min}}(\tY-\bLambda^*)$ be the skinny SVD of matrix $\textup{\textsf{svd}}_{d_{\min}}(\tY-\bLambda^*)$, then $\W_i^*=[\bSigma^{1/H},\0;\0,\0]\in\R^{d_i\times d_{i-1}}$ for $i=2,3,...,H-1$, $\W_H^*=[\U\bSigma^{1/H},\0]\in\R^{d_H\times d_{H-2}}$ and $\W_1^*=[\bSigma^{1/H}\V^T;\0]\X^\dag\in\R^{d_1\times d_0}$ is a globally optimal solution to problem \eqref{equ: regularized PCA}.
\end{theorem}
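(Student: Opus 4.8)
My plan is to prove strong duality in three steps: (i) \emph{reduce} problem \eqref{equ: regularized PCA} to a rank-constrained, nuclear-norm-regularized matrix approximation problem; (ii) verify \emph{weak duality} against \eqref{equ: dual problem} by hand; and (iii) exhibit an explicit \emph{dual certificate}, i.e. a matched pair of primal and dual feasible points whose objective values coincide. For (i), set $\Z := \W_H\cdots\W_1\X$ and reparametrize by $\M_1 := \W_1\X$, $\M_i := \W_i$ for $i \ge 2$, so that $\Z = \M_H\cdots\M_1$ with $\mathsf{Row}(\Z) \subseteq \mathsf{Row}(\X)$ and $\rank(\Z) \le d_{\min}$. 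The algebraic crux is the variational form of the nuclear norm, $\min\{\tfrac1H(\|\M_1\|_{\cS_H}^H + \sum_{i=2}^H\|\M_i\|_{\cS_H}^H) : \M_H\cdots\M_1 = \Z\} = \|\Z\|_*$, with the minimum attained by the ``balanced'' factorization whose every factor carries singular values $(\sigma_j(\Z))^{1/H}$ (built from the SVD of $\Z$); since any $\Z$ with $\mathsf{Row}(\Z)\subseteq\mathsf{Row}(\X)$ is realized by taking $\W_1 = \M_1\X^\dag$, this rewrites the primal as $\min\{\tfrac12\|\Y - \Z\|_F^2 + \gamma\|\Z\|_* : \rank(\Z)\le d_{\min},\ \mathsf{Row}(\Z)\subseteq\mathsf{Row}(\X)\}$. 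Because $\tY = \Y\X^\dag\X$ is the orthogonal projection of $\Y$ onto $\mathsf{Row}(\X)$, Pythagoras gives $\|\Y-\Z\|_F^2 = \|\tY - \Z\|_F^2 + \|\Y\|_F^2 - \|\tY\|_F^2$ for such $\Z$, and the row-space constraint may then be dropped (projecting $\Z$ onto $\mathsf{Row}(\X)$ does not increase either term).

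For (ii), take any primal-feasible $\W$ (so $\Z := \W_H\cdots\W_1\X$ has $\rank(\Z)\le d_{\min}$) and any $\bLambda$ with $\|\bLambda\|\le\gamma$ and $\mathsf{Row}(\bLambda)\subseteq\mathsf{Row}(\X)$. The regularizer is at least $\gamma\|\Z\|_* \ge \langle\bLambda,\Z\rangle$; and since $\rank(\Z)\le d_{\min}$, von Neumann's trace inequality followed by Cauchy--Schwarz on the top $d_{\min}$ singular values yields $\langle\tY - \bLambda,\Z\rangle \le \|\tY - \bLambda\|_{d_{\min}}\,\|\Z\|_F$. Writing $\tfrac12\|\tY-\Z\|_F^2 + \langle\bLambda,\Z\rangle = \tfrac12\|\tY\|_F^2 + \tfrac12\|\Z\|_F^2 - \langle\tY-\bLambda,\Z\rangle$ and completing the square in $\|\Z\|_F$ shows the primal objective at $\W$ is at least $-\tfrac12\|\tY-\bLambda\|_{d_{\min}}^2 + \tfrac12\|\Y\|_F^2$, which is exactly the dual objective at $\bLambda$; convexity of $\|\cdot\|_{d_{\min}}^2 = \sum_{i=1}^{d_{\min}}\sigma_i^2(\cdot) = \max_{U^T U = \I}\|U^T(\cdot)\|_F^2$ is immediate as a maximum of convex functions. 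Hence the optimum of \eqref{equ: regularized PCA} is $\ge$ the optimum of \eqref{equ: dual problem}.

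For (iii), let $\U_{\tY}\bSigma_{\tY}\V_{\tY}^T$ be the skinny SVD of $\tY$ and put $\bLambda^* := \gamma\,\U_{\tY}\V_{\tY}^T$; this is dual-feasible, with $\|\bLambda^*\| = \gamma$ and $\mathsf{Row}(\bLambda^*)\subseteq\mathsf{Row}(\tY)\subseteq\mathsf{Row}(\X)$. Since $0\le\gamma < \sigma_{\min}(\tY)$, the matrix $\tY - \bLambda^* = \U_{\tY}(\bSigma_{\tY}-\gamma\I)\V_{\tY}^T$ keeps all its nonzero singular values strictly positive and in the original order, so $\svd_{d_{\min}}(\tY-\bLambda^*) = \U\bSigma\V^T$ with $\bSigma = \diag(\sigma_1(\tY)-\gamma,\dots,\sigma_r(\tY)-\gamma)$, where $r = \min\{d_{\min},\rank(\tY)\}$. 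A direct check shows the $\W_i^*$ in the statement satisfy $\W_1^*\X = [\bSigma^{1/H}\V^T;\0]$ (using $\V^T\X^\dag\X = \V^T$), hence $\W_H^*\cdots\W_1^*\X = \U\bSigma\V^T =: \Z^*$, and $\|\W_1^*\X\|_{\cS_H}^H = \|\W_i^*\|_{\cS_H}^H = \|\Z^*\|_*$ for every $i$; so $\W^*$ is primal-feasible with objective $\tfrac12\|\Y-\Z^*\|_F^2 + \gamma\|\Z^*\|_*$. A short computation then shows both this and the dual objective at $\bLambda^*$ equal $\tfrac12\|\Y\|_F^2 - \tfrac12\sum_{i=1}^r(\sigma_i(\tY)-\gamma)^2$. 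By weak duality the optimum of \eqref{equ: regularized PCA} lies between these two equal values, so strong duality holds, $\W^*$ is primal-optimal, and $\bLambda^*$ is dual-optimal.

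The step I expect to be hardest is (i): the rank- and dimension-constrained nuclear-norm factorization identity, in particular checking that the balanced blocks are optimal among \emph{all} valid factorizations (an AM--GM/von Neumann argument). I also expect the hypothesis $\gamma < \sigma_{\min}(\tY)$ to demand careful bookkeeping: it is used precisely so that soft-thresholding by $\gamma$ annihilates no singular value, so that $\svd_{d_{\min}}(\tY-\bLambda^*)$ picks out exactly the $r$-block above rather than mixing in the tail of $\tY$, and so that the recovered $\W_i^*$ is a genuine global minimizer. A final conceptual point worth isolating is why the ``correct'' dual carries the Ky-Fan-type penalty $\|\cdot\|_{d_{\min}}^2$ rather than $\|\cdot\|_F^2$: this sharpening is exactly what the uniform rank bound $\rank(\Z)\le d_{\min}$ on feasible primal points licenses in step (ii), and it is what makes the duality gap close.
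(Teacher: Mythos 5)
Your proposal is correct, and while step (i) coincides with the paper's variational-form reduction (Lemma \ref{lemma: variational form of nuclear norm}) and your certificate $\bLambda^*=\gamma\U_{\tY}\V_{\tY}^T$ is literally the same matrix the paper constructs (the paper writes it as $\gamma\U_{:,1:\bar r}\V_{:,1:\bar r}^T+\gamma\U_{:,(\bar r+1):r}\V_{:,(\bar r+1):r}^T$, which sums to $\gamma\U_{:,1:r}\V_{:,1:r}^T$), your steps (ii)--(iii) take a genuinely different and more elementary route. The paper establishes strong duality through a Lagrangian saddle-point argument: it formulates subdifferential/stationarity conditions on $\bLambda^*$ (Condition \ref{lemma: Lagrangian multiplier}), shows via Lemma \ref{lemma: f and k local minimum} that a certificate satisfying them makes $(\W_1^*,\ldots,\W_H^*,\bLambda^*)$ a primal--dual saddle point, and crucially imports the closed-form soft-thresholding solution of the nuclear-norm problem from an external result (Lemma \ref{lemma: closed-form solution}, cited from Udell et al.) together with the ``all local minima of low-rank approximation are global'' fact (Lemma \ref{lemma: local global}). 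You instead prove weak duality against \eqref{equ: dual problem} by hand — the chain $\gamma\|\Z\|_*\ge\langle\bLambda,\Z\rangle$, von Neumann truncated at $d_{\min}$ because $\rank(\Z)\le d_{\min}$, Cauchy--Schwarz, and completing the square in $\|\Z\|_F$ — and then close the gap by exhibiting a matched primal--dual pair whose values both equal $\tfrac12\|\Y\|_F^2-\tfrac12\sum_{i=1}^{\bar r}(\sigma_i(\tY)-\gamma)^2$ (I verified this computation; it checks out). What your route buys is self-containedness: the soft-thresholding formula emerges as a byproduct of the value matching rather than being assumed, no subdifferential bookkeeping or projection onto the space $\cT$ is needed, and your derivation makes transparent \emph{why} the dual carries the Ky-Fan penalty $\|\cdot\|_{d_{\min}}^2$. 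What the paper's route buys is the saddle-point structure itself and the characterization of the optimal solution set via Lemma \ref{lemma: local global}, which your argument certifies only for the particular constructed $\W^*$. One point to nail down in a full write-up of your step (i): the image of $(\W_1,\ldots,\W_H)\mapsto\W_H\cdots\W_1\X$ is exactly $\{\Z:\rank(\Z)\le d_{\min},\ \mathsf{Row}(\Z)\subseteq\mathsf{Row}(\X)\}$ only under the dimension hypothesis $d_{\min}\le\min\{d_0,d_H,n\}$ (and implicitly $d_{\min}\le\rank(\X)$ for the rank bound to be the binding one); this is the same wrinkle the paper glosses over.
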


The regularization parameter $\gamma$ cannot be too large in order to avoid underfitting. Our result provides a suggested upper bound $\sigma_{\min}(\tY)$ for the regularization parameter, where oftentimes $\sigma_{\min}(\tY)$ characterizes the level of random noise. When $\gamma=0$, our analysis reduces to the \emph{un-regularized deep linear neural network}, a model which has been widely studied in \cite{kawaguchi2016deep,lu2017depth,baldi1989neural,Goodfellow-et-al-2016}.

Theorem \ref{theorem: local=global} implies the followig result on the landscape of deep linear neural networks: the regularized deep learning can be converted into an equivalent convex problem by dual. We note that the strong duality rarely happens in the non-convex optimization: matrix completion~\cite{Balcan2017optimal}, Fantope~\cite{overton1992sum}, and quadratic optimization with two quadratic constraints~\cite{beck2006strong} are among the few paradigms that enjoy the strong duality.
For deep networks, the effectiveness of convex relaxation has been observed empirically in~\cite{aslan2014convex,zhang2016convexified}, but much remains unknown for the theoretical guarantees of the relaxation. Our work shows strong duality of regularized deep linear neural networks and provides an alternative approach to overcome the computational obstacles due to the non-convexity: one can apply convex solvers, e.g., the Douglas–Rachford algorithm,\footnote{Grussler et al.~\cite{grussler2016low} provided a fast algorithm to compute the proximal operators of $\frac{1}{2}\|\cdot\|_{d_{\min}}^2$. Hence, the Douglas–Rachford algorithm can find the global solution up to an $\epsilon$ error in function value in time $\textsf{poly}(1/\epsilon)$~\cite{he20121}.} for problem \eqref{equ: dual problem} and then conduct singular value decomposition to compute the weights $\{\W_i^*\}_{i=1}^H$ from $\textup{\textsf{svd}}_{d_{\min}}(\tY-\bLambda^*)$.
In addition, our result inherits the benefits of convex analysis. The vast majority results on deep learning study the generalization error or expressive power by analyzing its complicated non-convex form~\cite{neyshabur2015path,zhang2017learnability,zhang2016understanding}. In contrast, with strong duality one can investigate various properties of deep linear networks with much simpler convex form.

\medskip
\noindent{\textbf{Related Works.}}
The goal of convexified linear neural networks is to relax the non-convex form of deep learning to the computable convex formulations~\cite{zhang2016convexified,aslan2014convex}. While several efforts have been devoted to investigating the effectiveness of such convex surrogates, e.g., by analyzing the generalization error after the relaxation~\cite{zhang2016convexified}, little is known whether the relaxation is tight to its original problem. Our result, on the other hand, provides theoretical guarantees for the tightness of convex relaxation of deep linear networks, a phenomenon observed empirically in~\cite{aslan2014convex,zhang2016convexified}.

We mention another related line of research --- no bad local minima. On one hand, although recent works have shown the absence of spurious local minimum for deep linear neural networks~\cite{Saxe2015Deep,kawaguchi2016deep,lu2017depth}, many of them typically lack theoretical analysis of regularization term. Specifically, Kawaguchi~\cite{kawaguchi2016deep} showed that \emph{un-regularized} deep linear neural networks have no spurious local minimum. Lu and Kawaguchi~\cite{lu2017depth} proved that depth creates no bad local minimum for \emph{un-regularized} deep linear neural networks. In contrast, our optimization problem is more general by taking the regularization term into account. On the other hand, even the ``local=global'' argument holds for the deep linear neural networks, it is still hard to escape bad saddle points~\cite{anandkumar2016efficient}. In particular, Kawaguchi~\cite{kawaguchi2016deep} proved that for linear networks deeper than three layers, there exist bad saddle points at which the Hessian does not have any negative eigenvalue. Therefore, the state-of-the-art algorithms designed to escape the saddle points might not be applicable~\cite{jin2017escape,ge2015escaping}.
Our result provides an alternative approach to solve deep linear network by convex programming, which bypasses the computational issues incurred by the bad saddle points.

\comment{
\begin{table}
\caption{Study of linear neural networks.}
\label{table: comparison of sample complexity}
\centering
\begin{tabular}{c|ccc}%
\hline
& Convexification & No Spurious Local Minima & Strong Duality\\
\hline
Literature & \cite{aslan2014convex,zhang2016convexified} & \cite{baldi1989neural,kawaguchi2016deep,lu2017depth} & Ours\\
\hline
\end{tabular}
\end{table}
}

\vspace{-0.2cm}
\section{Our Techniques and Proof Sketches}
\vspace{-0.1cm}
\label{section: Our Techniques and Proof Sketches}

In this section, we present our techniques and proof sketches of Theorems \ref{theorem: deep non-linear neural networks} and \ref{theorem: local=global}.

\medskip
\noindent{\textbf{(a) Shapley-Folkman Lemma.}} The proof of Theorem \ref{theorem: deep non-linear neural networks} is built upon the Shapley-Folkman lemma~\cite{d2017approximate,starr1969quasi,fang2015blessing,bertsekas1982estimates}, which characterizes a convexification phenomenon concerning the average of multiple sets and is analogous to the central limit theorem in the probability theory. Consider the averaged Minkowski sum of $I$ sets $\cA_1,\cA_2,...,\cA_I$ given by $\{I^{-1}\sum_{j\in[I]}a_j:a_j\in\cA_j\}$. Intuitively, the lemma states that $\rho(I^{-1}\sum_{j\in[I]} \cA_j)\rightarrow 0$ as $I\rightarrow +\infty$, where $\rho(\cdot)$ is a metric of the non-convexity of a set (see Figure \ref{figure: Shapley-Folkman} for visualization). We apply this lemma to the optimization formulation of deep neural networks. Denote by \emph{augmented epigraph} the set $\{(h(\w),\ell(\w)):\text{all possible choices of }\w\}$, where $h$ is the constraint and $\ell$ is the objective function in the optimization problem. The key observation is that the augmented epigraph of neural network loss with multi-branch architecture can be expressed as the Minkowski average of augmented epigraphs of all branches. Thus we obtain a natural connection between an optimization problem and its corresponding augmented epigraph. Applying Shapley-Folkman lemma to the augmented epigraph leads to a characteristic of non-convexity of the deep neural network.

\begin{figure}[t]
\center
\includegraphics[width=0.9\textwidth]{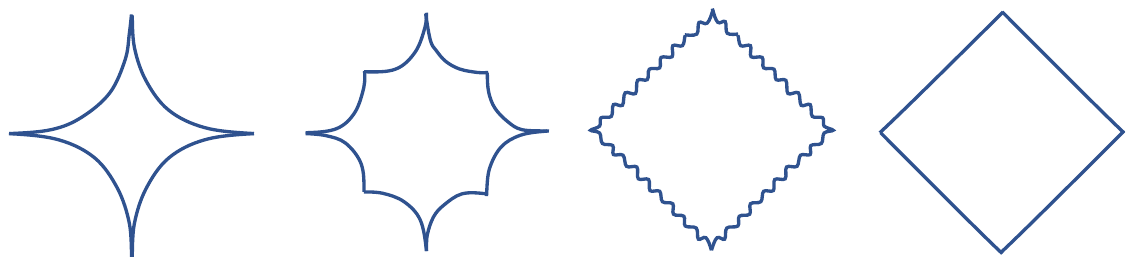}
\caption{Visualization of Shapley-Folkman lemma. \textbf{The first figure:} an $\ell_{1/2}$ ball. \textbf{The second and third figures:} the averaged Minkowski sum of two and ten $\ell_{1/2}$ balls. \textbf{The fourth figure:} the convex hull of $\ell_{1/2}$ ball (the Minkowski average of infinitely many $\ell_{1/2}$ balls). It show that with the number of $\ell_{1/2}$ balls to be averaged increasing, the Minkowski average tends to be more convex.}
\label{figure: Shapley-Folkman}
\end{figure}

\medskip
\noindent{\textbf{(b) Variational Form.}} The proof of Theorem \ref{theorem: local=global} is built upon techniques (b), (c), and (d). In particular, problem \eqref{equ: regularized PCA} is highly non-convex due to its multi-linear form over the optimized variables $\{\W_i\}_{i=1}^H$. Fortunately, we are able to analyze the problem by grouping $\W_H\W_{H-1}...\W_1\X$ together and converting the original non-convex problem in terms of the separate variables $\{\W_i\}_{i=1}^H$ to a convex optimization with respect to the new grouping variable $\W_H\W_{H-1}...\W_1\X$. This typically requires us to represent the objective function of \eqref{equ: regularized PCA} as a convex function of $\W_H\W_{H-1}...\W_1$. To this end, we prove that $\|\W_H\W_{H-1}...\W_1\X\|_*=\min_{\W_1,...,\W_H} \displaystyle\frac{1}{H} \left[ \|\W_1\X\|_{\cS_H}^H + \sum_{i=2}^{H}\|\W_i\|_{\cS_H}^H \right]$ (see Lemma \ref{lemma: variational form of nuclear norm} in Appendix \ref{section: Strong Duality of Deep Linear Neural Networks}). So the objective function in problem \eqref{equ: regularized PCA} has an equivalent form
\begin{equation}
\label{equ: nuclear norm form technique}
\min_{\W_1,...,\W_H} \frac{1}{2}\|\Y-\W_H\W_{H-1}\cdots\W_1\X\|_F^2+\gamma\|\W_H\W_{H-1}\cdots\W_1\X\|_*.
\end{equation}
This observation enables us to represent the optimization problem as a convex function of the output of a neural network. Therefore, we can analyze the non-convex problem by applying powerful tools from convex analysis.

\medskip
\noindent{\textbf{(c) Reduction to Low-Rank Approximation.}} Our results of strong duality concerning problem \eqref{equ: nuclear norm form technique} are inspired by the problem of low-rank matrix approximation:
\begin{equation}
\label{equ: PCA}
\min_{\W_1,...,\W_H} \frac{1}{2}\|\Y-\bLambda^*-\W_H\W_{H-1}\cdots\W_1\X\|_F^2.
\end{equation}
We know that all local solutions of \eqref{equ: PCA} are globally optimal~\cite{kawaguchi2016deep,lu2017depth,Balcan2017optimal}. To analyze the more general regularized problem \eqref{equ: regularized PCA}, our main idea is to reduce problem \eqref{equ: nuclear norm form technique} to the form of \eqref{equ: PCA} by Lagrangian function. In other words, the Lagrangian function of problem \eqref{equ: nuclear norm form technique} should be of the form \eqref{equ: PCA} for a fixed Lagrangian variable $\bLambda^*$, which we will construct later in subsection~(d). While some prior works attempted to apply a similar reduction, their conclusions either depended on unrealistic conditions on local solutions, e.g., all local solutions are rank-deficient~\cite{haeffele2015global,grussler2016low}, or their conclusions relied on strong assumptions on the objective functions, e.g., that the objective functions are twice-differentiable~\cite{haeffele2015global}, which do not apply to the non-smooth problem \eqref{equ: nuclear norm form technique}. Instead, our results bypass these obstacles by formulating the strong duality of problem \eqref{equ: nuclear norm form technique} as the existence of a dual certificate $\bLambda^*$ satisfying certain dual conditions (see Lemma \ref{lemma: f and k local minimum} in Appendix \ref{section: Strong Duality of Deep Linear Neural Networks}). Roughly, the dual conditions state that the optimal solution $(\W_1^*,\W_2^*,...,\W_H^*)$ of problem \eqref{equ: nuclear norm form technique} is locally optimal to problem \eqref{equ: PCA}. On one hand, by the above-mentioned properties of problem \eqref{equ: PCA}, $(\W_1^*,...,\W_H^*)$ globally minimizes the Lagrangian function when $\bLambda$ is fixed to $\bLambda^*$. On the other hand, by the convexity of nuclear norm, for the fixed $(\W_1^*,...,\W_H^*)$ the Lagrangian variable $\bLambda^*$ globally optimize the Lagrangian function. Thus $(\W_1^*,...,\W_H^*,\bLambda^*)$ is a primal-dual saddle point of the Lagrangian function of problem \eqref{equ: nuclear norm form technique}. The desired strong duality is a straightforward result from this argument.

\vspace{-0.1cm}
\medskip
\noindent{\textbf{(d) Dual Certificate.}} The remaining proof is to construct a dual certificate $\bLambda^*$ such that the dual conditions hold true. The challenge is that the dual conditions impose several constraints simultaneously on the dual certificate (see condition \eqref{equ: dual conditions for general problem} in Appendix \ref{section: Strong Duality of Deep Linear Neural Networks}), making it hard to find a desired certificate. This is why progress on the dual certificate has focused on convex programming.
To resolve the issue, we carefully choose the certificate as an appropriate scaling of subgradient of nuclear norm around a low-rank solution, where the nuclear norm follows from our regularization term in technique (b). Although the nuclear norm has infinitely many subgradients, we prove that our construction of dual certificate obeys all desired dual conditions. Putting techniques (b), (c), and (d) together, our proof of strong duality is completed.

\begin{figure}[t]
  \begin{subfigure}{0.24\textwidth}
    \includegraphics[width=\textwidth]{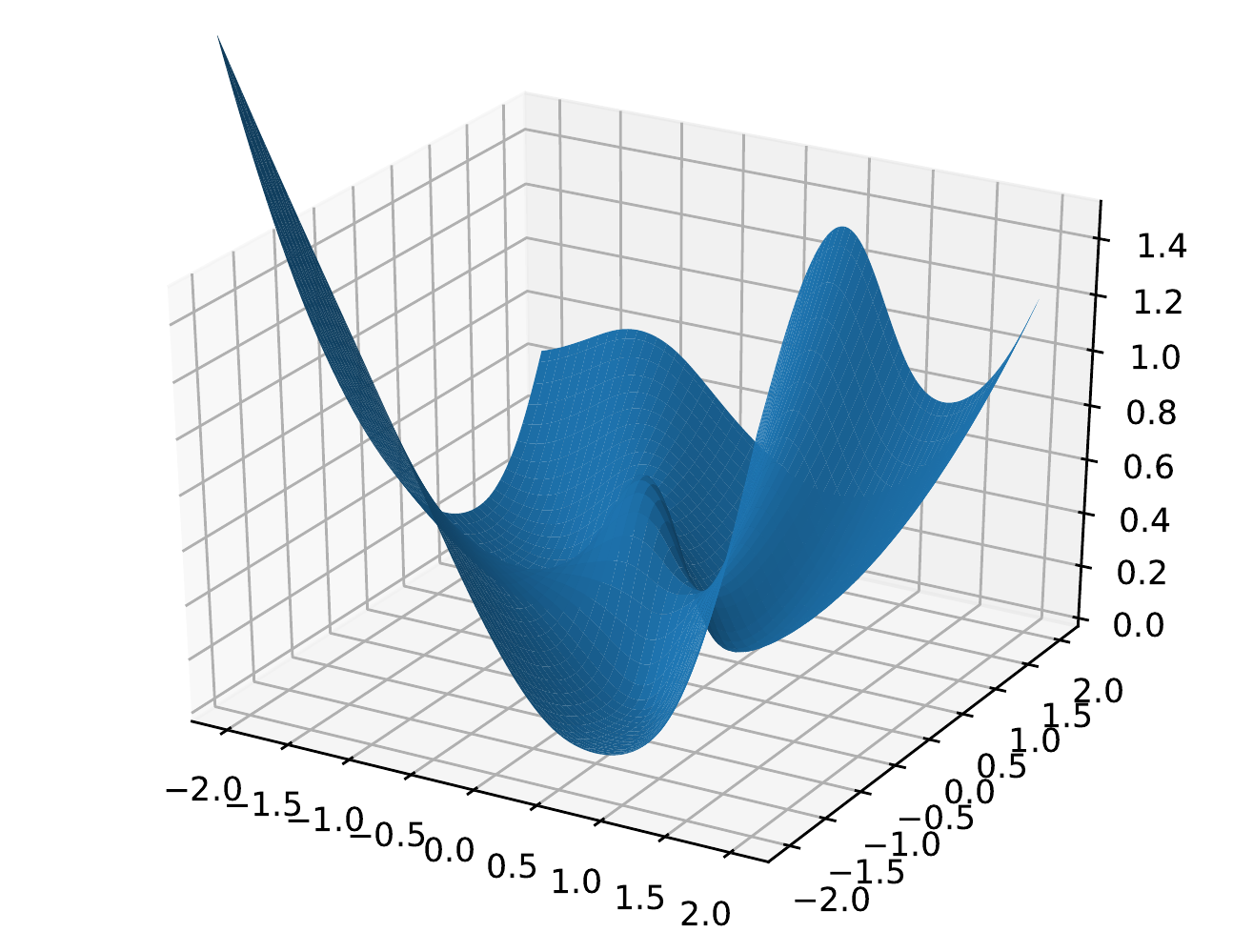}
    \caption{$I=1$.}
  \end{subfigure}
  %
  %
  \begin{subfigure}{0.24\textwidth}
    \includegraphics[width=\textwidth]{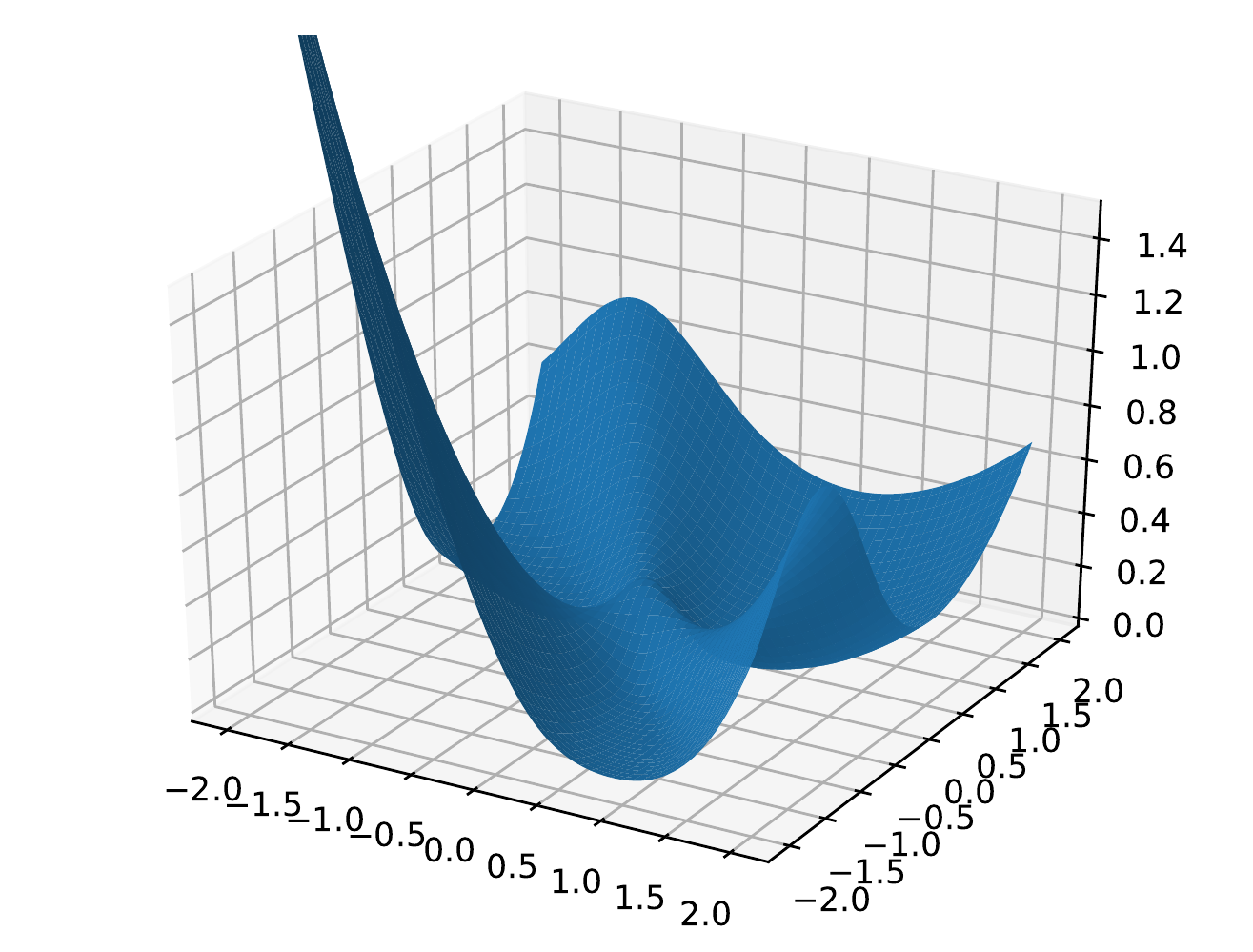}
    \caption{$I=3$.}
  \end{subfigure}
  %
  \begin{subfigure}{0.24\textwidth}
    \includegraphics[width=\textwidth]{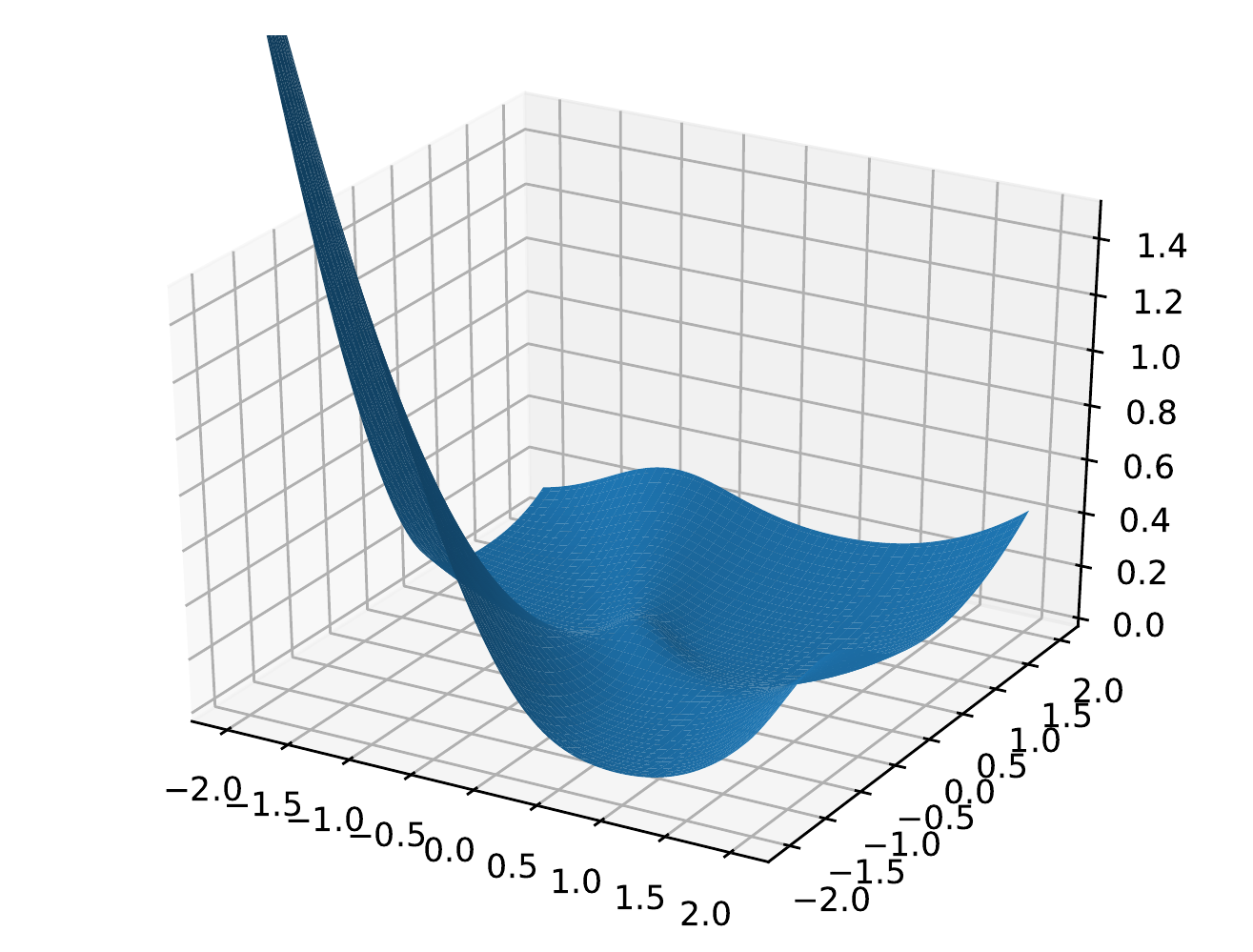}
    \caption{$I=5$.}
  \end{subfigure}
  %
  %
  %
  \begin{subfigure}{0.24\textwidth}
    \includegraphics[width=\textwidth]{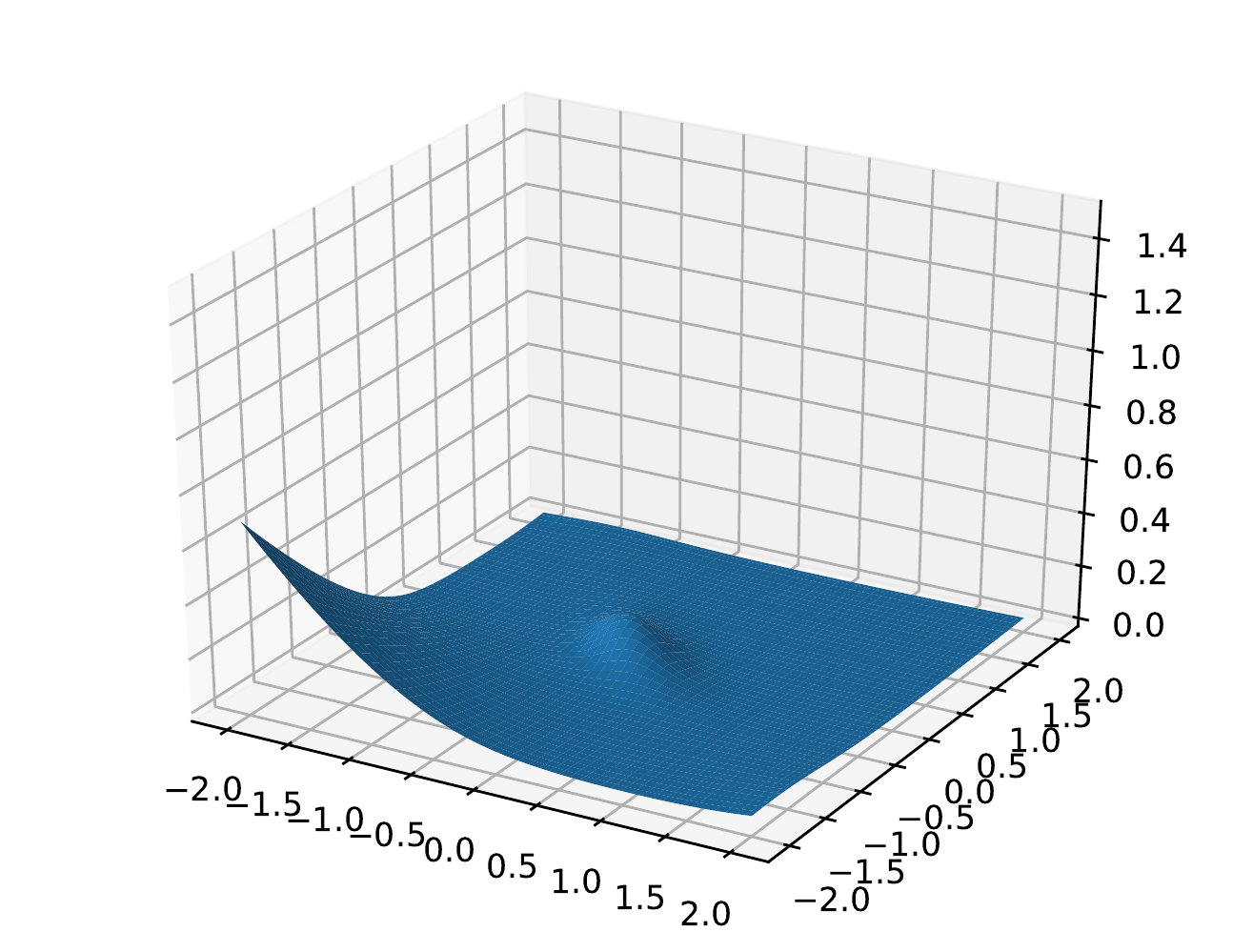}
    \caption{$I=100$.}
  \end{subfigure}
  \begin{subfigure}{0.24\textwidth}
    \includegraphics[width=\textwidth]{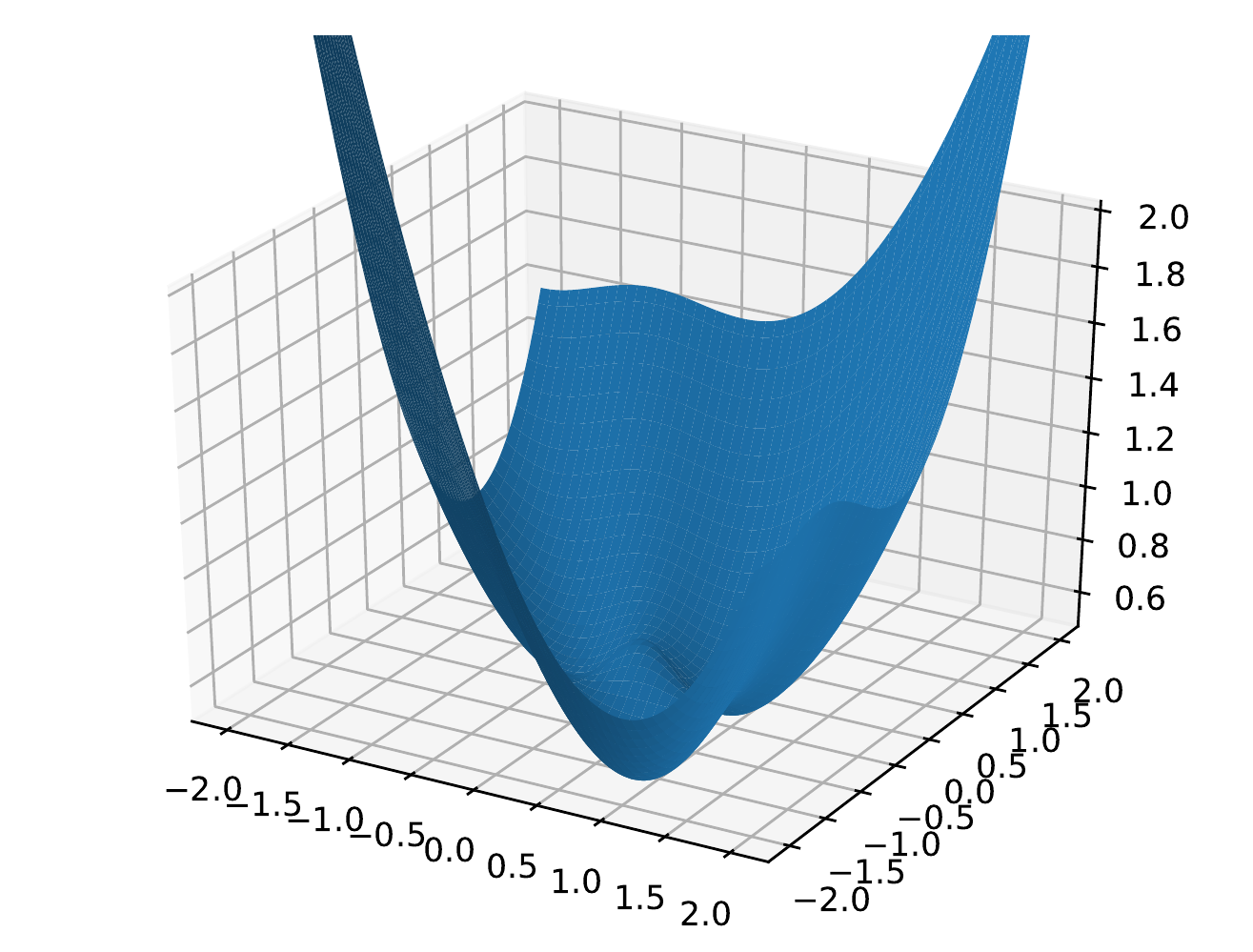}
    \caption{$I=1$.}
  \end{subfigure}
  \begin{subfigure}{0.24\textwidth}
    \includegraphics[width=\textwidth]{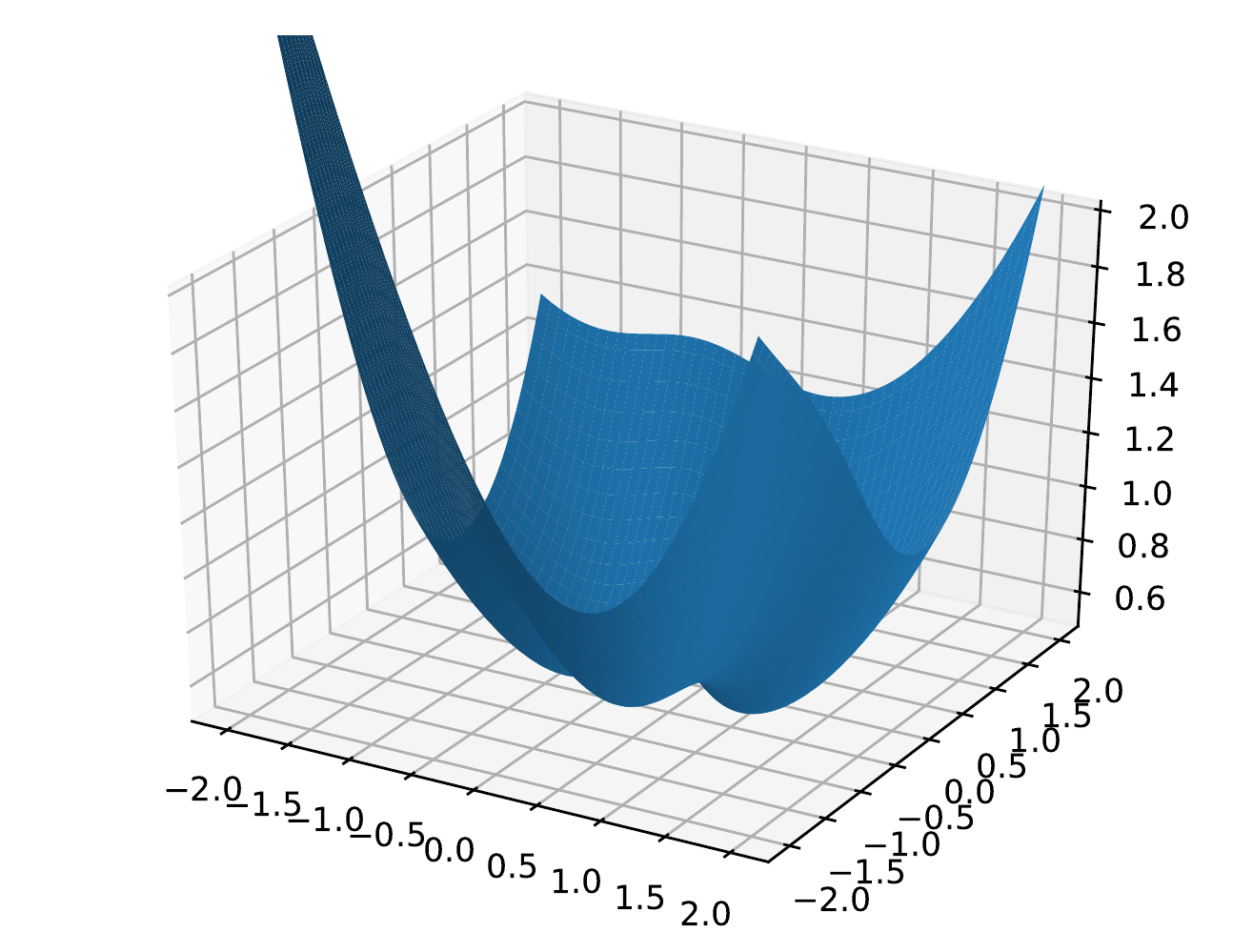}
    \caption{$I=3$.}
  \end{subfigure}
  \begin{subfigure}{0.24\textwidth}
    \includegraphics[width=\textwidth]{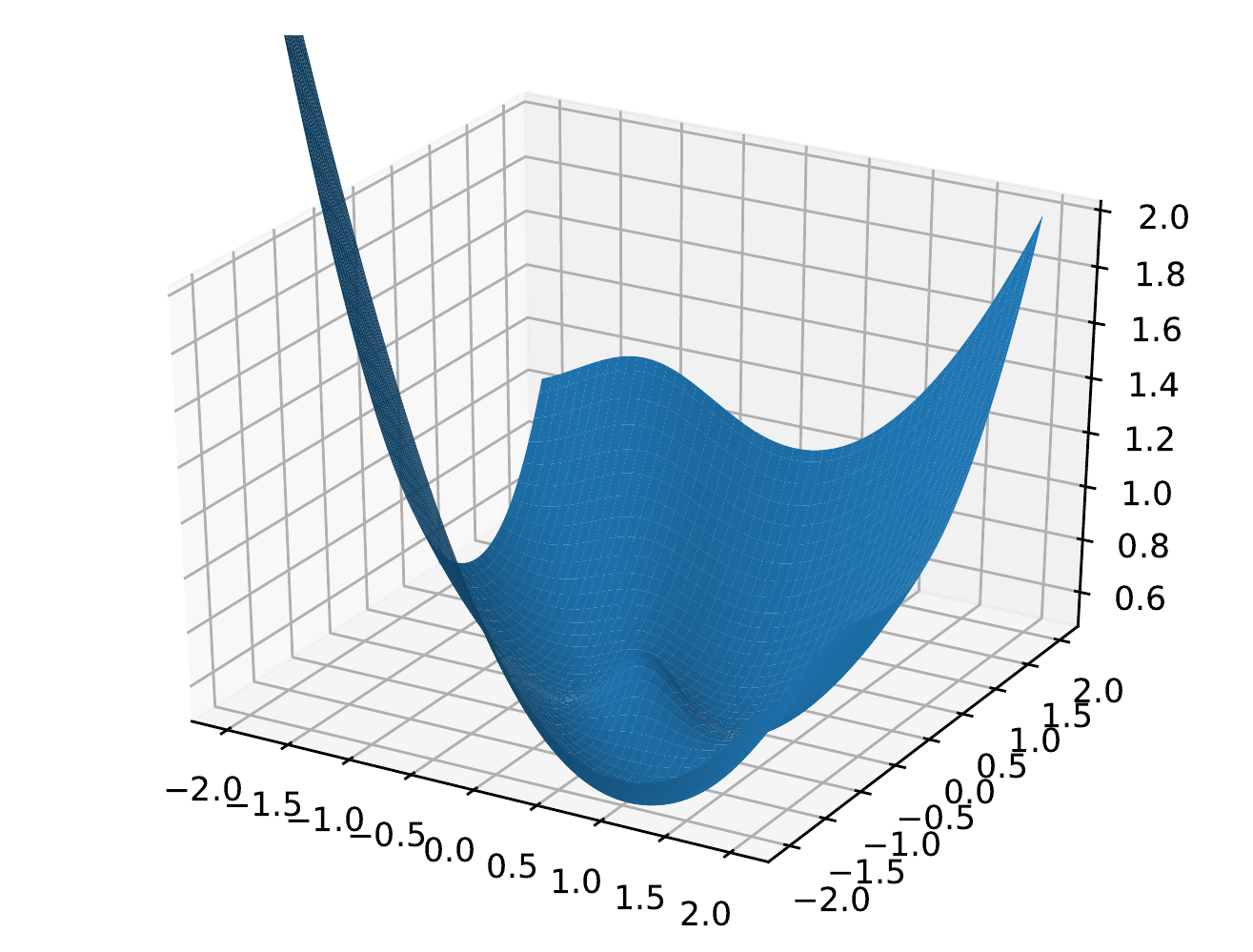}
    \caption{$I=5$.}
  \end{subfigure}
  \begin{subfigure}{0.24\textwidth}
    \includegraphics[width=\textwidth]{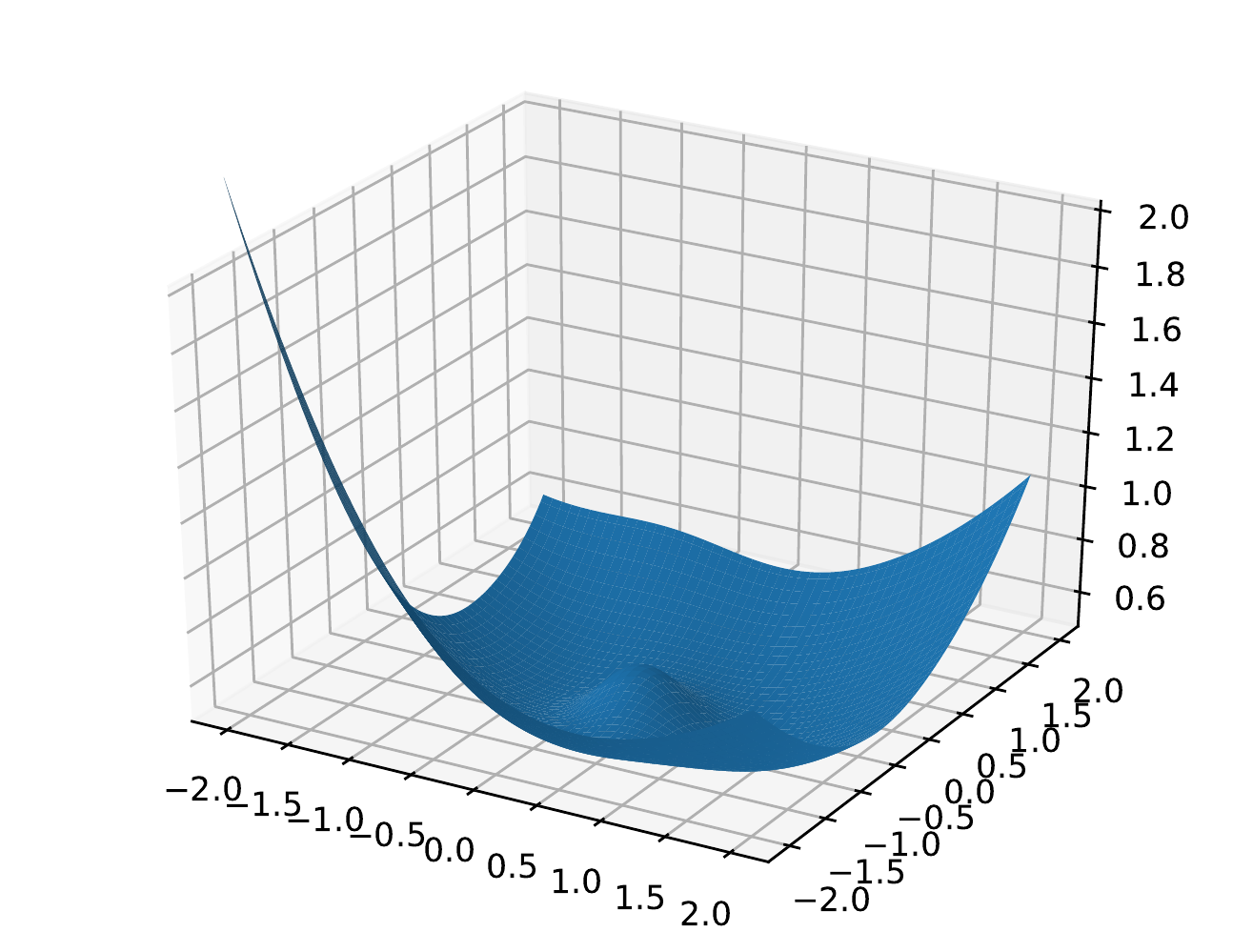}
    \caption{$I=100$.}
  \end{subfigure}
  \begin{subfigure}{0.24\textwidth}
    \includegraphics[width=\textwidth]{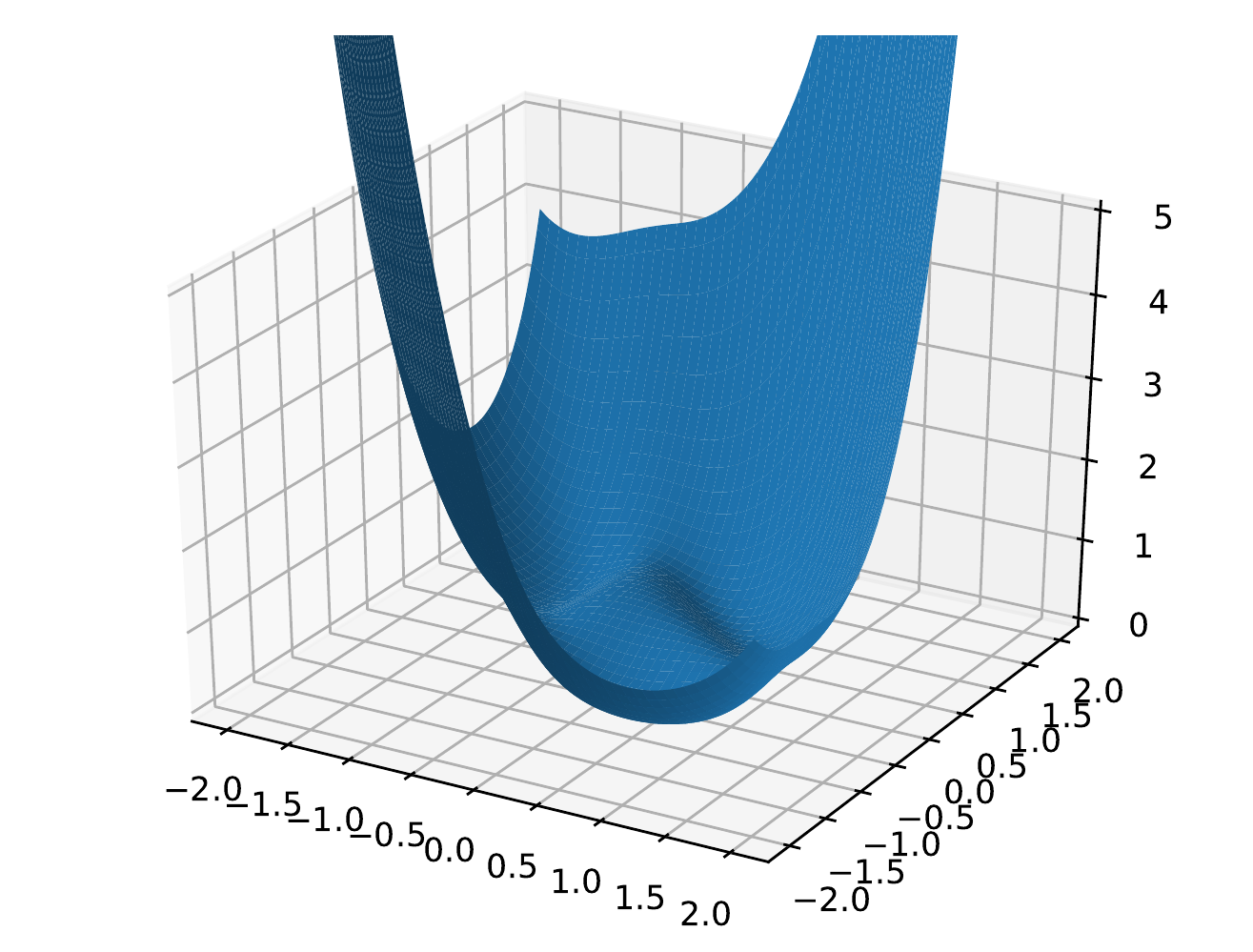}
    \caption{$I=1$.}
  \end{subfigure}
  %
  %
  \begin{subfigure}{0.24\textwidth}
    \includegraphics[width=\textwidth]{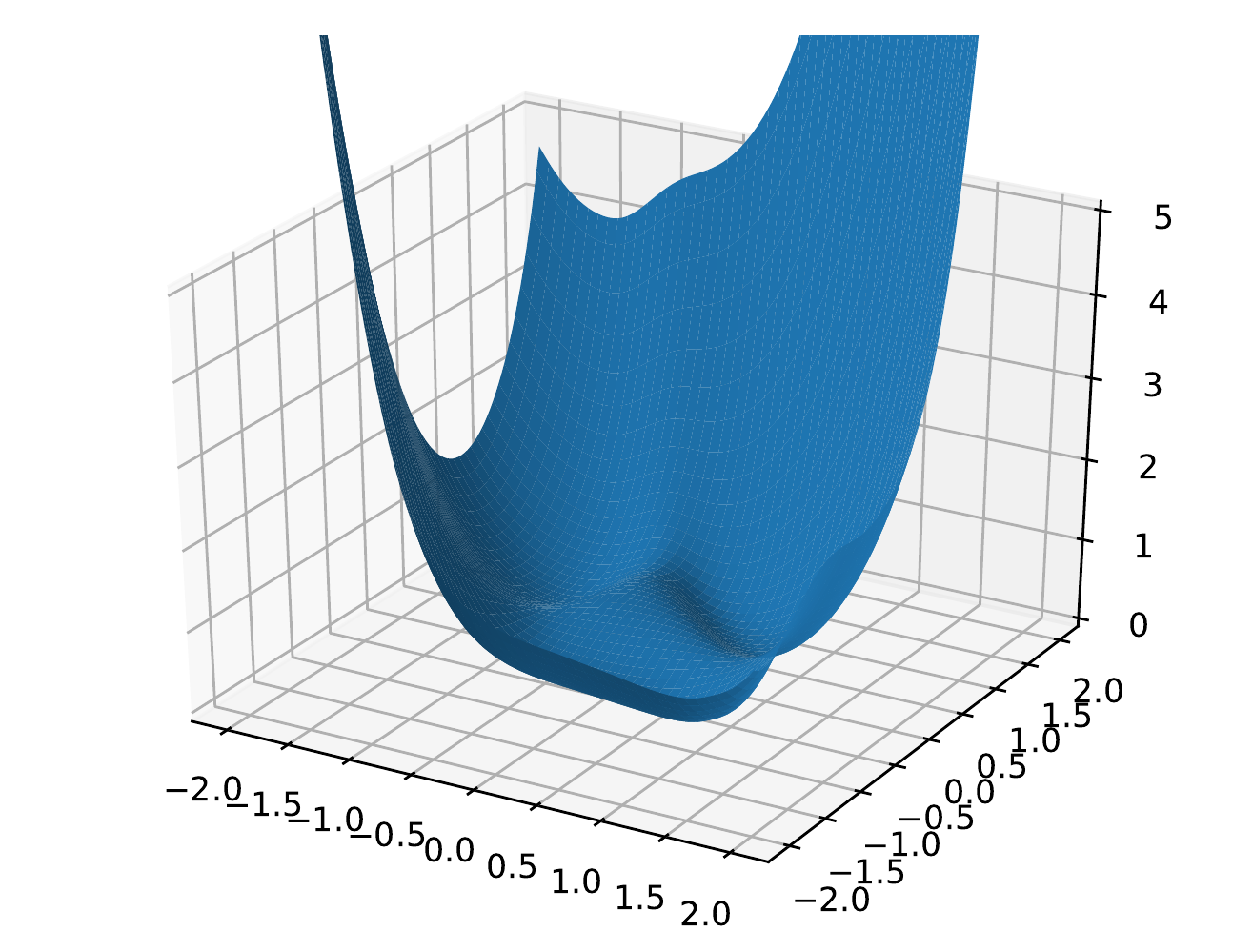}
    \caption{$I=3$.}
  \end{subfigure}
  %
  \begin{subfigure}{0.24\textwidth}
    \includegraphics[width=\textwidth]{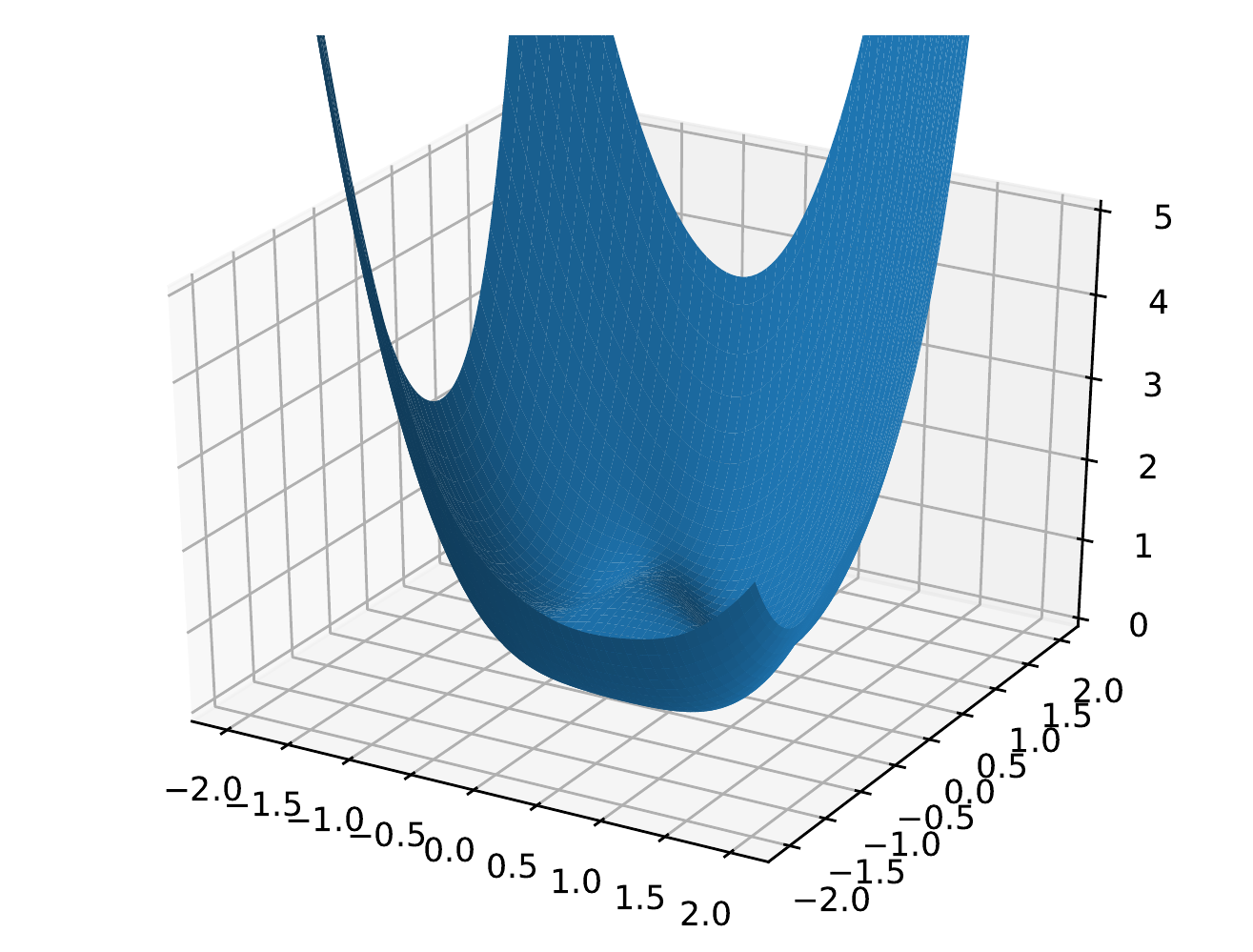}
    \caption{$I=5$.}
  \end{subfigure}
  %
  %
  %
  \begin{subfigure}{0.24\textwidth}
    \includegraphics[width=\textwidth]{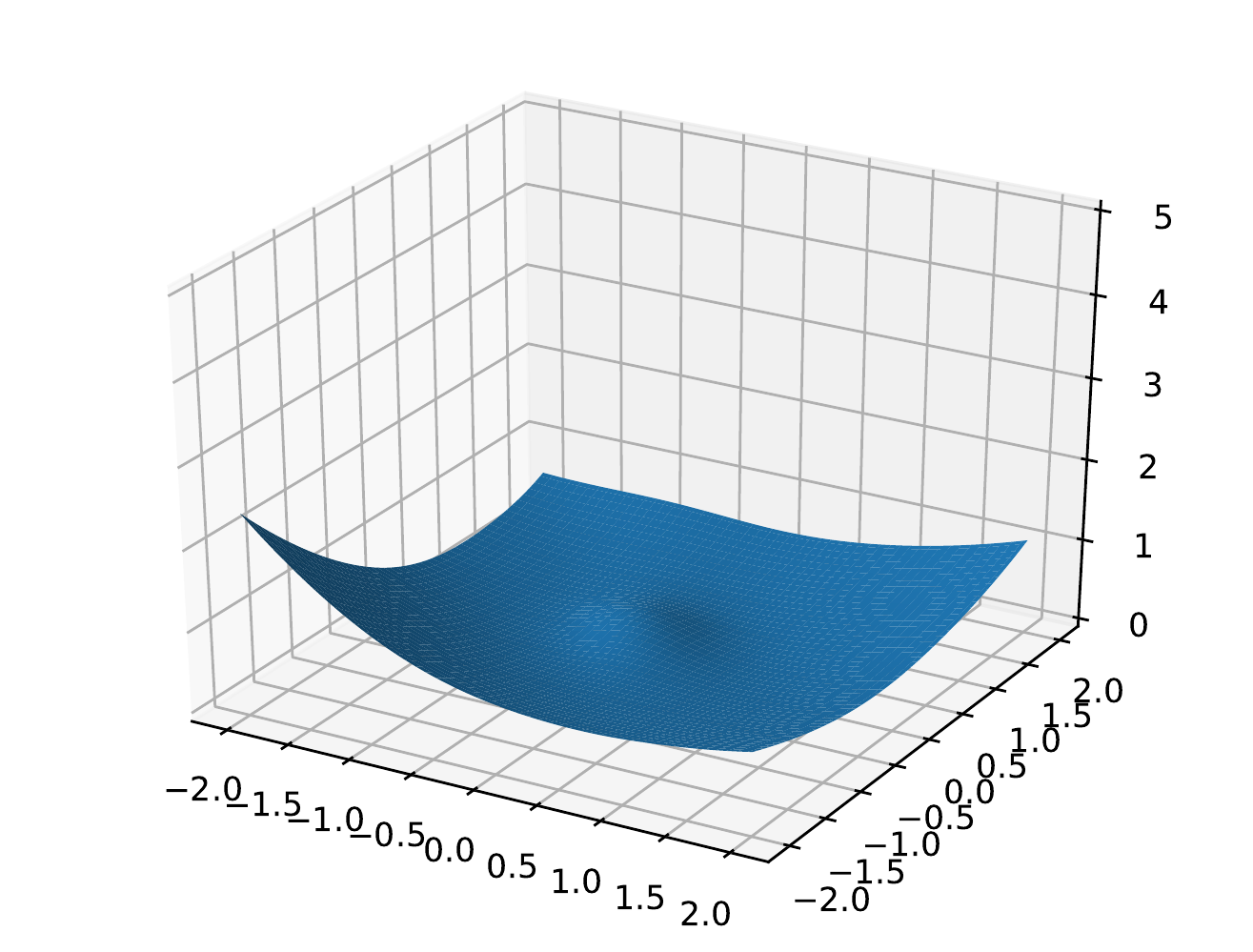}
    \caption{$I=50,000$.}
  \end{subfigure}

  \caption{\textbf{Top Row:} Landscape of one-hidden-layer network on MNIST. \textbf{Middle Row:} Landscape of one-hidden-layer network on CIFAR-10. \textbf{Bottom Row:} Landscape of three-hidden-layer, multi-branch network on CIFAR-10 dataset. From left to right, the landscape looks less non-convex.}
  \label{figure: experiment B}
\vspace{-0.3cm}
\end{figure}

\section{Experiments}
\label{section: experiments}
In this section, we verify our theoretical contributions by the experimental validation. We release our PyTorch code at \url{https://github.com/hongyanz/multibranch}.
\comment{
\begin{table}
\caption{Overview of the datasets}
  \begin{center}
    \begin{tabular}{c||c|c|c}
    \hline
    & Training instances & Shape of data sample & Task\tabularnewline
    \hline
    \hline
    Synthetic & 1000 & 10 & Binary classification\tabularnewline
    \hline
    MNIST & 60000 & $28\times28$ & 10-class classification\tabularnewline
    \hline
    CIFAR-10 & 50000 & $3\times32\times32$ & 10-class classification\tabularnewline
    \hline
    \end{tabular}
  \end{center}
  \label{table: overview of datasets}
\end{table}
}

\subsection{Visualization of Loss Landscape}

\label{section: Visualization of Loss Landscape}

\textbf{Experiments on Synthetic Datasets.}
We first show that over-parametrization results in a less
non-convex loss surface for a synthetic dataset. The dataset consists of $1,000$ examples in $\R^{10}$ whose labels are generated by an underlying one-hidden-layer ReLU network $f(\x)=\sum_{i=1}^{I} \mathbf{w}_{i,2}^*[\mathbf{W}_{i,1}^*\mathbf{x}]_{+}$ with 11 hidden neurons~\cite{safran2017spurious}.
We make use of the visualization technique employed by \cite{li2017visualizing} to plot the landscape, where we project the high-dimensional hinge loss ($\tau=1$) landscape onto a 2-d plane spanned by three points. These points are found by running the SGD algorithm with three different initializations until the algorithm converges.
As shown in Figure~\ref{figure: experiment A}, the landscape exhibits strong non-convexity with lots of local minima in the under-parameterized case $I=10$. But as $I$ increases, the landscape becomes more convex. In the extreme case, when there are $1,000$ hidden neurons in the network, no non-convexity can be observed on the landscape.

\noindent{\textbf{Experiments on MNIST and CIFAR-10.}} We next verify the phenomenon of over-parametrization on MNIST~\cite{lecun1998gradient} and CIFAR-10~\cite{krizhevsky2009learning} datasets.
For both datasets, we follow the standard preprocessing step that each pixel is normalized by subtracting its mean and dividing by its standard deviation. We do not apply data augmentation. For MNIST,
we consider a single-hidden-layer network defined as: $f(\x)=\sum_{i=1}^{I} \mathbf{W}_{i,2}[\mathbf{W}_{i,1}\mathbf{x}]_{+}$, where $\mathbf{W}_{i,1}\in\mathbb{R}^{h\times d}$, $\mathbf{W}_{i,2}\in\mathbb{R}^{10\times h}$, $d$ is the input dimension, $h$ is the number of hidden neurons, and $I$ is the number of branches, with $d=784$ and $h=8$.
For CIFAR-10, in addition to considering the exact same one-hidden-layer architecture,
we also test a deeper network containing $3$ hidden layers of size $8$-$8$-$8$,
with ReLU activations and $d=3,072$.
We apply 10-class hinge loss on the top of the output of considered networks.

Figure~\ref{figure: experiment B} shows the changes of landscapes when $I$ increases from $1$ to $100$ for MNIST, and from $1$ to $50,000$ for CIFAR-10, respectively. When there is only one branch, the landscapes have strong non-convexity with many local minima. As the number of branches $I$ increases, the landscape becomes more convex. When $I=100$ for $1$-hidden-layer networks on MNIST and CIFAR-10, and $I=50,000$ for $3$-hidden-layer network on CIFAR-10, the landscape is almost convex.

\comment{
\begin{figure}
  \begin{subfigure}{0.24\textwidth}
    \includegraphics[width=\textwidth]{figures/C/01.pdf}
    \caption{$I=1$.}
  \end{subfigure}
  %
  %
  \begin{subfigure}{0.24\textwidth}
    \includegraphics[width=\textwidth]{figures/C/03.pdf}
    \caption{$I=3$.}
  \end{subfigure}
  %
  \begin{subfigure}{0.24\textwidth}
    \includegraphics[width=\textwidth]{figures/C/05.pdf}
    \caption{$I=5$.}
  \end{subfigure}
  %
  %
  %
  \begin{subfigure}{0.24\textwidth}
    \includegraphics[width=\textwidth]{figures/C/100.pdf}
    \caption{$I=100$.}
  \end{subfigure}

  \caption{Landscape of paralleled neural network on CIFAR-10.}
  \label{figure: experiment C}
\end{figure}
}

\comment{
\begin{figure}[h]
  \begin{center}
    \includegraphics[width=0.5\textwidth]{figures/D/vgg_9.pdf}
  \end{center}
  \caption{Performance of VGG-9 on CIFAR-10 with respect to number of paralleled paths.}
  \label{figure: experiment D}
\end{figure}
}

\subsection{Frequency of Hitting Global Minimum}
To further analyze the non-convexity of loss surfaces,
we consider various one-hidden-layer networks, where each network was
trained 100 times using different initialization seeds under the
setting discussed in our synthetic experiments of Section~\ref{section: Visualization of Loss Landscape}.
Since we have the ground-truth global minimum, we record the frequency that SGD hits the global minimum up to a small error $1\times10^{-5}$ after $100,000$ iterations. Table~\ref{table: likelihood reaching global minima}
shows that increasing the number of hidden neurons results in higher hitting rate of global optimality.
This further verifies that the loss surface of one-hidden-layer neural network becomes less non-convex as the width increases.

\begin{table}[h]
  \caption{\label{table: likelihood reaching global minima} Frequency of hitting global minimum by SGD with 100 different initialization seeds.}
  \begin{center}
  \begin{tabular}{c|c||c|c}
  \hline
  \# Hidden Neurons & Hitting Rate & \# Hidden Neurons & Hitting Rate\tabularnewline
  \hline
  \hline
  10 & 2 / 100 & 16 & 30 / 100\tabularnewline
  \hline
  11 & 9 / 100 & 17 & 32 / 100\tabularnewline
  \hline
  12 & 21 / 100 & 18 & 35 / 100\tabularnewline
  \hline
  13 & 24 / 100 & 19 & 52 / 100\tabularnewline
  \hline
  14 & 24 / 100 & 20 & 64 / 100\tabularnewline
  \hline
  15 & 29 / 100 & 21 & 75 / 100\tabularnewline
  \hline
  \end{tabular}
  \end{center}
\end{table}

\section{Conclusions}

In this work, we study the duality gap for two classes of network architectures. For the neural network with \emph{arbitrary activation functions}, multi-branch architecture and $\tau$-hinge loss, we show that the duality gap of both population and empirical risks shrinks to zero as the number of branches increases. Our result sheds light on better understanding the power of over-parametrization and the state-of-the-art architectures, where increasing the number of branches tends to make the loss surface less non-convex. For the neural network with linear activation function and $\ell_2$ loss, we show that the duality gap is zero. Our two results work for \emph{arbitrary depths} and \emph{adversarial data}, while the analytical techniques might be of independent interest to non-convex optimization more broadly.

\noindent \textbf{Acknowledgements.} We would like to thank Jason D. Lee for informing us the Shapley-Folkman lemma, and Maria-Florina Balcan, David P. Woodruff, Xiaofei Shi, and Xingyu Xie for their thoughtful comments on the paper.

\bibliographystyle{abbrv}
\bibliography{reference}

\newpage
\appendix
\section{Supplementary Experiments}
\subsection{Performance of Multi-Branch Architecture}

\begin{wrapfigure}{R}{7cm}
\vspace{-1.9cm}
\includegraphics[width=7cm]{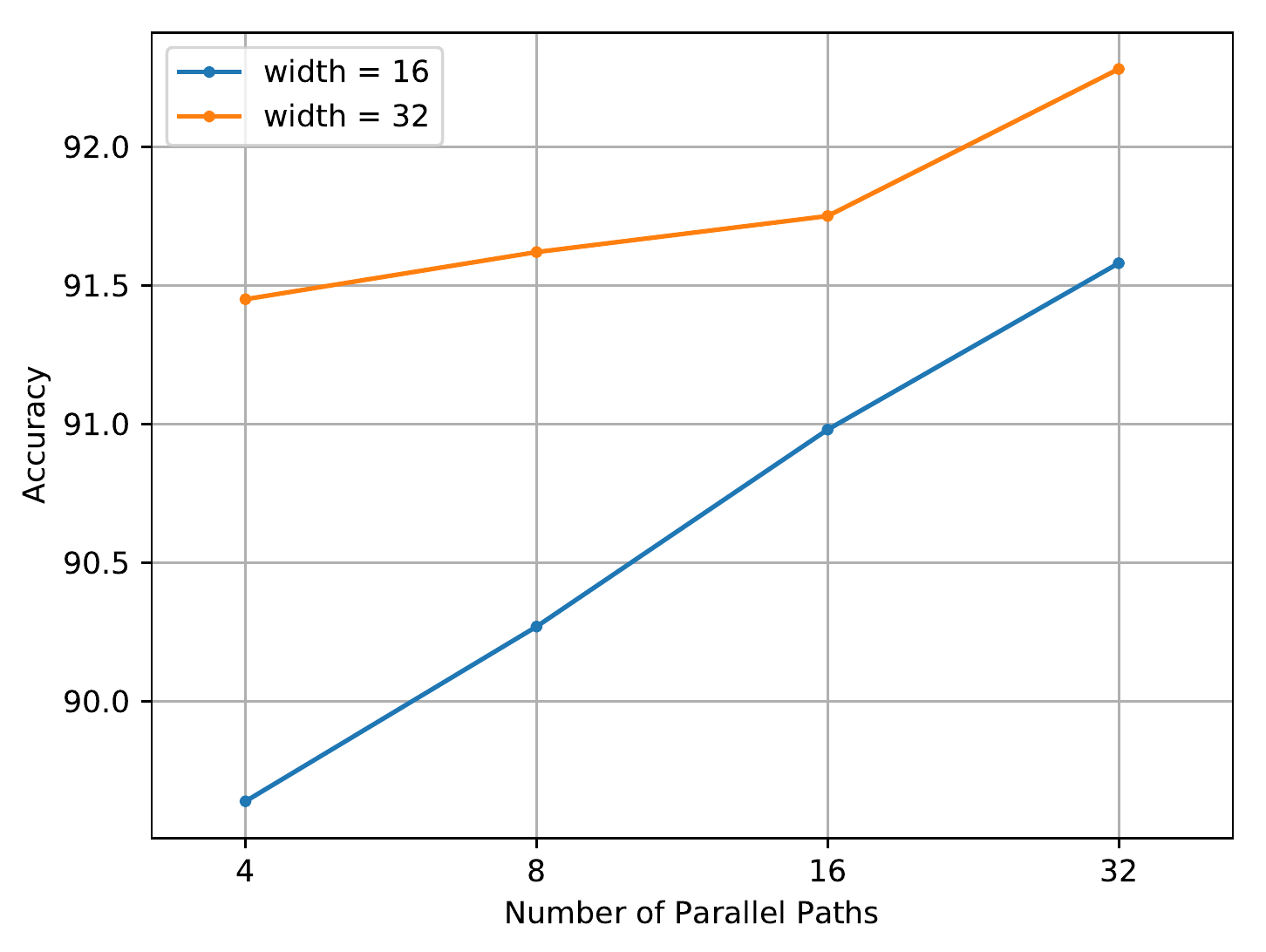}
\caption{\small{Test accuracy of using VGG-9 as the sub-networks in the multi-branch architecture.}}
\label{figure: experiment D}
\vspace{-0.5cm}
\end{wrapfigure}

\comment{
In Figures~\ref{figure: experiment A} and \ref{figure: experiment B}, the landscape becomes flatter when the number of paralleled paths increases, which probably indicates that the paralleled architecture empirically encourages SGD to find flat minima. As pointed out by many recent works~\cite{keskar2016large}, flat local minima typically generalize better. Hence empirically, the blessing of parallel architecture lies not only in convexity, but also generalizability in practice.
}

In this section, we test the classification accuracy of the multi-branch architecture on the CIFAR-10 dataset. We use a 9-layer VGG network~\cite{simonyan2014very} as our sub-network in each branch, which is memory-efficient for practitioners to fit many branches into GPU memory simultaneously. The detailed network setup of VGG-9 is in Table \ref{table: vgg-9 architecture}, where the width of VGG-9 is either 16 or 32. We test the performance of varying numbers of branches in the overall architecture from 4 to 32, with cross-entropy loss. Figure~\ref{figure: experiment D} presents the test accuracy on CIFAR-10 as the number of branches increases. It shows that the test accuracy improves monotonously with the increasing number of parallel branches/paths.

\begin{table}[h]
\caption{Network architecture of VGG-9. Here $w$ is the width of the network, which controls the number of filters in each convolution layer. All convolution layers have a kernel of size 3, and zero padding of size 1. All layers followed by the batch normalization have no bias term. All max pooling layers have a stride of 2.}
\vspace{-0.3cm}
\begin{center}
\begin{tabular}{c||c|c|c|c}
\hline
\textbf{Layer} & \textbf{Weight}            & \textbf{Activation} & \textbf{Input size}  & \textbf{Output size} \tabularnewline \hline\hline
Input          & N / A                      & N / A               & N / A                & $3\times32\times32$  \tabularnewline \hline
Conv1          & $3\times3\times3\times w$  & BN + ReLU           & $3\times32\times32$  & $w\times32\times32$  \tabularnewline \hline
Conv2          & $3\times3\times w\times w$ & BN + ReLU           & $w\times32\times32$  & $w\times32\times32$  \tabularnewline \hline
MaxPool        & N / A                      & N / A               & $w\times32\times32$  & $w\times16\times16$  \tabularnewline \hline
Conv3          & $3\times3\times w\times2w$ & BN + ReLU           & $w\times16\times16$  & $2w\times16\times16$ \tabularnewline \hline
Conv4          & $3\times3\times2w\times2w$ & BN + ReLU           & $2w\times16\times16$ & $2w\times16\times16$ \tabularnewline \hline
MaxPool        & N / A                      & N / A               & $2w\times16\times16$ & $2w\times8\times8$   \tabularnewline \hline
Conv5          & $3\times3\times2w\times4w$ & BN + ReLU           & $2w\times8\times8$   & $4w\times8\times8$   \tabularnewline \hline
Conv6          & $3\times3\times4w\times4w$ & BN + ReLU           & $4w\times8\times8$   & $4w\times8\times8$   \tabularnewline \hline
Conv7          & $3\times3\times4w\times4w$ & BN + ReLU           & $4w\times8\times8$   & $4w\times8\times8$   \tabularnewline \hline
MaxPool        & N / A                      & N / A               & $4w\times8\times8$   & $4w\times4\times4$   \tabularnewline \hline
Flatten        & N / A                      & N / A               & $4w\times4\times4$   & $64w$                \tabularnewline \hline
FC1            & $64w\times4w$              & BN + ReLU           & $64w$                & $4w$                 \tabularnewline \hline
FC2            & $4w\times10$               & Softmax             & $4w$                 & $10$                 \tabularnewline \hline
\end{tabular}
\end{center}
\label{table: vgg-9 architecture}
\vspace{-0.3cm}
\end{table}

\subsection{Strong Duality of Deep Linear Neural Networks \label{subsection: deep linear neural networks}}
We compare the optima of primal problem \eqref{equ: regularized PCA} and dual problem \eqref{equ: dual problem} by numerical experiments for three-layer linear neural networks ($H=3$). The data are generated as follows. We construct the output matrix $\Y\in\R^{100\times 100}$ by drawing the entries of $\Y$ from i.i.d. standard Gaussian distribution and the input matrix $\X\in\R^{100\times 100}$ by the identity matrix. The $d_{\min}$ varies from $5$ to $50$. Both primal and dual problems are solved by numerical algorithms. Given the non-convex nature of primal problem, we rerun the algorithm by multiple initializations and choose the best solution that we obtain. The results are shown in Figure \ref{figure: experiments}. We can easily see that the optima of primal and dual problems almost match. The small gap is due to the numerical inaccuracy.

\begin{figure}[h]
\centering
\includegraphics[width=0.5\textwidth]{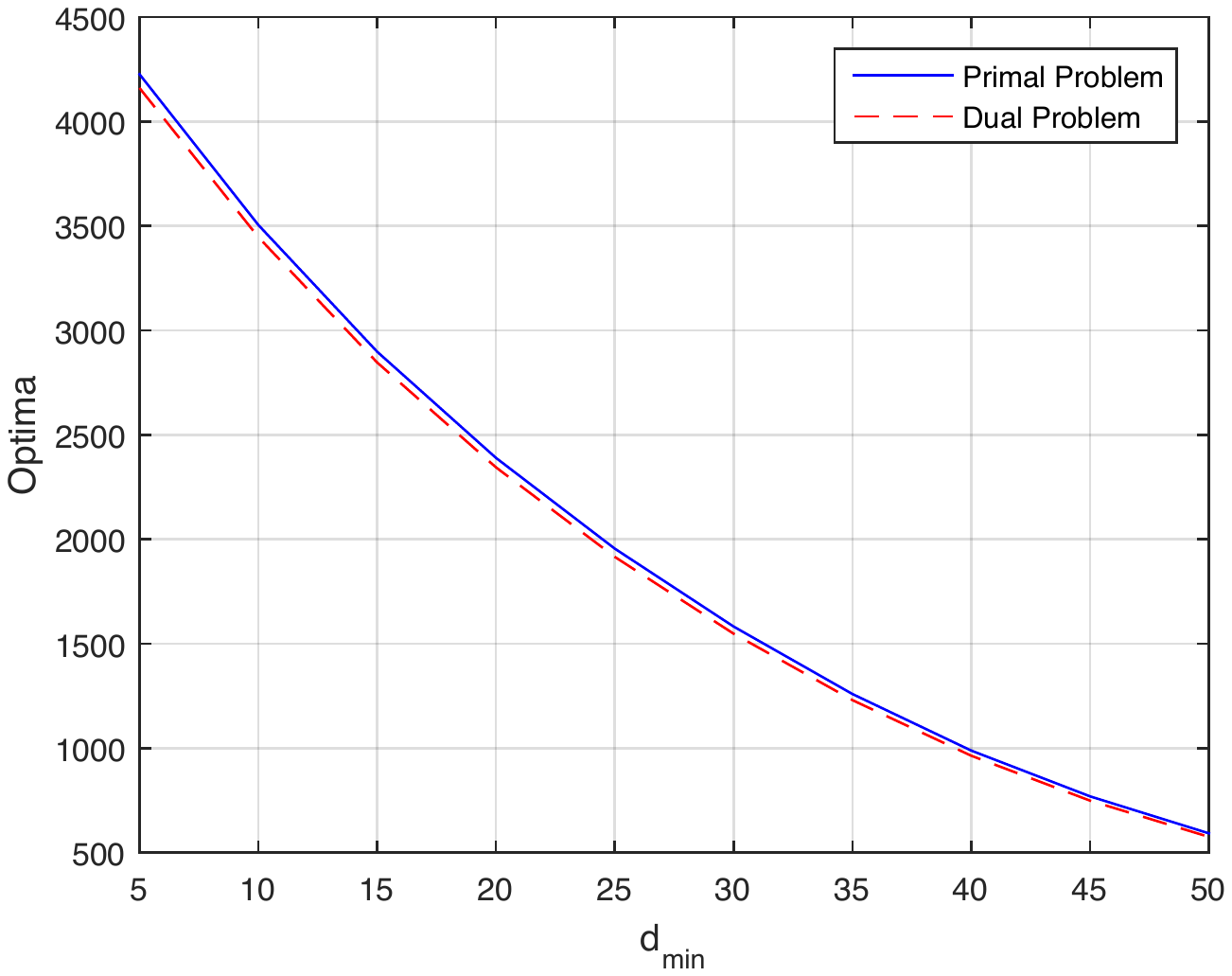}
\caption{Comparison of optima between primal and dual problems.}
\label{figure: experiments}
\end{figure}

We also compare the $\ell_2$ distance between the solution $\W_H^*\W_{H-1}^*\ldots\W_1^*$ of primal problem and the solution $\svd_{d_{\min}}(\tY-\bLambda^*)$ of dual problem in Table \ref{table: comparison of distance}. We see that the solutions are close to each other.
\begin{table}[h]
\caption{Comparison of the $\ell_2$ distance between the solutions of primal and dual problems.}
\label{table: comparison of distance}
\centering
\begin{tabular}{c|cccccccccc}%
\hline
$d_{\min}$ & 5 & 10 & 15 & 20 & 25 & 30 & 35 & 40 & 45 & 50\\\hline
$\ell_2$ distance ($\times 10^{-10}$) & $1.95$ & $1.26$ & $7.89$ & $3.80$ & 3.14 & 1.92 & 1.04 & 3.92 & 6.53 & 8.00\\
\hline
\end{tabular}
\vspace{-0.6cm}
\end{table}

\section{Proofs of Theorem \ref{theorem: deep non-linear neural networks}: Duality Gap of Multi-Branch Neural Networks}


The lower bound $0\le \frac{\inf(\mathbf{P})-\sup(\mathbf{D})}{\Delta_{worst}}$ is obvious by the weak duality. So we only need to prove the upper bound $\frac{\inf(\mathbf{P})-\sup(\mathbf{D})}{\Delta_{worst}}\le \frac{2}{I}$.

Consider the subset of $\mathbb{R}^2$:
\begin{equation*}
\mathcal{Y}_i:=\left\{\y_i\in\R^2:\y_i=\frac{1}{I}\left[h_i(\w_{(i)}),\bbE_{(\x,y)\sim \cP}\left(1-\frac{y\cdot f_i(\w_{(i)};\x)}{\tau}\right)\right],\w_{(i)}\in\cW_i\right\},\quad i\in[I].
\end{equation*}
Define the vector summation
\begin{equation*}
\cY:=\cY_1+\cY_2+...+\cY_I.
\end{equation*}
Since $f_i(\w_{(i)};\x)$ and $h_i(\w_{(i)})$ are continuous w.r.t. $\w_{(i)}$ and $\cW_i$'s are compact, the set
\begin{equation*}
\{(\w_{(i)},h_i(\w_{(i)}),f_i(\w_{(i)};\x)):\w_{(i)}\in\cW_i\}
\end{equation*}
is compact as well. So $\cY$, $\mathsf{conv}(\cY)$, $\cY_i$, and $\mathsf{conv}(\cY_i)$, $i\in[I]$ are all compact sets.
According to the definition of $\cY$ and the standard duality argument~\cite{magnanti1976generalized}, we have
\begin{equation*}
\inf(\mathbf{P})=\min\left\{w:\mbox{there exists }(r,w)\in \cY\mbox{ such that }r\le K\right\},
\end{equation*}
and
\begin{equation*}
\sup(\mathbf{D})=\min\left\{w:\mbox{there exists }(r,w)\in\mathsf{conv}\left(\cY\right)\mbox{ such that }r\le K\right\}.
\end{equation*}

\medskip
\noindent{\textbf{Technique (a): Shapley-Folkman Lemma.}} We are going to apply the following Shapley-Folkman lemma.
\begin{lemma}[Shapley-Folkman, \cite{starr1969quasi}]
\label{lemma: Shapley-Folkman}
Let $\cY_i,i\in[I]$ be a collection of subsets of $\mathbb{R}^m$. Then for every $\y\in\mathsf{conv}(\sum_{i=1}^I \cY_i)$, there is a subset $\cI(\y)\subseteq[I]$ of size at most $m$ such that
\begin{equation*}
\y\in\left[\sum_{i\not\in \cI(\y)} \cY_i+\sum_{i\in \cI(\y)}\mathsf{conv}(\cY_i)\right].
\end{equation*}
\end{lemma}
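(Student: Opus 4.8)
The plan is to deduce the lemma from Carathéodory's theorem applied in a suitably augmented Euclidean space. The first routine step is to record the standard identity $\mathsf{conv}\big(\sum_{i=1}^I\cY_i\big)=\sum_{i=1}^I\mathsf{conv}(\cY_i)$ for Minkowski sums (the inclusion $\subseteq$ is immediate since the right-hand side is convex and contains $\sum_i\cY_i$, and $\supseteq$ follows by expanding a product of convex combinations). Granting this, any $\y\in\mathsf{conv}\big(\sum_i\cY_i\big)$ can be written as $\y=\sum_{i=1}^I\y_i$ with $\y_i\in\mathsf{conv}(\cY_i)$, and by Carathéodory each $\y_i$ is a convex combination $\y_i=\sum_{j=1}^{m+1}\lambda_{ij}\z_{ij}$ of at most $m+1$ points $\z_{ij}\in\cY_i$. (If $I\le m$ the conclusion is vacuous with $\cI(\y)=[I]$, so I may assume $I>m$.)

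The key idea is the lifting trick: to each $\z_{ij}$ I associate the point $(\z_{ij},e_i)\in\R^m\times\R^I$, where $e_i$ is the $i$-th standard basis vector of $\R^I$. Then $(\y,\1)=\sum_{i,j}\lambda_{ij}(\z_{ij},e_i)$, and after dividing by $I$ the point $(\y/I,\1/I)$ is displayed as a convex combination of the finitely many lifted points $(\z_{ij},e_i)$. The crucial observation is that all these lifted points lie in the affine subspace of $\R^{m+I}$ on which the last $I$ coordinates sum to $1$, which has dimension $m+I-1$; applying the affine form of Carathéodory's theorem there, I can re-express $(\y/I,\1/I)$ as a convex combination of at most $m+I$ of the points $(\z_{ij},e_i)$. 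Let $S$ be the set of pairs $(i,j)$ actually used, with coefficients $\mu_{ij}>0$, $\sum_{(i,j)\in S}\mu_{ij}=1$ and $|S|\le m+I$.

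Rescaling by $I$, put $\nu_{ij}:=I\mu_{ij}$; then $\y=\sum_{(i,j)\in S}\nu_{ij}\z_{ij}$ and, reading off the last $I$ coordinates, $\sum_{j:(i,j)\in S}\nu_{ij}=1$ for every $i\in[I]$. In particular $S$ contains at least one pair $(i,j)$ for each $i$, so $|S|\ge I$; hence the number of indices $i$ for which $S$ contains two or more pairs is at most $(m+I)-I=m$. Define $\cI(\y)$ to be exactly this set of ``split'' indices, so $|\cI(\y)|\le m$. For $i\notin\cI(\y)$ there is a unique pair $(i,j_i)\in S$ with $\nu_{ij_i}=1$, so setting $\y_i:=\z_{ij_i}$ gives $\y_i\in\cY_i$; for $i\in\cI(\y)$, $\y_i:=\sum_{j:(i,j)\in S}\nu_{ij}\z_{ij}$ is a genuine convex combination of points of $\cY_i$, hence lies in $\mathsf{conv}(\cY_i)$. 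Since $\y=\sum_i\y_i$, this yields $\y\in\sum_{i\notin\cI(\y)}\cY_i+\sum_{i\in\cI(\y)}\mathsf{conv}(\cY_i)$, as claimed.

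I expect the only genuine subtlety — the main obstacle — to be the dimension count that produces the bound $m$ rather than $m+1$: this is precisely why one lifts to $\R^{m+I}$ and exploits that the lifted points sit in a codimension-one affine subspace, so that Carathéodory delivers $m+I$ points and subtracting the $I$ forced indices leaves $m$. Everything else (the Minkowski-sum/convex-hull identity, the affine version of Carathéodory, the bookkeeping of rescaling) is standard and can be dispatched in a line or two or cited outright; this is, after all, a re-derivation of a classical cited result.
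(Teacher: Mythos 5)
Your argument is correct. Note that the paper itself does not prove this lemma at all: it is stated as a classical result and attributed to Starr's 1969 paper, so there is no in-paper proof to compare against. Your derivation is the standard one — lift each point $\z_{ij}\in\cY_i$ to $(\z_{ij},e_i)\in\R^{m+I}$, observe that all lifted points lie in the codimension-one affine slice where the last $I$ coordinates sum to $1$, apply the affine Carath\'eodory theorem to get a representation with at most $m+I$ atoms, and then use the fact that each index $i$ is forced to contribute at least one atom to conclude that at most $m$ indices are ``split.'' All the individual steps check out: the identity $\mathsf{conv}(\sum_i\cY_i)=\sum_i\mathsf{conv}(\cY_i)$, the dimension count $m+I-1$, the rescaling $\nu_{ij}=I\mu_{ij}$ giving $\sum_j\nu_{ij}=1$ for every $i$, and the pigeonhole bound $|\cI(\y)|\le|S|-I\le m$. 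The only cosmetic quibble is calling the case $I\le m$ ``vacuous'' — it is rather that the claim holds trivially with $\cI(\y)=[I]$ — but that does not affect correctness. This is a legitimate, self-contained re-derivation of the cited result, consistent with how it is used in the paper's proof of Theorem \ref{theorem: deep non-linear neural networks} with $m=2$.
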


We apply Lemma \ref{lemma: Shapley-Folkman} to prove Theorem \ref{theorem: deep non-linear neural networks} with $m=2$. Let $(\overline{r},\overline{w})\in\mathsf{conv}(\cY)$ be such that
\begin{equation*}
\quad \overline{r}\le K,\quad \mbox{and}\quad \overline{w}=\sup(\mathbf{D}).
\end{equation*}
Applying the above Shapley-Folkman lemma to the set $\cY=\sum_{i=1}^I\cY_i$, we have that there are a subset $\overline{\cI}\subseteq[I]$ of size $2$ and vectors
\begin{equation*}
(\overline{r}_i,\overline{w}_i)\in\mathsf{conv}(\cY_i),\ \ i\in\overline{\cI}\qquad\mbox{and}\qquad \overline{\w}_{(i)}\in\cW_i,\ \ i\not\in\overline{\cI},
\end{equation*}
such that
\begin{equation}
\label{equ: step 1}
\frac{1}{I}\sum_{i\not\in\overline{\cI}} h_i(\overline{\w}_{(i)})+\sum_{i\in\overline{\cI}} \overline{r}_i=\overline{r}\le K,
\end{equation}
\begin{equation}
\label{equ: step 2}
\frac{1}{I}\sum_{i\not\in\overline{\cI}} \bbE_{(\x,y)\sim \cP}\left(1-\frac{y\cdot f_i(\overline\w_{(i)};\x)}{\tau}\right)+\sum_{i\in\overline{\cI}}\overline{w}_i=\sup(\mathbf{D}).
\end{equation}
Representing elements of the convex hull of $\cY_i\subseteq\R^2$ by Carath\'{e}odory theorem, we have that for each $i\in\overline{\cI}$, there are vectors $\w_{(i)}^1,\w_{(i)}^2,\w_{(i)}^3\in\cW_i$ and scalars $a_i^1,a_i^2,a_i^3\in\mathbb{R}$ such that
\begin{equation*}
\sum_{j=1}^3 a_i^j=1,\quad a_i^j\ge 0,\ j=1,2,3,
\end{equation*}
\begin{equation*}
\overline{r}_i=\frac{1}{I}\sum_{j=1}^3 a_i^j h_i(\w_{(i)}^j),\qquad \overline{w}_i=\frac{1}{I}\sum_{j=1}^3 a_i^j \bbE_{(\x,y)\sim \cP}\left(1-\frac{y\cdot f_i(\w_{(i)}^j;\x)}{\tau}\right).
\end{equation*}
Recall that we define
\begin{equation}
\label{equ: definition of widehat f}
\widehat{f}_i(\widetilde{\w}):=\inf_{\w_{(i)}\in\cW_i}\left\{\bbE_{(\x,y)\sim \cP} \left(1-\frac{y\cdot f_i(\w_{(i)};\x)}{\tau}\right):h_i(\w_{(i)})\le h_i(\widetilde{\w})\right\},
\end{equation}
\begin{equation*}
\widetilde{f}_i(\widetilde\w):=\inf_{a^j,\w_{(i)}^j\in\cW_i}\left\{\sum_{j=1}^{p_i+1}a^j \bbE_{(\x,y)\sim \cP} \left(1-\frac{y\cdot f_i(\w_{(i)}^j;\x)}{\tau}\right):\widetilde{\w}=\sum_{j=1}^{p_i+1}a^j\w_{(i)}^j,\sum_{j=1}^{p_i+1}a^j=1,a^j\ge 0\right\},
\end{equation*}
and $\Delta_i:=\sup_{\w\in\cW_i}\left\{\widehat{f}_i(\w)-\widetilde{f}_i(\w)\right\}\ge 0$. We have for $i\in\overline \cI$,
\begin{equation}
\label{equ: step 3}
\overline r_i\ge \frac{1}{I}h_i\left(\sum_{j=1}^3 a_i^j\w_{(i)}^j\right),\quad (\text{because $h_i(\cdot)$ is convex})
\end{equation}
and
\begin{equation}
\label{equ: step 4}
\begin{split}
\overline{w}_i&\ge \frac{1}{I}\widetilde{f}_i\left(\sum_{j=1}^3 a_i^j\w_{(i)}^j\right)\quad\text{(by the definition of $\widetilde f_i(\cdot)$)}\\
&\ge \frac{1}{I}\widehat f_i\left(\sum_{j=1}^3 a_i^j\w_{(i)}^j\right)-\frac{1}{I}\Delta_i.\quad\text{(by the definition of $\Delta_i$)}
\end{split}
\end{equation}
Thus, by Eqns. \eqref{equ: step 1} and \eqref{equ: step 3}, we have
\begin{equation}
\label{equ: step 5}
\frac{1}{I}\sum_{i\not\in\overline{\cI}} h_i(\overline{\w}_{(i)})+\frac{1}{I}\sum_{i\in\overline{\cI}} h_i\left(\sum_{j=1}^3 a_i^j\w_{(i)}^j\right)\le K,
\end{equation}
and by Eqns. \eqref{equ: step 2} and \eqref{equ: step 4}, we have
\begin{equation}
\label{equ: step 6}
\bbE_{(\x,y)\sim \cP}\left[ \frac{1}{I} \sum_{i\not\in\overline{\cI}}  \left(1-\frac{y\cdot f_i(\overline\w_{(i)};\x)}{\tau}\right)\right]+\frac{1}{I}\sum_{i\in\overline{\cI}}\widehat f_i\left(\sum_{j=1}^3 a_i^j\w_{(i)}^j\right)\le \sup(\mathbf{D})+\frac{1}{I}\sum_{i\in\overline \cI}\Delta_i.
\end{equation}
Given any $\epsilon>0$ and $i\in\overline \cI$, we can find a vector $\overline\w_{(i)}\in\cW_i$ such that
\begin{equation}
\label{equ: middle inequality}
h_i(\overline \w_{(i)})\le h_i\left(\sum_{j=1}^3 a_i^j\w_{(i)}^j\right)\ \text{and}\ \bbE_{(\x,y)\sim \cP} \left(1-\frac{y\cdot f_i(\overline\w_{(i)};\x)}{\tau}\right)\le \widehat f_i\left(\sum_{j=1}^3 a_i^j\w_{(i)}^j\right)+\epsilon,
\end{equation}
where the first inequality holds because $\cW_i$ is convex and the second inequality holds by the definition \eqref{equ: definition of widehat f} of $\widehat f_i(\cdot)$.
Therefore, Eqns. \eqref{equ: step 5} and \eqref{equ: middle inequality} impliy that
\begin{equation*}
\frac{1}{I}\sum_{i=1}^I h_i(\overline{\w}_{(i)})\le K.
\end{equation*}
Namely, $(\overline{\w}_{(1)},...,\overline{\w}_{(I)})$ is a feasible solution of problem \eqref{equ: problem (P)}. Also, Eqns. \eqref{equ: step 6} and \eqref{equ: middle inequality} yield
\begin{equation*}
\begin{split}
\inf(\mathbf{P})&\le \bbE_{(\x,y)\sim \cP} \left[\frac{1}{I}\sum_{i=1}^I \left(1-\frac{y\cdot f_i(\overline\w_{(i)};\x)}{\tau}\right)\right]\\
&\le \sup(\mathbf{D})+\frac{1}{I}\sum_{i\in\overline{\cI}}(\Delta_i+\epsilon)\\
&\le \sup(\mathbf{D})+\frac{2}{I}\Delta_{worst}+2\epsilon,
\end{split}
\end{equation*}
where the last inequality holds because $|\overline \cI|=2$.
Finally, letting $\epsilon\rightarrow 0$ leads to the desired result.

\section{Proofs of Theorem \ref{theorem: local=global}: Strong Duality of Deep Linear Neural Networks}
\label{section: Strong Duality of Deep Linear Neural Networks}

Let $\widetilde{\Y}=\Y\X^\dag\X$. We note that by Pythagorean theorem, for every $\Y$,
\begin{equation*}
\begin{split}
\frac{1}{2}\|\Y-\W_H\cdots\W_1\X\|_F^2=\frac{1}{2}\|\widetilde\Y-\W_H\cdots\W_1\X\|_F^2+\underbrace{\frac{1}{2}\|\Y-\widetilde\Y\|_F^2}_{\text{independent of }\W_1,\ldots,\W_H}.
\end{split}
\end{equation*}
So we can focus on the following optimization problem instead of problem \eqref{equ: regularized PCA}:
\begin{equation}
\label{equ: regularized PCA app}
\min_{\W_1,...,\W_H} \frac{1}{2}\|\widetilde\Y-\W_H...\W_1\X\|_F^2+\frac{\gamma}{H}\left[ \|\W_1\X\|_{\cS_H}^H + \sum_{i=2}^{H}\|\W_i\|_{\cS_H}^H \right].
\end{equation}

\medskip
\noindent{\textbf{Technique (b): Variational Form.}} Our work is inspired by a variational form of problem \eqref{equ: regularized PCA app} given by the following lemma.

\medskip
\begin{lemma}
\label{lemma: variational form of nuclear norm}
If $(\W_1^{*},\ldots,\W_H^{*})$ is optimal to problem
\begin{equation}
\label{equ: nuclear norm form}
\min_{\W_1,\ldots,\W_H} F(\W_1,\ldots,\W_H) := \frac{1}{2}\|\tY-\W_H\cdots\W_1\X\|_F^2+\gamma\|\W_H\cdots\W_1\X\|_*,
\end{equation}
then $(\W_1^{**},\ldots,\W_H^{**})$ is optimal to problem \eqref{equ: regularized PCA app}, where $\U\bSigma\V^T$ is the skinny SVD of $\W_H^*\W_{H-1}^*\cdots\W_1^*\X$, $\W_i^{**}=[\bSigma^{1/H},\0;\0,\0]\in\R^{d_i\times d_{i-1}}$ for $i=2,3,...,H-1$, $\W_H^{**}=[\U\bSigma^{1/H},\0]\in\R^{d_H\times d_{H-2}}$ and $\W_1^{**}=[\bSigma^{1/H}\V^T;\0]\X^\dag\in\R^{d_1\times d_0}$. Furthermore, problems \eqref{equ: regularized PCA app} and \eqref{equ: nuclear norm form} have the same optimal objective function value.
\comment{
Moreover, the optimum is achieved when for $i=1,2,\ldots,d_{\min}$,
$$
\sigma_i^H(\W_1^*\X)=\sigma_i^H(\W_2^*)=\cdots =\sigma_i^H(\W_H^*)=\sigma_i(\W_H^*\W_{H-1}^*\cdots\W_1^*\X) .
$$
}
\end{lemma}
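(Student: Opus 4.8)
The plan is to establish a two-sided inequality between the optimal values of problems \eqref{equ: regularized PCA app} and \eqref{equ: nuclear norm form}, and then read off the optimal solution of \eqref{equ: regularized PCA app} from that of \eqref{equ: nuclear norm form}. The core algebraic fact I would isolate first is the variational identity
\begin{equation*}
\|\M\|_* = \min_{\W_1\cdots\W_H=\M}\frac{1}{H}\left[\|\W_1\|_{\cS_H}^H + \sum_{i=2}^H\|\W_i\|_{\cS_H}^H\right],
\end{equation*}
applied with $\M = \W_H\cdots\W_1\X$ (absorbing $\X$ into the first factor requires a small amount of care, which I discuss below). Granting this identity, the regularizer $\frac{\gamma}{H}[\|\W_1\X\|_{\cS_H}^H + \sum_{i\ge2}\|\W_i\|_{\cS_H}^H]$ in \eqref{equ: regularized PCA app} is, when minimized over all factorizations producing a fixed product $\M = \W_H\cdots\W_1\X$, exactly equal to $\gamma\|\M\|_*$, while the data-fit term $\frac12\|\tY-\M\|_F^2$ depends only on $\M$. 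Hence minimizing \eqref{equ: regularized PCA app} is the same as first minimizing over factorizations with fixed product (giving $\gamma\|\M\|_*$) and then over $\M$, which is precisely \eqref{equ: nuclear norm form}; this yields equality of optimal values.

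For the variational identity itself I would argue both directions. For the upper bound ($\le$): given any factorization $\W_H\cdots\W_1\X = \M$, the submultiplicativity of Schatten norms together with the AM–GM inequality gives $\|\M\|_* \le \prod_i\|\W_i\|_{\cS_H}\cdot\|\cdot\| \le \frac1H(\|\W_1\X\|_{\cS_H}^H + \sum_{i\ge2}\|\W_i\|_{\cS_H}^H)$ once the factors are rescaled so their Schatten-$H$ norms are all equal (rescaling leaves the product unchanged and does not increase the right-hand side). For the lower bound ($\ge$), i.e. achievability, I would exhibit the factorization built from the skinny SVD $\M = \U\bSigma\V^T$: set the middle factors to $[\bSigma^{1/H},\0;\0,\0]$, the last to $[\U\bSigma^{1/H},\0]$, and the first to $[\bSigma^{1/H}\V^T;\0]\X^\dag$, then check that (i) the product telescopes to $\U\bSigma\V^T\X^\dag\X$, which equals $\M$ because $\M$ lies in the row space of $\X$ (here one uses that $\M=\W_H\cdots\W_1\X$ automatically has rows in $\mathsf{Row}(\X)$, so $\M\X^\dag\X = \M$), and (ii) each of $\|\W_1\X\|_{\cS_H}^H$, $\|\W_i\|_{\cS_H}^H$ equals $\sum_j\sigma_j(\M)$, so the averaged sum of $H$-th powers collapses to $\|\M\|_*$. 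These are the dimensions $d_i\ge d_{\min}$ are for: there is enough room to pad the blocks with zeros, and $d_{\min}\le\min\{d_0,d_H,n\}$ guarantees $\bSigma$ (of size at most $d_{\min}$) fits.

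The main obstacle I anticipate is the bookkeeping around $\X$ and the pseudo-inverse in the first-layer factor: the clean identity is really about the product of the $\W_i$'s, and one must verify carefully that $\|\W_1^{**}\X\|_{\cS_H} = \|[\bSigma^{1/H}\V^T;\0]\X^\dag\X\|_{\cS_H}$ still equals $\|\bSigma^{1/H}\|_{\cS_H}$ (this uses $\V^T\X^\dag\X = \V^T$ since $\mathsf{Row}(\M)\subseteq\mathsf{Row}(\X)$ forces $\V$'s columns to lie in $\mathsf{Row}(\X)$), and that the product $\W_H^{**}\cdots\W_1^{**}\X$ reproduces $\W_H^*\cdots\W_1^*\X$ exactly rather than merely up to the projection onto $\mathsf{Row}(\X)$. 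Once this is in place, the equality of objective values is immediate, and the explicit construction in the statement is simply the achiever of the lower bound instantiated at the optimal product matrix, completing the proof.
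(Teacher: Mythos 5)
Your proposal is correct and follows essentially the same route as the paper: the upper bound via the generalized H\"older inequality for Schatten norms combined with AM--GM, and the matching lower bound via the explicit skinny-SVD factorization $\W_i^{**}$ that preserves the product $\W_H\cdots\W_1\X$ and attains $\gamma\|\cdot\|_*$ in the regularizer. Your extra care about $\M\X^\dag\X=\M$ and $\V^T\X^\dag\X=\V^T$ is a point the paper glosses over but handles implicitly in the same way; the rescaling step you mention for the upper bound is unnecessary since AM--GM applies directly.
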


\begin{proof}[Proof of Lemma \ref{lemma: variational form of nuclear norm}]
Let $\U\bSigma\V^T$ be the skinny SVD of matrix $\W_H\W_{H-1}\cdots\W_1\X=:\Z$. We notice that
\begin{equation*}
\begin{split}
\|\Z\|_* & = \|\W_H\W_{H-1}\cdots\W_1\X\|_*\\
& \leq \|\W_1\X\|_{\cS_H}\prod_{i=2}^H \|\W_i\|_{\cS_H} \quad \text{(by the generalized H$\ddot{\textup{o}}$lder's inequality)}
\\
& \leq \frac{1}{H} \left[ \|\W_1\X\|_{\cS_H}^H + \sum_{i=2}^{H}\|\W_i\|_{\cS_H}^H \right].\quad(\text{by the inequality of mean} )
\end{split}
\end{equation*}
Hence, on one hand, for every $(\W_1, \ldots, \W_H)$,
\begin{equation}
\begin{split}
\min_{\W_1,\ldots,\W_H} F(\W_1,\ldots,\W_H)
&\leq \frac{1}{2}\|\tY-\W_H\cdots\W_1\X\|_F^2+\gamma\|\W_H\W_{H-1}\cdots\W_1\X\|_* \\
&\leq \frac{1}{2}\|\tY-\W_H\cdots\W_1\X\|_F^2+\frac{\gamma}{H} \left[ \|\W_1\X\|_{\cS_H}^H + \sum_{i=2}^{H}\|\W_i\|_{\cS_H}^H \right],
\nonumber
\end{split}
\end{equation}
which yields
\begin{equation*}
\min_{\W_1,\ldots,\W_H} F(\W_1,\ldots,\W_H)
\leq
\min_{\W_1,\ldots,\W_H} \frac{1}{2}\|\tY-\W_H\cdots\W_1\X\|_F^2+\frac{\gamma}{H} \left[ \|\W_1\X\|_{\cS_H}^H + \sum_{i=2}^{H}\|\W_i\|_{\cS_H}^H \right].
\end{equation*}
On the other hand, suppose $(\W_1^*,\ldots,\W_H^*)$ is optimal to problem \eqref{equ: nuclear norm form}, and let $\U\bSigma\V^T$ be the skinny SVD of matrix $\W^*_H\W^*_{H-1}\cdots\W^*_1\X$. We choose $(\W_1^{**},\ldots,\W_H^{**})$ such that
$$
\W^{**}_H = [\U\bSigma^{\frac{1}{H}}, \0], \;\W^{**}_1\X = [\bSigma^{\frac{1}{H}}\V^T; \0], \; \W^{**}_i = [\bSigma^{\frac{1}{H}}, \0; \0, \0], \; i=2, \ldots, H-1.
$$
We pad $\0$ around $\W_i^{**}$ so as to adapt to the dimensionality of each $\W_i^{**}$.
Notice that
\begin{equation}
\begin{split}
\|\W^*_H\W^*_{H-1}\cdots\W^*_1\X\|_*
&= \|\W^{**}_H\W^{**}_{H-1}\cdots\W^{**}_1\X\|_* \\
&= \frac{1}{H} \left[ \|\W^{**}_1\X\|_{\cS_H}^H + \sum_{i=2}^{H}\|\W^{**}_i\|_{\cS_H}^H \right].
\nonumber
\end{split}
\end{equation}
Since $\W^*_H\W^*_{H-1}\cdots\W^*_1\X=\W^{**}_H\W^{**}_{H-1}\cdots\W^{**}_1\X$, for every $\tY$,
$$
\|\tY - \W^*_H\W^*_{H-1}\cdots\W^*_1\X\|_F = \|\tY - \W^{**}_H\W^{**}_{H-1}\cdots\W^{**}_1\X\|_F.
$$
Hence
\begin{equation*}
\begin{split}
& \min_{\W_1, \ldots, \W_H} F(\W_1,\ldots,\W_H) = F(\W^*_1,\ldots, \W^*_H) = F(\W^{**}_1,\ldots, \W^{**}_H)\\
& = \frac{1}{2}\|\tY-\W^{**}_H\cdots\W^{**}_1\X\|_F^2+\frac{\gamma}{H} \left[ \|\W^{**}_1\X\|_{\cS_H}^H + \sum_{i=2}^{H}\|\W^{**}_i\|_{\cS_H}^H \right] \\
& \geq \min_{\W_1,\ldots,\W_H} \frac{1}{2}\|\tY-\W_H\cdots\W_1\X\|_F^2+\frac{\gamma}{H} \left[ \|\W_1\X\|_{\cS_H}^H + \sum_{i=2}^{H}\|\W_i\|_{\cS_H}^H \right],
\end{split}
\end{equation*}
which yields the other direction of the inequality and hence completes the proof.
\end{proof}

\medskip
\noindent{\textbf{Technique (c): Reduction to Low-Rank Approximation.}} We now reduce problem \eqref{equ: nuclear norm form} to the classic problem of low-rank approximation of the form $\min_{\W_1,\ldots,\W_H} \frac{1}{2}\|\widehat{\Y}-\W_H\cdots\W_1\X\|_F^2$, which has the following nice properties.

\medskip
\begin{lemma}
\label{lemma: local global}
For any $\widehat \Y\in\mathsf{Row}(\X)$, every global minimum $(\W_1^*,\ldots,\W_H^*)$ of function $$f(\W_1,\ldots,\W_H)=\frac{1}{2}\|\widehat \Y-\W_H\cdots\W_1\X\|_F^2$$ obeys $\W_H^*\cdots\W_1^*\X=\svd_{d_{\min}}({\widehat\Y})$. Here $\widehat \Y\in\mathsf{Row}(\X)$ means the row vectors of $\widehat \Y$ belongs to the row space of $\X$.
\end{lemma}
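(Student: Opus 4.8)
The plan is to reduce the lemma to the Eckart--Young--Mirsky characterization of best low-rank approximation, by first identifying exactly the set of matrices that can arise as $\W_H\cdots\W_1\X$. \emph{Step 1 (the product is always low rank).} For any $\W_1,\ldots,\W_H$ of the prescribed dimensions, $\rank(\W_H\cdots\W_1)\le\min\{d_0,d_1,\ldots,d_H\}$, and this minimum equals $d_{\min}$ by the hypothesis $d_{\min}\le\min\{d_0,d_H,n\}$ (so the overall smallest dimension is attained among $d_1,\ldots,d_{H-1}$). Hence $\rank(\W_H\cdots\W_1\X)\le d_{\min}$, and trivially $\mathsf{Row}(\W_H\cdots\W_1\X)\subseteq\mathsf{Row}(\X)$.

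\emph{Step 2 (every such matrix is realizable).} Conversely, fix any $\M$ with $r:=\rank(\M)\le d_{\min}$ and $\mathsf{Row}(\M)\subseteq\mathsf{Row}(\X)$. Since $\X^\dag\X$ is the orthogonal projector onto $\mathsf{Row}(\X)$, we have $\M=\M\X^\dag\X$. Write the compact SVD $\M\X^\dag=\U\D\V^T$ with $\U\in\R^{d_H\times r}$, $\D\in\R^{r\times r}$ diagonal, $\V\in\R^{d_0\times r}$, and set $\W_1=[\V^T;\0]\in\R^{d_1\times d_0}$, $\W_i=[\I_r,\0;\0,\0]\in\R^{d_i\times d_{i-1}}$ for $2\le i\le H-1$, and $\W_H=[\U\D,\0]\in\R^{d_H\times d_{H-1}}$. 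The zero-padding is legitimate precisely because $r\le d_{\min}$ bounds every intermediate dimension and $r\le\min\{d_0,d_H\}$; a direct multiplication collapses the identity blocks and gives $\W_H\cdots\W_1\X=\U\D\V^T\X=\M\X^\dag\X=\M$.

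\emph{Step 3 (reduction and conclusion).} Steps 1--2 identify $\{\W_H\cdots\W_1\X:\W_i\in\R^{d_i\times d_{i-1}}\}$ with $\{\M:\rank(\M)\le d_{\min},\ \mathsf{Row}(\M)\subseteq\mathsf{Row}(\X)\}$, a closed set on which $\frac{1}{2}\|\hY-\M\|_F^2$ is coercive, so minimizing $f$ amounts to minimizing $\frac{1}{2}\|\hY-\M\|_F^2$ over that constrained set. By Eckart--Young--Mirsky the minimizer of $\frac{1}{2}\|\hY-\M\|_F^2$ over all matrices of rank $\le d_{\min}$ is $\svd_{d_{\min}}(\hY)$; since $\hY\in\mathsf{Row}(\X)$, its rows lie in $\mathsf{Row}(\hY)\subseteq\mathsf{Row}(\X)$, so $\svd_{d_{\min}}(\hY)$ is feasible for the constrained problem and hence optimal there as well. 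Consequently $\W_H^*\cdots\W_1^*\X$, being a constrained minimizer for every global minimizer $(\W_1^*,\ldots,\W_H^*)$ of $f$, must equal $\svd_{d_{\min}}(\hY)$.

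The only routine-but-careful part is the dimension bookkeeping in Step~2; the genuine subtlety is the \emph{uniqueness} of the rank-$d_{\min}$ minimizer needed to pin down $\W_H^*\cdots\W_1^*\X$ for \emph{every} global minimum. Eckart--Young--Mirsky grants this whenever $\sigma_{d_{\min}}(\hY)>\sigma_{d_{\min}+1}(\hY)$ (in particular whenever $\rank(\hY)\le d_{\min}$), which is the regime in which the lemma is applied inside the proof of Theorem~\ref{theorem: local=global}, where $\hY=\tY-\bLambda^*$ and the bound $\|\bLambda^*\|\le\gamma<\sigma_{\min}(\tY)$ controls the relevant part of the spectrum.
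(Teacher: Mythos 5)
Your proof is correct and takes essentially the same route as the paper's: the paper reduces the problem in one line to the unconstrained low-rank approximation $\min_{\rank(\Z)\le d_{\min}}\frac{1}{2}\|\widehat\Y-\Z\|_F^2$ with closed-form solution $\svd_{d_{\min}}(\widehat\Y)$, leaving the realizability and dimension bookkeeping of your Steps 1--2 implicit. The uniqueness caveat you raise is also acknowledged by the paper, which relegates it to a footnote treating $\svd_{d_{\min}}(\widehat\Y)$ as an abuse of notation whose non-uniqueness causes no issue in the subsequent developments.
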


\begin{proof}[Proof of Lemma \ref{lemma: local global}]
Note that the optimal solution to $\min_{\W_H,\ldots,\W_1} \frac{1}{2}\|\widehat{\Y}-\W_H\cdots\W_1\X\|_F^2$ is equal to the optimal solution to the low-rank approximation problem $\min_{\rank(\Z)\le d_{\min}} \frac{1}{2}\|\widehat{\Y}-\Z\|_F^2$ when $\widehat \Y\in\mathsf{Row}(\X)$, which has a closed-form solution $\svd_{d_{\min}}({\widehat\Y})$.\footnote{Note that the low-rank approximation problem might have non-unique solution. However, we will use in this paper the abuse of language $\svd_{d_{\min}}({\widehat\Y})$ as the non-uniqueness issue does not lead to any issue in our developments.}
\end{proof}

We now reduce $F(\W_1,\ldots,\W_H)$ to the form of $\frac{1}{2}\|\widehat{\Y}-\W_H\cdots\W_1\X\|_F^2$ for some $\widehat \Y$ plus an extra additive term that is independent of $(\W_1,\ldots,\W_H)$. To see this, denote by $K(\cdot)=\gamma\|\cdot\|_*$. We have
\begin{equation*}
\begin{split}
\quad F(\W_1,\ldots,\W_H)&=\frac{1}{2}\|\tY-\W_H\cdots\W_1\X\|_F^2+K^{**}(\W_H\cdots\W_1\X)\\
&=\max_\bLambda\frac{1}{2}\|\tY-\W_H\cdots\W_1\X\|_F^2+\langle\bLambda,\W_H\cdots\W_1\X\rangle-K^*(\bLambda)\\
&=\max_\bLambda \frac{1}{2}\|\tY-\bLambda-\W_H\cdots\W_1\X\|_F^2-\frac{1}{2}\|\bLambda\|_F^2-K^*(\bLambda)+\langle\tY,\bLambda\rangle\\
&=: \max_\bLambda L(\W_1,\ldots,\W_H,\bLambda),
\end{split}
\end{equation*}
where we define $L(\W_1,\ldots,\W_H,\bLambda):=\frac{1}{2}\|\tY-\bLambda-\W_H\cdots\W_1\X\|_F^2-\frac{1}{2}\|\bLambda\|_F^2-K^*(\bLambda)+\langle\tY,\bLambda\rangle$ as the Lagrangian of problem \eqref{equ: nuclear norm form}. The first equality holds because $K(\cdot)$ is closed and convex w.r.t. the argument $\W_H\cdots\W_1\X$ so $K(\cdot)=K^{**}(\cdot)$, and the second equality is by the definition of conjugate function. One can check that $L(\W_1,\ldots,\W_H,\bLambda)=\min_\M L'(\W_1,\ldots,\W_H,\M,\bLambda)$, where $L'(\W_1,\ldots,\W_H,\M,\bLambda)$ is the Lagrangian of the constraint optimization problem $\min_{\W_1,\ldots,\W_H,\M} \frac{1}{2}\|\tY-\W_H\cdots\W_1\X\|_F^2+K(\M),\ \mbox{s.t.}\ \M=\W_H\cdots\W_1\X$. With a little abuse of notation, we call $L(\A,\B,\bLambda)$ the Lagrangian of the unconstrained problem \eqref{equ: nuclear norm form} as well.

The remaining analysis is to choose a proper $\bLambda^*\in\mathsf{Row}(\X)$ such that $(\W_1^*,\ldots,\W_H^*,\bLambda^*)$ is a primal-dual saddle point of $L(\W_1,\ldots,\W_H,\bLambda)$, so that the problem $\min_{\W_1,\ldots,\W_H} L(\W_1,\ldots,\W_H,\bLambda^*)$ and problem \eqref{equ: nuclear norm form} have the same optimal solution $(\W_1^*,\ldots,\W_H^*)$.
For this, we introduce the following condition, and later we will show that the condition holds.

\medskip
\begin{condition}
\label{lemma: Lagrangian multiplier}
For a solution $(\W_1^*,\ldots,\W_H^*)$ to optimization problem \eqref{equ: nuclear norm form}, there exists an $$\bLambda^*\in\partial_{\Z} K(\Z)|_{\Z=\W_H^*\cdots\W_1^*\X}\cap\mathsf{Row}(\X)$$ such that
\begin{equation}
\begin{split}
\label{equ: Lambda condition}
\W_{i+1}^{*T}\cdots\W_H^{*T}(\W_H^*\cdots\W_1^*\X+\bLambda^*-\widetilde\Y)\X^T\W_1^{*T}\cdots\W_{i-1}^{*T}&=\0,\quad i=2,\ldots,H-1,\\
 \W_2^{*T}\cdots\W_H^{*T}(\W_H^*\W_{H-1}^*\cdots\W_1^*\X+\bLambda^*-\widetilde\Y)\X^T&=\0,\\
 (\W_H^*\W_{H-1}^*\cdots\W_1^*\X+\bLambda^*-\widetilde\Y)\X^T\W_1^{*T}\cdots\W_{H-1}^{*T}&=\0.\\
\end{split}
\end{equation}
\end{condition}

\noindent
We note that if we set $\bLambda$ to be the $\bLambda^*$ in \eqref{equ: Lambda condition}, then $\nabla_{\W_i} L(\W_1^*,\ldots,\W_H^*,\bLambda^*)=\0$ for every $i$. So $(\W_1^*,\ldots,\W_H^*)$ is either a saddle point, a local minimizer, or a global minimizer of $L(\W_1,\ldots,\W_H,\bLambda^*)$ as a function of $(\W_1,\ldots,\W_H)$ for the fixed $\bLambda^*$. The following lemma states that if it is a global minimizer, then strong duality holds.

\medskip
\begin{lemma}
\label{lemma: f and k local minimum}
Let $(\W_1^*,\ldots,\W_H^*)$ be a global minimizer of $F(\W_1,\ldots,\W_H)$. If there exists a dual certificate $\bLambda^*$ satisfying Condition \ref{lemma: Lagrangian multiplier} and the pair $(\W_1^*,\ldots,\W_H^*)$ is a global minimizer of $L(\W_1,\ldots,\W_H,\bLambda^*)$ for the fixed $\bLambda^*$, then strong duality holds. Moreover, we have the relation $\W_H^*\cdots\W_1^*\X=\textup{\textsf{svd}}_{d_{\min}}(\tY-\bLambda^*)$.
\end{lemma}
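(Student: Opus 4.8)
The plan is to show that the existence of the dual certificate $\bLambda^*$ from Condition \ref{lemma: Lagrangian multiplier}, combined with the hypothesis that $(\W_1^*,\ldots,\W_H^*)$ globally minimizes $L(\cdot,\ldots,\cdot,\bLambda^*)$, produces a primal-dual saddle point of the Lagrangian, which is the classical route to strong duality. First I would verify that $(\W_1^*,\ldots,\W_H^*,\bLambda^*)$ is a saddle point of $L(\W_1,\ldots,\W_H,\bLambda)$: the hypothesis directly gives $L(\W_1^*,\ldots,\W_H^*,\bLambda^*)\le L(\W_1,\ldots,\W_H,\bLambda^*)$ for all $(\W_1,\ldots,\W_H)$, so it remains to check $L(\W_1^*,\ldots,\W_H^*,\bLambda)\le L(\W_1^*,\ldots,\W_H^*,\bLambda^*)$ for all $\bLambda$. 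Writing $\Z^*=\W_H^*\cdots\W_1^*\X$, the map $\bLambda\mapsto L(\W_1^*,\ldots,\W_H^*,\bLambda)=\frac12\|\tY-\Z^*-\bLambda\|_F^2-\frac12\|\bLambda\|_F^2-K^*(\bLambda)+\langle\tY,\bLambda\rangle$ simplifies (the $\frac12\|\bLambda\|_F^2$ terms cancel) to $\langle\bLambda,\Z^*\rangle - K^*(\bLambda) + \text{const}$, which is exactly $-$(the function whose sup defines $K^{**}(\Z^*)$). Since $\bLambda^*\in\partial K(\Z^*)$ is precisely the maximizer in the Fenchel duality $K(\Z^*)=\langle\bLambda^*,\Z^*\rangle-K^*(\bLambda^*)$, the inequality holds. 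Hence we have a genuine saddle point.

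Next I would invoke the standard min-max/saddle-point theorem: the existence of a saddle point $(\W_1^*,\ldots,\W_H^*,\bLambda^*)$ of $L$ implies $\min_{\W_1,\ldots,\W_H}\max_\bLambda L = \max_\bLambda \min_{\W_1,\ldots,\W_H} L$, with the common value attained at the saddle point. The left-hand side equals $\min_{\W_1,\ldots,\W_H} F(\W_1,\ldots,\W_H)$ by the derivation $F=\max_\bLambda L$ already established in the excerpt, so this equals the primal optimum of \eqref{equ: nuclear norm form} (and hence, via Lemma \ref{lemma: variational form of nuclear norm}, of \eqref{equ: regularized PCA app}). The right-hand side is the dual problem $\max_\bLambda \mathcal{Q}(\bLambda)$ where $\mathcal{Q}(\bLambda):=\min_{\W_1,\ldots,\W_H} L(\W_1,\ldots,\W_H,\bLambda)$; it remains to identify this dual with the concrete convex program \eqref{equ: dual problem}. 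For a fixed $\bLambda$, $\min_{\W_1,\ldots,\W_H}\frac12\|\tY-\bLambda-\W_H\cdots\W_1\X\|_F^2$ is a low-rank approximation problem (rank at most $d_{\min}$), so by Lemma \ref{lemma: local global} its value is $\frac12\|\tY-\bLambda\|_F^2-\frac12\|\svd_{d_{\min}}(\tY-\bLambda)\|_F^2 = \frac12\|\tY-\bLambda\|_{d_{\min}}^2$ in the notation of the theorem — wait, more precisely $\frac12\sum_{i>d_{\min}}\sigma_i^2(\tY-\bLambda)$, which after adding $-K^*(\bLambda)+\langle\tY,\bLambda\rangle-\frac12\|\bLambda\|_F^2$ and using $K^*(\bLambda)=0$ if $\|\bLambda\|\le\gamma$ and $+\infty$ otherwise (conjugate of the scaled nuclear norm is the indicator of the scaled operator-norm ball) rearranges, via $\|\tY-\bLambda\|_F^2-\|\bLambda\|_F^2+2\langle\tY,\bLambda\rangle=\|\tY\|_F^2$ and $\|\Y\|_F^2=\|\tY\|_F^2+\|\Y-\tY\|_F^2$, into $-\frac12\|\tY-\bLambda\|_{d_{\min}}^2+\frac12\|\Y\|_F^2$ over $\{\|\bLambda\|\le\gamma\}$; the row-space constraint $\mathsf{Row}(\bLambda)\subseteq\mathsf{Row}(\X)$ enters because restricting to such $\bLambda$ does not decrease the value and the certificate lives there.

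Finally I would establish the last sentence, $\W_H^*\cdots\W_1^*\X=\svd_{d_{\min}}(\tY-\bLambda^*)$. Since $(\W_1^*,\ldots,\W_H^*)$ globally minimizes $L(\W_1,\ldots,\W_H,\bLambda^*)$, and $L$ as a function of $(\W_1,\ldots,\W_H)$ equals $\frac12\|(\tY-\bLambda^*)-\W_H\cdots\W_1\X\|_F^2$ plus terms independent of the $\W_i$'s, the tuple $(\W_1^*,\ldots,\W_H^*)$ is a global minimizer of the low-rank approximation objective $\frac12\|(\tY-\bLambda^*)-\W_H\cdots\W_1\X\|_F^2$; noting $\tY-\bLambda^*\in\mathsf{Row}(\X)$ (both $\tY$ and $\bLambda^*$ lie in $\mathsf{Row}(\X)$), Lemma \ref{lemma: local global} gives $\W_H^*\cdots\W_1^*\X=\svd_{d_{\min}}(\tY-\bLambda^*)$. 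The main obstacle I anticipate is not any single step in isolation but assembling the chain carefully: in particular, justifying that the unconstrained inner minimization over $(\W_1,\ldots,\W_H)$ genuinely realizes the rank-$d_{\min}$ low-rank approximation value (this uses that $d_{\min}=\min\{d_1,\ldots,d_{H-1}\}$ bounds the rank of any product $\W_H\cdots\W_1$ and that the factorization can attain any such matrix), and cleanly computing the conjugate $K^*$ of the nuclear norm and carrying out the Frobenius-norm bookkeeping so that $\mathcal{Q}(\bLambda)$ matches \eqref{equ: dual problem} exactly. The hypotheses $\gamma<\sigma_{\min}(\tY)$ and $H\ge 2$ are not needed for \emph{this} lemma — they are used elsewhere to actually construct $\bLambda^*$ and verify Condition \ref{lemma: Lagrangian multiplier} and the global-minimizer hypothesis — so here I would simply take those as given.
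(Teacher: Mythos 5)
Your proposal is correct and follows essentially the same route as the paper: establish that $(\W_1^*,\ldots,\W_H^*,\bLambda^*)$ is a primal--dual saddle point (your Fenchel--Young equality argument for the $\bLambda$-maximization is the same fact the paper phrases via $\W_H^*\cdots\W_1^*\X\in\partial K^*(\bLambda^*)$ and concavity in $\bLambda$), deduce $\min\max=\max\min$, and then identify $\W_H^*\cdots\W_1^*\X$ with $\textup{\textsf{svd}}_{d_{\min}}(\tY-\bLambda^*)$ via Lemma \ref{lemma: local global} applied to $\tY-\bLambda^*\in\mathsf{Row}(\X)$. Your additional explicit computation of the dual program is material the paper defers to a separate appendix section and is not needed for the lemma itself, but it is consistent with the paper's derivation.
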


\begin{proof}[Proof of Lemma \ref{lemma: f and k local minimum}]
By the assumption of the lemma, $(\W_1^*,\ldots,\W_H^*)$ is a global minimizer of $$L(\W_1,\ldots,\W_H,\bLambda^*)=\frac{1}{2}\|\tY-\bLambda^*-\W_H\W_{H-1}\cdots\W_1\X\|_F^2+c(\bLambda^*),$$ where $c(\bLambda^*)$ is a function of $\bLambda^*$ that is independent of $\W_i$ for all $i$'s.
Namely, $(\W_1^*,\ldots,\W_H^*)$ globally minimizes $L(\W_1,\ldots,\W_H,\bLambda)$ when $\bLambda$ is fixed to $\bLambda^*$. Furthermore, $\bLambda^*\in\partial_{\Z} K(\Z)|_{\Z=\W_H^*\ldots\W_1^*\X}$ implies that $\W_H^*\W_{H-1}^*\cdots\W_1^*\X\in\partial_\bLambda K^*(\bLambda)|_{\bLambda=\bLambda^*}$ by the convexity of function $K(\cdot)$, meaning that $\0\in \partial_\bLambda L(\W_1^*,\ldots,\W_H^*,\bLambda)$. So $\bLambda^*=\argmax_{\bLambda} L(\W_1^*,\ldots,\W_H^*,\bLambda)$ due to the concavity of function $L(\W_1^*,\ldots,\W_H^*,\bLambda)$ w.r.t. variable $\bLambda$. Thus $(\W_1^*,\ldots,\W_H^*,\bLambda^*)$ is a primal-dual saddle point of $L(\W_1,\ldots,\W_H,\bLambda)$.

We now prove the strong duality. By the fact that $F(\W_1^*,\ldots,\W_H^*)=\max_\bLambda L(\W_1^*,\ldots,\W_H^*,\bLambda)$ and that $\bLambda^*=\argmax_{\bLambda} L(\W_1^*,\ldots,\W_H^*,\bLambda)$, for every $\W_1,\ldots,\W_H$, we have
\begin{equation*}
F(\W_1^*,\ldots,\W_H^*)=L(\W_1^*,\ldots,\W_H^*,\bLambda^*)\le L(\W_1,\ldots,\W_H,\bLambda^*),
\end{equation*}
where the inequality holds because $(\W_1^*,\ldots,\W_H^*,\bLambda^*)$ is a primal-dual saddle point of $L$. Notice that we also have
\begin{equation*}
\begin{split}
\min_{\W_1,\ldots,\W_H}\max_\bLambda L(\W_1,\ldots,\W_H,\bLambda)&=F(\W_1^*,\ldots\W_H^*)\\&\le \min_{\W_1,\ldots,\W_H} L(\W_1,\ldots,\W_H,\bLambda^*)\\&\le\max_\bLambda\min_{\W_1,\ldots,\W_H} L(\W_1,\ldots,\W_H,\bLambda).
\end{split}
\end{equation*}
On the other hand, by weak duality,
\begin{equation*}
\min_{\W_1,\ldots,\W_H}\max_\bLambda L(\W_1,\ldots,\W_H,\bLambda)\ge \max_\bLambda\min_{\W_1,\ldots,\W_H} L(\W_1,\ldots,\W_H,\bLambda).
\end{equation*}
Therefore, $$\min_{\W_1,\ldots,\W_H}\max_\bLambda L(\W_1,\ldots,\W_H,\bLambda)=\max_\bLambda\min_{\W_1,\ldots,\W_H} L(\W_1,\ldots,\W_H,\bLambda),$$ i.e., strong duality holds. Hence,
\begin{equation*}
\begin{split}
\W_H^*&\W_{H-1}^*\cdots\W_1^*=\argmin_{\W_H\W_{H-1}\ldots\W_1} L(\W_1,\ldots,\W_H,\bLambda^*)\\
&=\argmin_{\W_H\W_{H-1}\cdots\W_1} \frac{1}{2}\|\tY-\bLambda^*-\W_H\W_{H-1}\cdots\W_1\X\|_F^2-\frac{1}{2}\|\bLambda^*\|_F^2-K^*(\bLambda^*)+\langle\tY,\bLambda^*\rangle\\
&=\argmin_{\W_H\W_{H-1}\cdots\W_1} \frac{1}{2}\|\tY-\bLambda^*-\W_H\W_{H-1}\cdots\W_1\X\|_F^2\\
&=\textsf{svd}_{d_{\min}}(\tY-\bLambda^*).
\end{split}
\end{equation*}
The proof of Lemma \ref{lemma: f and k local minimum} is completed.
\end{proof}

\noindent{\textbf{Technique (d): Dual Certificate.}} We now construct dual certificate $\bLambda^*$ such that all of conditions in Lemma \ref{lemma: f and k local minimum} hold.
We note that $\bLambda^*$ should satisfy the followings by Lemma \ref{lemma: f and k local minimum}:
\begin{flalign}
\label{equ: dual conditions for general problem}
\begin{split}
&\mbox{(a)} \quad \bLambda^*\in\partial K(\W_H^*\W_{H-1}^*\cdots\W_1^*\X)\cap\mathsf{Row}(\X);\qquad \mbox{(by Condition \ref{lemma: Lagrangian multiplier})}\\
&\mbox{(b)} \quad \text{Equations }\eqref{equ: Lambda condition};\qquad \mbox{(by Condition \ref{lemma: Lagrangian multiplier})}\\
&\mbox{(c)} \quad \W_H^*\W_{H-1}^*\cdots\W_1^*\X=\textsf{svd}_{d_{\min}}(\tY-\bLambda^*). \quad \mbox{(by the global optimality and Lemma \ref{lemma: local global})}
\end{split}
\end{flalign}
Before proceeding, we denote by $\widetilde\A:=\W_H^*\cdots\W_{\min+1}^*$, $\widetilde\B:=\W_{\min}^*\cdots\W_1^*\X$, where $\W_{\min}^*$ is a matrix among $\{\W_i^*\}_{i=1}^{H-1}$ which has $d_{\min}$ rows, and let
$$\cT := \{\widetilde\A\C_1^T+\C_2\widetilde\B:\ \C_1\in\R^{n\times d_{\min}},\ \C_2\in\R^{d_H\times d_{\min}}\}$$
be a matrix space. Denote by $\cU$ the left singular space of $\widetilde\A\widetilde\B$ and $\cV$ the right singular space. Then the linear space $\cT$ can be equivalently represented as $\cT=\cU+\cV$. Therefore, $\cT^\perp=(\cU+\cV)^\perp=\cU^\perp\cap\cV^\perp$. With this, we note that: (b) $\W_H^*\W_{H-1}^*\cdots\W_1^*\X+\bLambda^*-\tY\in\mathsf{Null}(\widetilde\A^T)=\mathsf{Col}(\widetilde\A)^\perp$ and $\W_H^*\W_{H-1}^*\cdots\W_1^*\X+\bLambda^*-\tY\in\mathsf{Row}(\widetilde\B)^\perp$ (so $\W_H^*\W_{H-1}^*\cdots\W_1^*\X+\bLambda^*-\tY\in\cT^\perp$) imply Equations \eqref{equ: Lambda condition} since either $\W_{i+1}^{*T}\cdots\W_H^{*T}(\W_H^*\W_{H-1}^*\cdots\W_1^*\X+\bLambda^*-\tY)=\0$ or $(\W_H^*\W_{H-1}^*\cdots\W_1^*\X+\bLambda^*-\tY)\X^T\W_1^{*T}\cdots\W_{i-1}^{*T}=\0$ for all $i$'s.
And (c) for an orthogonal decomposition
$\tY-\bLambda^*=\W_H^*\W_{H-1}^*\cdots\W_1^*\X+\E \mbox{ where } \W_H^*\W_{H-1}^*\cdots\W_1^*\X\in\cT \mbox{ and }\E\in\cT^\perp$, we have that $$\|\E\|\le \sigma_{d_{\min}}(\W_H^*\W_{H-1}^*\cdots\W_1^*\X)$$ and condition (b) together imply $\W_H^*\W_{H-1}^*\cdots\W_1^*\X=\textsf{svd}_{d_{\min}}(\tY-\bLambda^*)$ by Lemma \ref{lemma: local global}.
Therefore, the dual conditions in \eqref{equ: dual conditions for general problem} are implied by
\begin{flalign*}
\label{equ: stronger dual conditions}
\begin{split}
&\mbox{(1)}\quad \bLambda^*\in\partial K(\W_H^*\W_{H-1}^*\cdots\W_1^*\X)\cap\mathsf{Row}(\X);\\
&\mbox{(2)}\quad \cP_\cT(\tY-\bLambda^*)=\W_H^*\W_{H-1}^*\cdots\W_1^*\X;\\
&\mbox{(3)}\quad \|\cP_{\cT^\perp}(\tY-\bLambda^*)\|\le \sigma_{d_{\min}}(\W_H^*\W_{H-1}^*\cdots\W_1^*\X).
\end{split}
\end{flalign*}

It thus suffices to construct a dual certificate $\bLambda^*$ such that conditions (1), (2) and (3) hold, because conditions (1), (2) and (3) are stronger than conditions (a), (b) and (c). Let $r=\rank(\tY)$ and $\bar r=\min\{r,d_{\min}\}$. To proceed, we need the following lemma.

\medskip
\begin{lemma}[\cite{udell2016generalized}]
\label{lemma: closed-form solution}
Suppose $\tY\in\mathsf{Row}(\X)$. Let $(\W_1^*,\ldots,\W_H^*)$ be the solution to problem \eqref{equ: nuclear norm form} and let $\U\diag(\sigma_1(\tY),\ldots,\sigma_{r}(\tY))\V^T$ denote the skinny SVD of $\tY\in\mathsf{Row}(\X)$. We have $\W_H^*\W_{H-1}^*\cdots\W_1^*\X=\U\diag((\sigma_1(\tY)-\gamma)_+,\ldots,(\sigma_{\bar r}(\tY)-\gamma)_+,0,\ldots,0)\V^T$.
\end{lemma}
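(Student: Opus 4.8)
The plan is to collapse problem \eqref{equ: nuclear norm form} into a single rank-constrained but otherwise convex optimization over the product matrix $\Z:=\W_H\cdots\W_1\X$, to drop the row-space constraint using the hypothesis $\tY\in\mathsf{Row}(\X)$, and then to read off the minimizer in closed form via von Neumann's trace inequality.

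\textbf{Step 1 (reduction to the product variable).} Because $d_{\min}=\min\{d_1,\ldots,d_{H-1}\}$ is the bottleneck width, every product satisfies $\rank(\W_H\cdots\W_1\X)\le d_{\min}$ and $\mathsf{Row}(\W_H\cdots\W_1\X)\subseteq\mathsf{Row}(\X)$; conversely, the padding construction already used in the proof of Lemma \ref{lemma: variational form of nuclear norm} (write $\Z=\M\X$ with $\M=\Z\X^\dag$ of rank $\le d_{\min}$, then distribute a rank factorization of $\M$ across the $H$ layers, padded with zero and identity blocks) shows every such $\Z$ is realizable. Hence
\begin{equation*}
\min_{\W_1,\ldots,\W_H}F(\W_1,\ldots,\W_H)=\min_{\rank(\Z)\le d_{\min},\ \mathsf{Row}(\Z)\subseteq\mathsf{Row}(\X)}\tfrac{1}{2}\|\tY-\Z\|_F^2+\gamma\|\Z\|_*,
\end{equation*}
and a minimizer $\Z^*$ of the right-hand side equals $\W_H^*\cdots\W_1^*\X$ (up to the usual non-uniqueness of a truncated SVD).

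\textbf{Step 2 (the row-space constraint is inactive).} Since $\tY\in\mathsf{Row}(\X)$, replacing any feasible $\Z$ by $\Z\X^\dag\X=\cP_{\mathsf{Row}(\X)}(\Z)$ cannot increase the rank, cannot increase $\|\tY-\Z\|_F^2$ (the removed component $\Z(\I-\X^\dag\X)$ is row-orthogonal to $\tY-\Z\X^\dag\X$, a Pythagorean split), and cannot increase $\|\Z\|_*$ (right multiplication by an orthogonal projector does not increase any singular value). So it suffices to solve $\min_{\rank(\Z)\le d_{\min}}\tfrac{1}{2}\|\tY-\Z\|_F^2+\gamma\|\Z\|_*$.

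\textbf{Step 3 (closed form).} Expand the objective as $\tfrac12\|\tY\|_F^2-\langle\tY,\Z\rangle+\tfrac12\|\Z\|_F^2+\gamma\|\Z\|_*$. By von Neumann's trace inequality $\langle\tY,\Z\rangle\le\sum_i\sigma_i(\tY)\sigma_i(\Z)$, with equality precisely when $\Z$ and $\tY$ share singular vectors in matched order; combined with $\|\Z\|_F^2=\sum_i\sigma_i(\Z)^2$ and $\|\Z\|_*=\sum_i\sigma_i(\Z)$ this gives
\begin{equation*}
\tfrac{1}{2}\|\tY-\Z\|_F^2+\gamma\|\Z\|_*\ \ge\ \tfrac{1}{2}\|\tY\|_F^2+\sum_i\Big(\tfrac{1}{2}\sigma_i(\Z)^2-\big(\sigma_i(\tY)-\gamma\big)\sigma_i(\Z)\Big),
\end{equation*}
where at most $d_{\min}$ of the $\sigma_i(\Z)$ are nonzero. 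The $i$-th summand is a convex scalar function of $s=\sigma_i(\Z)\ge0$, minimized at $s=(\sigma_i(\tY)-\gamma)_+$ with value $-\tfrac12\big((\sigma_i(\tY)-\gamma)_+\big)^2$, which is most negative for the largest $\sigma_i(\tY)$; so under the budget of $d_{\min}$ nonzero entries the bound is minimized by activating the top $\bar r=\min\{r,d_{\min}\}$ singular directions of $\tY$. Since $\gamma<\sigma_{\min}(\tY)$ forces $(\sigma_i(\tY)-\gamma)_+=\sigma_i(\tY)-\gamma>0$ for every $i\le r$ (hence for every $i\le\bar r$), the bound is attained by
\begin{equation*}
\Z^*=\U\diag\big((\sigma_1(\tY)-\gamma)_+,\ldots,(\sigma_{\bar r}(\tY)-\gamma)_+,0,\ldots,0\big)\V^T,
\end{equation*}
which has rank $\bar r\le d_{\min}$ and shares the singular vectors of $\tY$, so every inequality above is tight. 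This is exactly the claimed identity for $\W_H^*\cdots\W_1^*\X$.

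\textbf{Main obstacle.} The delicate point is the combinatorial minimization in Step 3: one must argue that, under the hard rank budget $d_{\min}$, it is optimal to spend the nonzero singular values on the \emph{largest} $\sigma_i(\tY)$ and that von Neumann's bound can be saturated simultaneously for that choice; monotonicity of $s\mapsto-\tfrac12((s-\gamma)_+)^2$ together with the shared-singular-vector equality condition settles this, and the constraint $\gamma<\sigma_{\min}(\tY)$ is what makes the formula collapse to the stated soft-thresholded, rank-$\bar r$ truncation. Step 1's realizability claim is routine given the construction of Lemma \ref{lemma: variational form of nuclear norm}, and Step 2 is an elementary projection argument.
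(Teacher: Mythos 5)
Your proof is correct. Note, however, that the paper does not prove this lemma at all: it is imported by citation from Udell et al.\ \cite{udell2016generalized}, so there is no in-paper argument to compare against. What you have supplied is a self-contained derivation, and it is a sound one: Step 1's reduction to $\min_{\rank(\Z)\le d_{\min},\,\mathsf{Row}(\Z)\subseteq\mathsf{Row}(\X)}\tfrac12\|\tY-\Z\|_F^2+\gamma\|\Z\|_*$ is exactly the realizability argument implicit in Lemma \ref{lemma: variational form of nuclear norm} (with $\rank(\Z\X^\dag)=\rank(\Z)\le d_{\min}$ once rows lie in $\mathsf{Row}(\X)$); Step 2's projection argument correctly kills the row-space constraint using $\tY\in\mathsf{Row}(\X)$; and Step 3 is the standard von Neumann computation for rank-constrained soft thresholding. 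One phrasing in Step 3 is slightly loose: von Neumann pairs the singular values of $\Z$ and $\tY$ in a fixed decreasing order, so you are not free to ``choose which directions to activate'' --- rather, after the inequality the lower bound separates as $\sum_i\bigl(\tfrac12 s_i^2-(\sigma_i(\tY)-\gamma)s_i\bigr)$ over $s_1\ge\cdots\ge s_{d_{\min}}\ge0=s_{d_{\min}+1}=\cdots$, and the per-index unconstrained minimizers $s_i=(\sigma_i(\tY)-\gamma)_+$ happen to be non-increasing, so they satisfy the ordering and the rank budget automatically. This is exactly what your argument needs and what you implicitly use, so the gap is cosmetic. Your derivation has the added benefit of making explicit where $\bar r=\min\{r,d_{\min}\}$ comes from and that the $(\cdot)_+$ truncation is superfluous under the standing assumption $\gamma<\sigma_{\min}(\tY)$, neither of which is visible when the result is merely cited.
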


Recall that the sub-differential of the nuclear norm of a matrix $\Z$ is
\begin{equation*}
\partial_{\Z}\|\Z\|_*=\{\U_\Z\V_\Z^T+\T_\Z:\T_\Z\in\cT^\perp,\|\T_\Z\|\le 1\},
\end{equation*}
where $\U_\Z\bSigma_\Z\V_\Z^T$ is the skinny SVD of the matrix $\Z$.
So with Lemma \ref{lemma: closed-form solution}, the sub-differential of (scaled) nuclear norm at optimizer $\W_H^*\W_{H-1}^*\cdots\W_1^*\X$ is given by
\begin{equation}
\label{equ: subgradient}
\partial (\gamma\|\W_H^*\W_{H-1}^*\cdots\W_1^*\X\|_*)=\{\gamma\U_{:,1:{\bar r}}\V_{:,1:{\bar r}}^T+\T:\T\in\cT^\perp,\|\T\|\le \gamma\}.
\end{equation}
To construct the dual certificate, we set
\begin{equation*}
\bLambda^*=\underbrace{\gamma\U_{:,1:{\bar r}}\V_{:,1:{\bar r}}^T}_{\text{Component in space }\cT}+\underbrace{\U_{:,({\bar r}+1):r}\diag(\gamma,\ldots,\gamma)\V_{:,({\bar r}+1):r}^T}_{\text{Component $\T$ in space } \cT^\perp\text{ with }\|\T\|\le\gamma}\in\mathsf{Row}(\X),
\end{equation*}
where $\bLambda^*\in\mathsf{Row}(\X)$ because $\V^T\in\mathsf{Row}(\X)$ (This is because $\V^T$ is the right singular matrix of $\widetilde\Y$ and $\tY\in\mathsf{Row}(\X)$). So condition (1) is satisfied according to \eqref{equ: subgradient}. To see condition (2), $\cP_\cT(\tY-\bLambda^*)=\cP_\cT\tY-\gamma\U_{:,1:{\bar r}}\V_{:,1:{\bar r}}^T=\U\diag((\sigma_1(\tY)-\gamma)_+,\ldots,(\sigma_{{\bar r}}(\tY)-\gamma)_+,0,0,\ldots,0)\V^T=\W_H^*\W_{H-1}^*\ldots\W_1^*\X$, where the last equality is by Lemma \ref{lemma: closed-form solution} and the assumption $\sigma_{\min}(\tY)>\gamma$. As for condition (3), note that
\begin{equation*}
\begin{split}
\left\|\cP_{\cT^\perp}(\tY-\bLambda^*)\right\|&=\left\|\U_{:,({\bar r}+1):r}\diag(\sigma_{{\bar r}+1}(\tY)-\gamma,\ldots,\sigma_{r}(\tY)-\gamma)\V_{:,({\bar r}+1):r}^T\right\|\\
&=
\begin{cases}
0, & \text{if $\bar r=r$},\\
\sigma_{d_{\min}+1}(\tY)-\gamma, & \text{otherwise}.
\end{cases}
\end{split}
\end{equation*}
By Lemma \ref{lemma: closed-form solution}, $\sigma_{d_{\min}}(\W_H^*\W_{H-1}^*\cdots\W_1^*\X)\ge \|\cP_{\cT^\perp}(\tY-\bLambda^*)\|$. So the proof of strong duality is completed, where the dual problem is given in Section \ref{section: Dual and Bi-Dual Problems}.

To see the relation between the solutions of primal and dual problems, it is a direct result of Lemmas \ref{lemma: variational form of nuclear norm} and \ref{lemma: f and k local minimum}.

\section{Dual Problem of Deep Linear Neural Network}
\label{section: Dual and Bi-Dual Problems}
In this section, we derive the dual problem of non-convex program \eqref{equ: regularized PCA}. Denote by $G(\W_1,\ldots,\W_H)$ the objective function of problem \eqref{equ: regularized PCA}. Let $K(\cdot)=\gamma\|\cdot\|_*$, and let $\tY=\Y\X^\dag\X$ be the projection of $\Y$ on the row span of $\X$. We note that
\begin{equation*}
\begin{split}
&\quad \min_{\W_1,\ldots,\W_H}G(\W_1,\ldots,\W_H) - \frac{1}{2}\|\Y-\tY\|_F^2\\
&=\min_{\W_1,\ldots,\W_H}\frac{1}{2}\|\Y-\W_H\cdots\W_1\X\|_F^2 - \frac{1}{2}\|\Y-\tY\|_F^2+K(\W_H\cdots\W_1\X)\\
&= \min_{\W_1,\ldots,\W_H}\frac{1}{2}\|\tY-\W_H\cdots\W_1\X\|_F^2+K^{**}(\W_H\cdots\W_1\X)\\
&= \min_{\W_1,\ldots,\W_H}\max_{\mathsf{Row}(\bLambda)\subseteq\mathsf{Row}(\X)}\frac{1}{2}\|\tY-\W_H\cdots\W_1\X\|_F^2+\langle\bLambda,\W_H\cdots\W_1\X\rangle-K^*(\bLambda)\\
&= \min_{\W_1,\cdots,\W_H}\max_{\mathsf{Row}(\bLambda)\subseteq\mathsf{Row}(\X)} \frac{1}{2}\|\tY-\bLambda-\W_H\cdots\W_1\X\|_F^2-\frac{1}{2}\|\bLambda\|_F^2-K^*(\bLambda)+\langle\tY,\bLambda\rangle,
\end{split}
\end{equation*}
where the second equality holds since $K(\cdot)$ is closed and convex w.r.t. the argument $\W_H\W_{H-1}\cdots\W_1\X$ and the third equality is by the definition of conjugate function of nuclear norm.
Therefore, the dual problem is given by
\begin{equation*}
\begin{split}
&\max_{\mathsf{Row}(\bLambda)\subseteq\mathsf{Row}(\X)} \min_{\W_1,\ldots,\W_H} \frac{1}{2}\|\tY\hspace{-0.1cm}-\hspace{-0.1cm}\bLambda\hspace{-0.1cm}-\hspace{-0.1cm}\W_H...\W_1\X\|_F^2\hspace{-0.05cm}-\hspace{-0.05cm}\frac{1}{2}\|\bLambda\|_F^2\hspace{-0.05cm}-\hspace{-0.05cm}K^*(\bLambda)+\langle\tY,\bLambda\rangle+ \frac{1}{2}\|\Y-\tY\|_F^2\\
&=\max_{\mathsf{Row}(\bLambda)\subseteq\mathsf{Row}(\X)} \frac{1}{2}\sum_{i=d_{\min}+1}^{\min\{d_H,n\}}\sigma_i^2(\tY-\bLambda)-\frac{1}{2}\|\tY-\bLambda\|_F^2-K^*(\bLambda)+\frac{1}{2}\|\Y\|_F^2\\
&=\max_{\mathsf{Row}(\bLambda)\subseteq\mathsf{Row}(\X)} -\frac{1}{2}\|\tY-\bLambda\|_{d_{\min}}^2-K^*(\bLambda)+\frac{1}{2}\|\Y\|_F^2,
\end{split}
\end{equation*}
where $\|\cdot\|_{d_{\min}}^2=\sum_{i=1}^{d_{\min}}\sigma_i^2(\cdot)$. We note that
\begin{equation*}
K^*(\bLambda)=
\begin{cases}
0, & \|\bLambda\|\le \gamma;\\
+\infty, & \|\bLambda\|> \gamma.
\end{cases}
\end{equation*}
So the dual problem is given by
\begin{equation}
\label{equ: app dual problem}
\max_{\mathsf{Row}(\bLambda)\subseteq\mathsf{Row}(\X)} -\frac{1}{2}\|\tY-\bLambda\|_{d_{\min}}^2+\frac{1}{2}\|\Y\|_F^2,\quad\textup{s.t.}\quad \|\bLambda\|\le\gamma.
\end{equation}

Problem \eqref{equ: app dual problem} can be solved efficiently due to their convexity. In particular, Grussler et al.~\cite{grussler2016low} provided a computationally efficient algorithm to compute the proximal operators of functions $\displaystyle\frac{1}{2}\|\cdot\|_r^2$. Hence, the Douglas-Rachford algorithm can find the global minimum up to an $\epsilon$ error in function value in time $\textsf{poly}(1/\epsilon)$~\cite{he20121}.

\comment{
\newpage
\section{Quasi-Convexity}

Given $\epsilon>0$, we will show that
\begin{equation}
\widehat{\textsf{err}}_m(\w_1,...,\w_n)=\frac{1}{m}\sum_{i=1}^m \left(y_i-\sum_{k=1}^n\phi(\langle\w_k,\x_i)\rangle\right)^2
\end{equation}
is $(\epsilon,e^W,\{\w_1^*,...,\w_n^*\})$-SLQC at every $\W\in\mathbb{B}(\0,W)$, where $\W:=\{\w_1,...,\w_n\}$. Recall that $\phi(z)=\frac{1}{1+e^{-z}}$ and $\phi'(z)=\frac{e^{-z}}{(1+e^{-z})^2}$, and consider $\|\W\|\le W$ such that
\begin{equation}
\widehat{\textsf{err}}_m(\W)=\frac{1}{m}\sum_{i=1}^m \left(y_i-\sum_{k=1}^n\phi(\langle\w_k,\x_i)\rangle\right)^2\ge \epsilon.
\end{equation}
Denote by $\V$ a point which is $\epsilon/e^W$ close to the minimizer $\W^*$. We therefore have
\begin{equation}
\begin{split}
&\langle\nabla \widehat{\textsf{err}}_m(\W),\W-\V\rangle=\sum_{k=1}^n\langle\partial_{\w_k} \widehat{\textsf{err}}_m(\W),\w_k-\v_k\rangle\\
&=\sum_{k=1}^n\frac{2}{m}\sum_{i=1}^m\left(\sum_{k=1}^n\phi(\langle\w_k,\x_i)\rangle-y_i\right)\phi'(\langle\w_k,\x_i\rangle)(\langle\w_k,\x_i\rangle-\langle\v_k,\x_i\rangle)\\
&=\sum_{k=1}^n\frac{2}{m}\sum_{i=1}^m\frac{e^{-\langle\w_k,\x_i\rangle}}{(1+e^{-\langle\w_k,\x_i\rangle})^2}\left(\sum_{k=1}^n\phi(\langle\w_k,\x_i\rangle)-y_i\right)(\langle\w_k,\x_i\rangle-\langle\v_k,\x_i\rangle)\\
&=\sum_{k=1}^n\frac{2}{m}\sum_{i=1}^m\frac{e^{-\langle\w_k,\x_i\rangle}}{(1+e^{-\langle\w_k,\x_i\rangle})^2}\left(\sum_{k=1}^n\phi(\langle\w_k,\x_i\rangle)-\sum_{k=1}^n\phi(\langle\w_k^*,\x_i\rangle)\right)(\langle\w_k,\x_i\rangle-\langle\w_k^*,\x_i\rangle+\langle\w_k^*-\v_k,\x_i\rangle)\\
&=\sum_{k=1}^n\frac{2}{m}\sum_{i=1}^m\frac{e^{-\langle\w_k,\x_i\rangle}}{(1+e^{-\langle\w_k,\x_i\rangle})^2}\left(\sum_{k=1}^n\phi(\langle\w_k,\x_i\rangle)-\sum_{k=1}^n\phi(\langle\w_k^*,\x_i\rangle)\right)(\langle\w_k,\x_i\rangle-\langle\w_k^*,\x_i\rangle)\\
&\quad+\sum_{k=1}^n\frac{2}{m}\sum_{i=1}^m\frac{e^{-\langle\w_k,\x_i\rangle}}{(1+e^{-\langle\w_k,\x_i\rangle})^2}\left(\sum_{k=1}^n\phi(\langle\w_k,\x_i\rangle)-\sum_{k=1}^n\phi(\langle\w_k^*,\x_i\rangle)\right)\langle\w_k^*-\v_k,\x_i\rangle\\
\end{split}
\end{equation}
}

\end{document}